\title{On the Locality of the Natural Gradient for 
Deep Learning}  
\author{Nihat Ay${}^{1,2,3}$}
\begin{document}

\maketitle

\begin{center}
${}^{1}$Max Planck Institute for Mathematics in the Sciences, Leipzig, Germany \\
${}^{2}$Leipzig University, Leipzig, Germany \\
${}^{3}$Santa Fe Institute, Santa Fe, NM, USA \\
\end{center}

\begin{center}
Email: nay@mis.mpg.de
\end{center} 
\medskip

\begin{abstract} 
We study the natural gradient method for learning in deep Bayesian networks, including neural networks. 
There are two natural geometries associated with such learning systems consisting of visible and hidden units. One geometry 
is related to the full system, the other one to the visible sub-system. These two geometries imply different natural gradients. 
In a first step, we demonstrate a great simplification of the natural gradient with respect to the first geometry, due to locality properties 
of the Fisher information matrix. This simplification 
does not directly translate to a corresponding simplification with respect to the second geometry. We develop the theory for studying 
the relation between the two versions of the natural gradient and outline a method for the simplification of the natural gradient 
with respect to the second geometry based on the first one. This method suggests to incorporate a 
recognition model as an auxiliary model for the efficient application of the natural gradient method in deep networks. \\
 
{\bf \em Keywords:\/} Natural gradient, Fisher-Rao metric, deep learning, Helmholtz machines, wake-sleep algorithm.  

{
\small
\tableofcontents
}
\end{abstract}

\section{Introduction}
\subsection{The natural gradient method}
Within the last decade, 
deep artificial neural networks have led to unexpected successes of machine learning in a large number 
of applications \citep{GBC16}. 
One important direction of research within the field 
of deep learning is based on the natural gradient method from information geometry \citep{AN00, Am16, AJLS17}. 
It has been proposed by \cite{Am98} as 
a gradient method that is invariant with respect to coordinate transformations. This method turns out to be extremely efficient within various fields 
of artificial intelligence and machine learning, including neural networks \citep{Am98}, reinforcement learning \citep{Ka02, AMR12}, and robotics \citep{PVS05}. 
It is known to overcome several problems of traditional gradient methods. Most importantly, the natural gradient method 
avoids the so-called plateau problem, and it is less sensitive to singularities in the parametrisation 
(for a detailed discussion, see Section 12.2 of \citep{Am16}; the subject of singularities is treated by \cite{Wa09}). 
On the other hand, there are significant challenges and limitations concerning the applicability of the natural gradient method \citep{Ma15}.
Without further assumptions this method becomes intractable in the context of deep neural networks which have many parameters. 
Various approximate methods have been proposed and studied as alternatives to the original method \citep{Oll15, Ku94, MG15}. 
In this article, we highlight information-geometric structures of deep Bayesian and, in particular, neural networks that allow for a simplification 
of the natural gradient. The guiding scheme of this simplification is {\em locality\/} with respect to the underlying network structure \citep{Ay02}. 
There are several aspects of learning that can be addressed from this perspective: 
\begin{enumerate}
\item {\em Objective function:\/} Typically, learning is based on the optimisation of some global objective function related to the overall performance of the network, 
which, in the most general context, is evaluated in some behaviour space. On the other hand, if we assume that 
individual units can only evaluate information accessible from their local neighbourhood, then we are naturally led to 
the following problem. Is it possible to decompose the objective function into local objective functions? 
\item {\em Learning I:\/} Assuming that learning is based on the gradient of a global objective function, 
does the above-mentioned decomposition into local functions imply a corresponding 
locality of the gradient with respect to the parametrisation? 
In that case, the individual units would adjust their parameter values, such as the synaptic connection strengths in the case of neural networks, based on local information. This is a typical implicit assumption within the field of neural networks, most prominently realsied in terms of 
Hebbian learning.   
\item {\em Learning II:\/} When computing the {\em natural\/} gradient of an objective function, we have to evaluate (the inverse of) the Fisher information matrix. 
Even if locality of learning is guaranteed for the Euclidean gradient, this matrix  
might reintroduce non-locality so that the natural gradient cannot be realised by the network in a local way. 
Therefore, we will address the following question. To what extent is the 
Fisher information matrix local? One instance of this property is that those entries of the matrix that correspond to non-local pairs of units vanish. 
This implies a block structure of the Fisher information matrix which simplifies its inversion \citep{Ay02, SN17}.      
\end{enumerate}     
\medskip

We are now going to introduce the required formalism and outline the problem setting in more detail. 

\subsection{Preliminaries and the main problem}  \label{xysetting}
We first introduce the notation used in this article. Let $\mathsf{S}$
be a non-empty finite set. We denote the canonical basis of the vector space ${\Bbb R}^{\mathsf{S}}$ by 
$e_s$, $s \in \mathsf{S}$. The corresponding dual vectors ${\delta}^s \in \left( {\Bbb R}^{\mathsf{S}} \right)^\ast$, $s \in \mathsf{S}$, defined by 
\[
    \delta^s (e_{s'}) := 
    \left\{
       \begin{array}{c@{,\quad}l}
          1 & \mbox{if $s = s'$} \\
          0 &\mbox{otherwise}
       \end{array}
    \right. ,
\]
can be identified with the Dirac measures on $\mathsf{S}$. Each linear form $l \in \left( {\Bbb R}^{\mathsf{S}} \right)^\ast$ 
can be written as $\sum_s l(s) \, \delta^s$, where 
$l(s) := l(e_s)$. We denote the open simplex of strictly positive probability vectors on $\mathsf{S}$ by
\[
  {\mathcal P} (\mathsf{S}) \, := \, \left\{ p = \sum_s p(s) \, \delta^s \; : \; \mbox{$p(s) > 0$ for all $s$, and $\displaystyle \sum_s p(s) = 1$} \right\}.  
\]
For each point $p \in {\mathcal P}({\mathsf{S}})$, the tangent space in $p$  
can be naturally identified with   
\[
     {\mathcal T}(\mathsf{S}) \, :=  \, 
     \left\{ V = \sum_s V(s) \, \delta^s \; : \; \sum_{s} V(s) = 0 \right\}.
\]
The Fisher-Rao metric on ${\mathcal P}(\mathsf{S})$ in $p = \sum_s p(s) \, \delta^s$ is defined by
\begin{equation} \label{fisherrao1}
     {\langle V, W \rangle}_p := \sum_{s} \frac{1}{p(s)} \, V(s) W(s), \qquad V,W \in \mathcal{T}({\mathsf{S}}). 
\end{equation}
Let us now consider a model ${\mathcal M} \subseteq {\mathcal P}(\mathsf{S})$ which we assume to be a
$d$-dimensional smooth manifold with local coordinates $\xi = (\xi_1,\dots,\xi_d) \mapsto p_\xi$, where $\xi$ is from an open domain $\Xi$ in ${\Bbb R}^d$. 
Below, we will treat more general models, 
but starting with manifolds allows us to outline more clearly the challenges we face in the context of the natural gradient method.    
With $p(s ; \xi) := p_\xi(s)$, we obtain from (\ref{fisherrao1}) the 
Fisher information matrix $G(\xi) = {\left( g_{ij} (\xi) \right)}_{ij}$ defined by 
\begin{equation} \label{fisherinf}
     g_{i j} (\xi) \, := \, \left\langle \partial_{i} p_\xi , \partial_{j} p_\xi \right\rangle_{p_\xi}  \, = \, 
                   \sum_{s} p(s ; \xi) \, \frac{\partial \ln p(s ; \cdot)}{\partial \xi_i} (\xi) \frac{\partial \ln p(s ; \cdot)}{\partial \xi_j} (\xi).  
\end{equation}
\medskip

In this article, the set $\mathsf{S}$ will typically be a Cartesian product of state sets of units, for instance binary neurons.
More precisely, we consider a non-empty and finite set $N$ of units consisting of $n$ visible units $V$ and $m$ hidden units $H$, that is $N = V \uplus H$. 
The state sets of the units are denoted by $\mathsf{X}_i$, $i \in N$, and assumed to be non-empty and finite. For any subset $A \subseteq N$, 
we have the corresponding configuration or state set $\mathsf{X}_A := \times_{i \in A} \mathsf{X}_i$, the set ${\mathcal P}_A := {\mathcal P}(\mathsf{X}_A)$ 
of strictly positive probability vectors on $\mathsf{X}_A$, and the tangent space ${\mathcal T}_A := {\mathcal T}(\mathsf{X}_A)$.     
Consider now the restriction $X_V: \mathsf{X}_V \times \mathsf{X}_H \to \mathsf{X}_V$, $(v , h) \mapsto v$, and its push-forward map 
\[
    \pi_V: \; {\mathcal P}_{V,H} := {\mathcal P}_N \; \to \; {\mathcal P}_V, \qquad p \; \mapsto \; \pi_V (p) \, := \, \sum_{v \in \mathsf{X}_V} p(v) \, \delta^{v},
\]  
where $p (v) := \sum_{h \in \mathsf{X}_H}  p(v, h)$. This is simply the marginalisation map where $\pi_V(p)$ is the $V$-marginal of $p$. Given a model 
${\mathcal M}$ in ${\mathcal P}_{V,H}$, we consider the projected model ${\mathcal M}_V := \pi_V ({\mathcal M})$ in ${\mathcal P}_V$ which will play a 
major role in this article. Before we come to this, let us first observe a number of challenges that appear already at this point.   
\begin{enumerate}
    \item Even if we choose ${\mathcal M}$ to be a smooth manifold, its projection ${\mathcal M}_V$ is typically a much more 
    complicated geometric object with various kinds of singularities. Throughout this article, we will allow for more general models without assuming 
    ${\mathcal M}$ to be a smooth manifold in the first place. However, we will restrict attention to non-singular points only.  
    \item Having a general model ${\mathcal M}$, we also drop the assumption that the 
    parametrisation $\xi = (\xi_1,\dots,\xi_d) \mapsto p_\xi$ is given by a (diffeomorphic) coordinate system. 
    This has consequences on the definition of the Fisher-Rao 
    metric in a non-singular point $p_\xi$:
    \begin{enumerate} 
       \item In order to interpret the Fisher-Rao metric as a Riemannian metric, the derivatives $\frac{\partial}{\partial \xi_i} p_\xi$, $i = 1,\dots,d$, 
       have to span the whole tangent space $T_\xi {\mathcal M}$ in $p_\xi$. (This is often implicitly assumed but not 
       explicitly stated.) Otherwise, the Fisher-Rao metric defined by (\ref{fisherinf}) will 
       not be positive definite. We will refer to parametrisations that satisfy this condition as proper parametrisations.  Note that for a proper parametrisation $\xi \mapsto p_\xi$ of ${\mathcal M}$, the composition $\xi \mapsto \pi_V(p_\xi)$ 
       is not necessarily a proper parametrisation of ${\mathcal M}_V$.           
       \item Another consequence of not having a coordinate system as a parametrisation is the fact that the number $d$ of parameters may exceed the dimension 
       of the model. Even if we assume ${\mathcal M}$ to be a smooth manifold and its parametrisation given by a coordinate system, 
       such that $d$ equals the dimension of ${\mathcal M}$,
       the corresponding projected model ${\mathcal M}_V$ can have a much lower dimension. 
       In that case, we say that the model is overparametrised. Such models play an important role within the field of deep learning. 
       The Fisher-Rao metric for such models is well defined in non-singular points. However the Fisher information matrix (\ref{fisherinf}) will be 
       degenerate so that the representation of a gradient in terms of the parameters is not unique anymore. Below, we will come back to this problem.        
    \end{enumerate}
\end{enumerate}
\medskip

We use the natural gradient method in order to minimise (or maximise) 
a function $f: {\mathcal M}_V \to {\Bbb R}$ 
which is usually obtained as a restriction of a smooth function defined on ${\mathcal P}_V$. 
Therefore, it is natural to use the Fisher-Rao metric on ${\mathcal M}_V$ inherited from ${\mathcal P}_V$. Assuming that 
all required quantities are well defined, we can express this natural gradient in terms of the parametrisation as
\begin{equation} \label{gradinloc}
   {\rm grad}_{\xi} f \; = \; G^{+}(\xi) \nabla_\xi f ,
\end{equation}
where $G^+(\xi)$ is the Moore-Penrose inverse of the Fisher information matrix $G(\xi) = {(g_{ij})}_{ij}$ defined by (\ref{fisherinf}). If the parametrisation 
is given by a coordinate system then this reduces to the ordinary matrix inverse (see the Appendix for more details on the Moore-Penrose inverse).  
The general difficulty that we face with equation (\ref{gradinloc}) is the inversion of the Fisher information matrix, especially in deep networks with 
many parameters. On the other hand, the model ${\mathcal M}_V$ is obtained as the image of the model   
${\mathcal M}$ which can be easier to handle, despite the fact that it ``lives'' in the larger space ${\mathcal P}_{V,H}$.  
Instead of optimising the function $f$ on ${\mathcal M}_V$ we can try to optimise the pull-back of $f$, $\widetilde{f} := f \circ \pi_V$, 
defined on ${\mathcal M}$. But this creates a conceptual problem related to the very nature of the natural gradient method. 
As ${\mathcal M}$ 
inherits the Fisher-Rao metric from ${\mathcal P}_{V,H}$, we can express the corresponding gradient as 
\begin{equation} \label{gradinloc2}
   {\rm grad}_{\xi} \widetilde{f} \; = \; \widetilde{G}^{+}(\xi) \nabla_\xi \widetilde{f},
\end{equation}
where $\widetilde{G}(\xi)$ denotes the Fisher information matrix in $p_\xi \in {\mathcal M}$. 
This can simplify the problem in various ways. As already outlined, ${\mathcal M}_V$ typically has singularities, even if ${\mathcal M}$ is a smooth manifold. 
In that case, the gradient (\ref{gradinloc2}) is well defined for all $\xi$, whereas the gradient (\ref{gradinloc}) is not. 
A further simplification comes from the fact that ${\mathcal M}$ is typically associated with some network, 
which implies a block structure of the Fisher information matrix on ${\mathcal M}$. In Section \ref{bnn}, 
we will demonstrate this simplification for models that are associated with directed acyclic graphs, where the elements of ${\mathcal M}$
factorise accordingly. With this simplification, the inversion of $\widetilde{G}(\xi)$ can become much easier than the inversion of $G(\xi)$ (when the latter is defined). 
On the other hand, if we consider the model ${\mathcal M}_V$ to be the prime model, 
where the hidden units play the role of auxiliary units, then we {\em have\/} to use the information geometry of ${\mathcal M}_V$ for learning. 
Therefore, it is important to relate the corresponding natural gradients, that is (\ref{gradinloc}) and (\ref{gradinloc2}), to each other. 
This is done in a second step, presented in Section \ref{singularnatgrad}. In particular, we will identify conditions for the equivalence of the two gradients, leading 
to a new interpretation of Chentsov's classical characterisation of the Fisher-Rao metric in terms of its invariance with respect to Markov morphisms 
\citep{Che82}. (A general version of this characterisation is provided by \cite{AJLS17}.)  Based on the comparison of the gradients (\ref{gradinloc}) and (\ref{gradinloc2}), we will analyse how to extend 
locality properties of learning that hold for ${\mathcal M}$ to the model ${\mathcal M}_V$. 
This analysis is closely related to the above-mentioned approximate methods as alternatives to the natural gradient method.  
Of particular relevance in this context is the replacement of the Fisher information matrix by the unitwise Fisher information matrices 
as studied in \citep{Oll15, Ku94}. Note, however, 
that we are not aiming at {\em approximating\/} the natural gradient on ${\mathcal M}_V$ by the unitwise natural gradient. 
In this article, we aim at identifying conditions for their {\em equivalence\/}.     
Furthermore, in order to satisfy these conditions we propose an extension $\widetilde{\mathcal M}$ of ${\mathcal M}$ 
which corresponds to an interesting extension of the underlying network.   
This will lead us to a new interpretation of so-called recognition models, which are used in the context of Helmholtz machines 
and the wake-sleep algorithm \citep{DHNZ95, HDFN95, ND97}. 
Information-geometric works on the wake-sleep algorithm and its close relation to the $em$-algorithm 
are classical \citep{Am95, FA95, IAN98}. More recent contributions to the information geometry of the wake-sleep algorithm are provided by    
\cite{BSFB16} and \cite{VVMA20}. Directions of related research in view of this article are outlined in the conclusions, Section \ref{conclu}. 

\section{Locality of deep learning in Bayesian and neural networks} \label{bnn}
\subsection{Locality of the Euclidean gradient} \label{loceucl}
We now define a sub-manifold of ${\mathcal P}_{V,H}$ in terms of a directed acyclic graph
$G = (N, E)$ where $E$ is the set of directed edges.  
With each node $r$ we associate a local Markov kernel 
$k^r$, that is a map $\mathsf{X}_{pa(r)} \times \mathsf{X}_r \to [0,1]$, $(x_{pa(r)}, x_r) \mapsto k^r(x_r | x_{pa(r)})$, which satisfies 
$\sum_{x_r} k^r(x_r | x_{pa(r)}) = 1$.  
Given such a family of Markov kernels, we define the joint distribution 
\begin{equation} \label{prod} 
     p(x_N) \, = \, \prod_{r \in N} k^r(x_r | x_{pa(r)}).  
\end{equation}
Strictly positive distributions of the product structure (\ref{prod}) form a manifold ${\mathcal M}$. 
A natural subset of ${\mathcal M}$ is given by the product distributions, that is those distributions of the form
\[
     p(x_N) \, = \, \prod_{r \in N} p(x_r).
\]
In order to treat manifolds ${\mathcal M}$ given by a neural network, a so-called {\em neuromanifold\/}, 
we consider more general parametrisations $\xi_{r} = (\xi_{(r;1)}, \dots , \xi_{(r ; d_r)}) \mapsto \kappa^{r}_{\xi_r}$
(we also use the notation $k^r(x_r  | x_{pa(r)} ; \xi_r)$ for $k^r_{\xi_r}(x_r | x_{pa(r)})$).
This defines ${\mathcal M}$ as the image of the map
\begin{equation} \label{parametris} 
    \xi \; \mapsto \; p_\xi(x_V, x_H) \, = \, \prod_{r \in N}  k^r(x_r | x_{pa(r)}; \xi_r ) .
\end{equation}
In order to use matrix notation, we sometimes assume, without loss of generality, $N = \{1,2,\dots, n + m\}$ such that $r \leq s$ whenever $r \in pa(s)$.
\medskip

Now we come to the main objective of learning as studied in this article. 
In many applications, one tries to represent a target probability vector $p^\ast \in {\mathcal P}_V$ on the state set of visible units (or a target 
conditional probability vector). Assume that we have a sub-manifold ${\mathcal M}$ of ${\mathcal P}_{V,H}$, and consider the image  
${\mathcal M}_V := \pi_V ({\mathcal M}) \subseteq {\mathcal P}_V$. Given a target probability vector $p^\ast \in {\mathcal P}_V$, the task is to find the best 
approximation of $p^\ast$ by members of ${\mathcal M}_V$:
\begin{equation} \label{minimisation}
       D(p^\ast \| {\mathcal M}_V ) := \inf_{q \in {\mathcal M}_V} D(p^\ast  \|  q).
\end{equation}
Here, $D(p \|  q )$ denotes the Kullback-Leibler divergence
\[
    D(p \| q) = \sum_{x} p(x) \ln \frac{p(x)}{q(x)}
\]
between $p$ and $q$. With the parametrisation (\ref{parametris}) of the elements of ${\mathcal M}$, we consider the function 
\begin{equation}
     E(\xi) \, := \, D(p^\ast \, \| \, q_V(\xi)) \, = \, \sum_{x_V} p^\ast(x_V) \ln \frac{p(x_V)}{p(x_V ; \xi)} \, = \, 
     \sum_{x_V} p^\ast(x_V) \ln \frac{p^\ast(x_V)}{\sum_{x_H} p(x_V, x_H; \xi)}. \label{oriopi}
\end{equation}
Minimisation of $E$ can be realised in terms of the gradient method. In this section we begin with the Euclidean gradient which is determined by the 
partial derivatives of $E$. It is remarkable that, even though the network can be large, with many hidden units, the resulting derivatives are local in a 
very useful way (see a similar derivation in the context of sigmoid belief networks by \cite{Ne92}):
\begin{eqnarray}
   \frac{\partial E}{\partial  \xi_{(r;i)}} (\xi) 
      & = & \frac{\partial}{\partial \xi_{(r;i)}} \sum_{x_V} p^\ast(x_V) \ln \frac{p^\ast(x_V)}{p(x_V; \xi)}  \nonumber \\
      & = &  - \sum_{x_V} p^\ast(x_V) \frac{\partial \ln p(x_V; \cdot)}{\partial \xi_{(r;i)}} (\xi)  \nonumber \\
      & = & - \sum_{x_V} \frac{p^\ast(x_V)}{p_\xi(x_V)}  \sum_{x_H} \frac{\partial p(x_V, x_H; \cdot)}{\partial \xi_{(r;i)}} (\xi)  \nonumber  \\
      & = & - \sum_{x_V} \frac{p^\ast(x_V)}{p_\xi(x_V)}  \sum_{x_H} \frac{\partial  }{\partial \xi_{(r;i)}} \prod_{s \in N}  k^s(x_s | x_{pa(s)} ; \xi_s)  \nonumber \\
      & = & - \sum_{x_V} \frac{p^\ast(x_V)}{p_\xi(x_V)}  \sum_{x_H} \prod_{s \in N \atop s \not= r}  k^s (x_i | x_{pa(s)} ; \xi_s)  
           \frac{\partial  k^r (x_r | x_{pa(r)} ; \cdot)}{\partial \xi_{(r;i)}} (\xi_r)  \nonumber \\
      & = & - \sum_{x_V} \frac{p^\ast(x_V)}{p_\xi(x_V)}  \sum_{x_H} 
                \prod_{s \in N}  k^s (x_s | x_{pa(s)} ; \xi_s)  \frac{\partial  \ln k^r (x_r | x_{pa(r)} ; \cdot) }{\partial \xi_{(r;i)}} (\xi_r)  \nonumber \\  
      & = & - \sum_{x_V,x_H} \frac{p^\ast(x_V)}{p_\xi(x_V)}  p_\xi(x_V, x_H)
                 \frac{\partial  \ln k^r(x_r | x_{pa(r)} ; \cdot) }{\partial \xi_{(r;i)}} (\xi_r)  \nonumber \\  
      & = & - \sum_{x_V,x_H} p^\ast(x_V) \,  p_\xi(x_H \, | \, x_V) \,
                 \frac{\partial}{\partial \xi_{(r;i)}} \ln k^r(x_r | x_{pa(r)} ; {\xi_r} )  \nonumber  
\end{eqnarray}
With $p^\ast(x_V,x_H ; \xi) := p^\ast(x_V) \, p(x_H \, | \, x_V ; \xi)$, we finally obtain
\begin{eqnarray}                 
   \frac{\partial E}{\partial  \xi_{(r;i)}} (\xi)  & = & - \sum_{x_V,x_H} p^\ast(x_V , x_H ; \xi) \,
                 \frac{\partial}{\partial \xi_{(r;i)}} \ln k^r (x_r | x_{pa(r)} ; {\xi_r})  \nonumber \\   
      & = & - \sum_{x_{pa(r)}} p^\ast(x_{pa(r)} ; \xi ) \sum_{x_r} p^\ast(x_r | x_{pa(r)} ; \xi) \,
                 \frac{\partial}{\partial \xi_{(r;i)}} \ln  k^r (x_r | x_{pa(r)} ; {\xi_r}). \label{derivative}                                       
\end{eqnarray}
We have an expectation value of a function, $\ln  k^r (x_r | x_{pa(r)} ; {\xi_r})$, that is local in two ways: all arguments of this function, 
the states {\em and\/} the parameters, are local with respect to the node $r$. 
However, the distribution $p^\ast_\xi$, used for the evaluation of the expectation value, 
depends on the full set of parameters $\xi$. On the other hand, due to the locality of $\ln  k^r (x_r | x_{pa(r)} ; {\xi_r})$ with respect to the states $x_{pa(r)}$ and $x_r$, this expectation value depends only on the marginal $p^\ast(x_{pa(r)}, x_r)$.
One natural way to approximate (\ref{derivative}) is by sampling from this distribution. 
This is typically difficult, 
compared to the sampling from $p_\xi$ which factorises according to the underlying directed acyclic graph $G$. ``One-shot sampling'' from $p_\xi$ 
is possible by simply using $p_\xi$ as a generative model (recursive application of the local kernels $k^r_{\xi_r}$ according to the underlying directed acyclic graph). As $p^\ast_\xi$  incorporates the target distribution $p^\ast$ on $\mathsf{X}_V$ and does not necessarily factorise according to $G$, 
sampling from it has to run much longer. For completeness, the Gibbs sampling method is outlined in more detail at the end of this section. 

\subsection{The wake-sleep algorithm} \label{wsa}
We now highlight an important alternative to sampling from $p^\ast_\xi$ for the computation of the derivative (\ref{derivative}). This alternative is based on the idea 
that we have, in addition to the generative model ${\mathcal M}$ of distributions $p_\xi$, a so-called {\em recognition model\/} ${\mathcal L}_{H|V}$ 
of conditional distributions $q(x_H | x_V ; {\eta})$ with which we can approximate 
$p(x_H \, | \, x_V ; \xi)$. As a consequence, such a recognition model allows us to approximate (\ref{derivative}) where we replace 
$p^\ast(x_V, x_H ; \xi) = p^\ast(x_V) \, p(x_H | x_V ; \xi)$ by $q^\ast(x_V, x_H ; \eta) :=  p^\ast(x_V) \, q(x_H | x_V ; \eta)$, 
and correspondingly the marginals on $pa(r) \cup \{r\}$. We obtain    
\begin{eqnarray}                 
   \frac{\partial E}{\partial  \xi_{(r;i)}} (\xi)  
      & \approx & - \sum_{x_{pa(r)}} q^\ast(x_{pa(r)} ; \eta) \sum_{x_r} q^\ast(x_r | x_{pa(r)} ; \eta) \,
                 \frac{\partial}{\partial \xi_{(r;i)}} \ln  k^r(x_r | x_{pa(r)} ; {\xi_r}) \label{derivative1}       \label{approxi1} \\
      & = &    - \frac{\partial}{\partial \xi_{(r;i)}} \sum_{x_{pa(r)}} q^\ast(x_{pa(r)} ; \eta ) \sum_{x_r} 
                   q^\ast(x_r | x_{pa(r)} ; \eta )  \ln k^r(x_r | x_{pa(r)} ; {\xi_r}) \label{approxi2} \\              
      & = &    \frac{\partial}{\partial \xi_{(r;i)}} 
                  \sum_{x_{pa(r)}} q^\ast(x_{pa(r)} ; \eta ) \sum_{x_r} q^\ast(x_r | x_{pa(r)} ; \eta ) 
                  \ln \frac{q^\ast(x_r | x_{pa(r)} ; \eta )}{k^r(x_r | x_{pa(r)} ; {\xi_r})} . \label{approxi3}                                
\end{eqnarray}       
For the evaluation of the gradient of $E$ with respect to the $\xi$-parameters we can now sample with the recognition model, instead of the generative model. 
This approximation will be the more accurate the smaller the following relative entropy is: 
\begin{equation} \label{diffgenrec}
     D(\xi \| \eta ) \; := \; \sum_{x_V}  p(x_V ; \xi) \sum_{x_H} p(x_H | x_V ; \xi) \ln \frac{p(x_H | x_V ; \xi )}{q(x_H | x_V ; \eta )} .
\end{equation}
Ideally, we would like the recognition model to be rich enough to represent the conditional distributions of the generative model. 
More precisely, we assume that for all $\xi$, there is an $\eta = \eta(\xi)$ so that $q(x_H | x_V ; \eta ) = p(x_H | x_V ; \xi )$. Furthermore, for (\ref{diffgenrec}) 
to be tractable, we assume that $q(x_H | x_V ; \eta )$ also factorises according to some directed acyclic graph $G'$, so that 
\begin{equation} \label{facrec}
   q(x_H | x_V ; \eta) 
      \; = \; \prod_{r \in H} l^r(x_r | x_{pa'(r)} ; \eta_r) ,
\end{equation}
where ${pa}'(r)$ denotes the parent set of the node $r$ with respect to the graph $G'$. With these assumptions, the expressions 
(\ref{diffgenrec}) simplifies considerably, and we obtain 
\begin{eqnarray}
\frac{\partial D(\xi \| \cdot)}{\partial  \eta_{(r;j)}} (\eta) 
   & = & - \frac{\partial}{\partial  \eta_{(r;j)}}  \sum_{x_{pa'(r)}} p(x_{pa'(r)} ; \xi) \sum_{x_r} p(x_r | x_{pa'(r)} ; \xi) \ln l^r(x_r | x_{pa'(r)} ; \eta_r) \label{approxr1} \\
   & = & \frac{\partial}{\partial  \eta_{(r;j)}}  \sum_{x_{pa'(r)}} p(x_{pa'(r)} ; \xi) \sum_{x_r} p(x_r | x_{pa'(r)} ; \xi) \label{approxr2}
             \ln \frac{p(x_r | x_{pa'(r)} ; \xi)}{l^r(x_r | x_{pa'(r)} ; \eta_r)}.
\end{eqnarray}
Note that, while $p_\xi$ factorises according to $G$ so that the conditional distribution $p(x_r | x_{pa(r)} ; \xi)$ coincides with the kernel 
$k^r( x_r | x_{pa'(r)} ; \xi)$, the conditional distribution $p(x_r | x_{pa'(r)} ; \xi)$ with respect to $G'$ does not have a correspondingly simple structure. 
On the other hand, we can easily sample from $p_\xi$, and thereby also from $p(x_{pa'(r)} ; \xi)$ and $ p(x_r ; \xi)$, using the product structure with respect to $G$.    
\medskip

Let us now come back to the original problem of minimising $E$ with respect to $\xi$ based on the gradient descent method. If the parameter $\eta$ of the 
recognition model is such that $q(x_H | x_V ; \eta) = p(x_H | x_V ; \xi)$  
then the approximation (\ref{approxi1}) is exact, and we can evaluate the partial derivatives $\partial / \partial \xi_{(r ; i)}$ 
by sampling from $q^\ast(x_V, x_H; \eta) = p^\ast(x_V) \, q(x_H | x_V; \eta)$. 
This can then be used for updating the parameter $\xi$, say from 
$\xi$ to $\xi + \Delta \xi$ where $\Delta \xi$ is proportional to the euclidean gradient. As this update is based on sampling from the target distribution 
$p^\ast(x_V)$ and the recognition model $q(x_H | x_V ; \eta)$, it is referred to as the {\em wake phase\/}.   
After this update, we typically have 
$q(x_H | x_V ; \eta) \not= p(x_H | x_V ; \xi + \Delta \xi)$.
In order to use (\ref{approxi1}) for the next update of $\xi$, we therefore have to readjust $\eta$, say from $\eta$ to $\eta + \Delta \eta$, so that we recover the identity 
$q(x_H | x_V ; \eta + \Delta \eta) = p(x_H | x_V ; \xi + \Delta \xi)$. This can be achieved by choosing $\Delta \eta$ to be proportional to the 
euclidean gradient (\ref{approxr1}) with respect to $\eta$. The evaluation of the partial derivatives $\partial / \partial \eta_{(r ; j)}$ requires sampling from the 
generative model $p(x_V, x_H ; \xi)$, with no involvement of the target distribution $p^\ast(x_V)$. This is the reason why the $\eta$-update is referred to as the 
{\em sleep phase\/}. Alternating application of the wake phase and the sleep phase yields the so-called {\em wake-sleep algorithm\/}, which has been introduced and 
studied in the context of neural networks by \cite{DHNZ95, HDFN95, ND97}.
It has been pointed out that this algorithm cannot be interpreted as a gradient decent algorithm of a potential function on both variables $\xi$ and $\eta$. 
On the other hand, here we derived the wake-sleep algorithm as a gradient decent algorithm for the optimisation of the objective function $E$ which only depends on the variable $\xi$. The auxiliary variable $\eta$ is used for the approximation of the gradient of $E$ with respect to $\xi$. In order to have a good approximation of this 
gradient, we have to apply the sleep phase update more often, until convergence of $\eta$. Only then, we can update 
$\xi$ within the next wake phase. With this asymmetry of time-scale for the two phases, the wake-sleep algorithm {\em is\/} a gradient decent 
algorithm for $\xi$, which has been outlined in the context of the $em$-algorithm by \cite{IAN98}. 
\medskip

We have introduced the parameters $\eta$ for sampling and thereby evaluating the derivative (\ref{derivative}). However, there is another remarkable feature of the corresponding extended optimisation problem. While the original optimisation function $E$, defined by (\ref{oriopi}), does not appear to be local in any sense, the  
extended optimisation in terms of a generalised wake-sleep algorithm, which is equivalent to the original problem, is based on a set of local functions associated with the respective units. More precisely, the expressions (\ref{approxi2}) and (\ref{approxr1}) are derivatives of local cross entropies, whereas the expressions 
(\ref{approxi3}) and (\ref{approxr2}) are derivatives of local KL-divergences.  
\medskip

We conclude with the important note that a recognition model which, on the one hand, is rich enough to represent all distributions $p(x_H | x_H ; \xi)$ and, on the other hand, factorises according to (\ref{facrec}) might require a large graph $G'$ and a correspondingly large number of parameters $\eta_{(r; j)}$ which constitute the vector $\eta$. In practice, the recognition model is typically chosen to be of the same dimensionality as the generation model and does not necessarily satisfy the above conditions.     

\subsection{Gibbs sampling} 
By holding the configuration $x_V$ constant, we can sample from $p(x_H | x_V; \xi)$ by randomly selecting a node $s \in H$, and then updating 
the state of that node according to $p( x_s | x_{H \setminus s}, x_V ; \xi)$. After this update we repeat choosing a node and updating its state. 
This will generate, after many repetitions, $p_\xi^\ast$-typical patterns.  
The conditional distribution is simple because, due to the local Markov property, it satisfies 
$p( x_s | x_{H \setminus s}, x_V; \xi) = p( x_s | x_{bl(s)} ; \xi)$, where $bl(s)$ denotes the Markov blanket of $s$ (see Figure \ref{fig:markovblanket}).  
It is defined as 
\[
    bl(s) := pa(s) \cup ch(s) \cup \bigcup_{j \in ch(s)} ( pa(j) \setminus s ).
\]  
Note that, in general, the Markov blanket is larger than the parent set and $p( x_s | x_{bl(s)} ; \xi)$ differs from 
$k^s ( x_s | x_{pa(s)} ; {\xi_s})$. 
More precisely, for $s \in H$, we have 
\begin{eqnarray} \nonumber
   p(x_s | x_{bl(s)} ; \xi) 
        & = & p(x_s | x_V, x_{H \setminus s} ; \xi) \nonumber \\
        & = & \frac{ p(x_V, x_{H \setminus s},x_s ; \xi)}{\sum_{x_s'} p(x_V, x_{H \setminus s},x_s'; \xi)}  \nonumber \\ 
        & = & \frac{ \prod_{i \in N} k^i (x_i | x_{pa(i)} ; {\xi_i})}{\sum_{x_s'} 
                  \prod_{i \in N} k^i (x_i' | x_{pa(i)} ; {\xi_i} )}  \nonumber \\
        & = &  \frac{ k^s(x_s | x_{pa(s)} ; {\xi_s}) \prod_{i \in ch(s)} k^i (x_i | x_{pa(i)} ; {\xi_i})}{
                  \sum_{x_s'} k^s(x_s' | x_{pa(s)} ; {\xi_s}) \prod_{i \in ch(s)} k^i (x_i | x_{pa(i) \setminus s}, x_s' ; {\xi_i})}.    \label{localcomp}                   
\end{eqnarray}
Even though we cannot use the local kernels $k^s ( x_s | x_{pa(s)} ; {\xi_s})$ as a generative model for sampling from $p^\ast$, the kernels 
$p(x_s | x_{bl(s)} ; \xi)$, which are used for Gibbs sampling, are still local in the sense that they only depend on the Markov blanket of $s$ 
(see Figure \ref{fig:markovblanket}).  
\begin{figure}[h]        
  \centering
       \includegraphics[width=6cm]{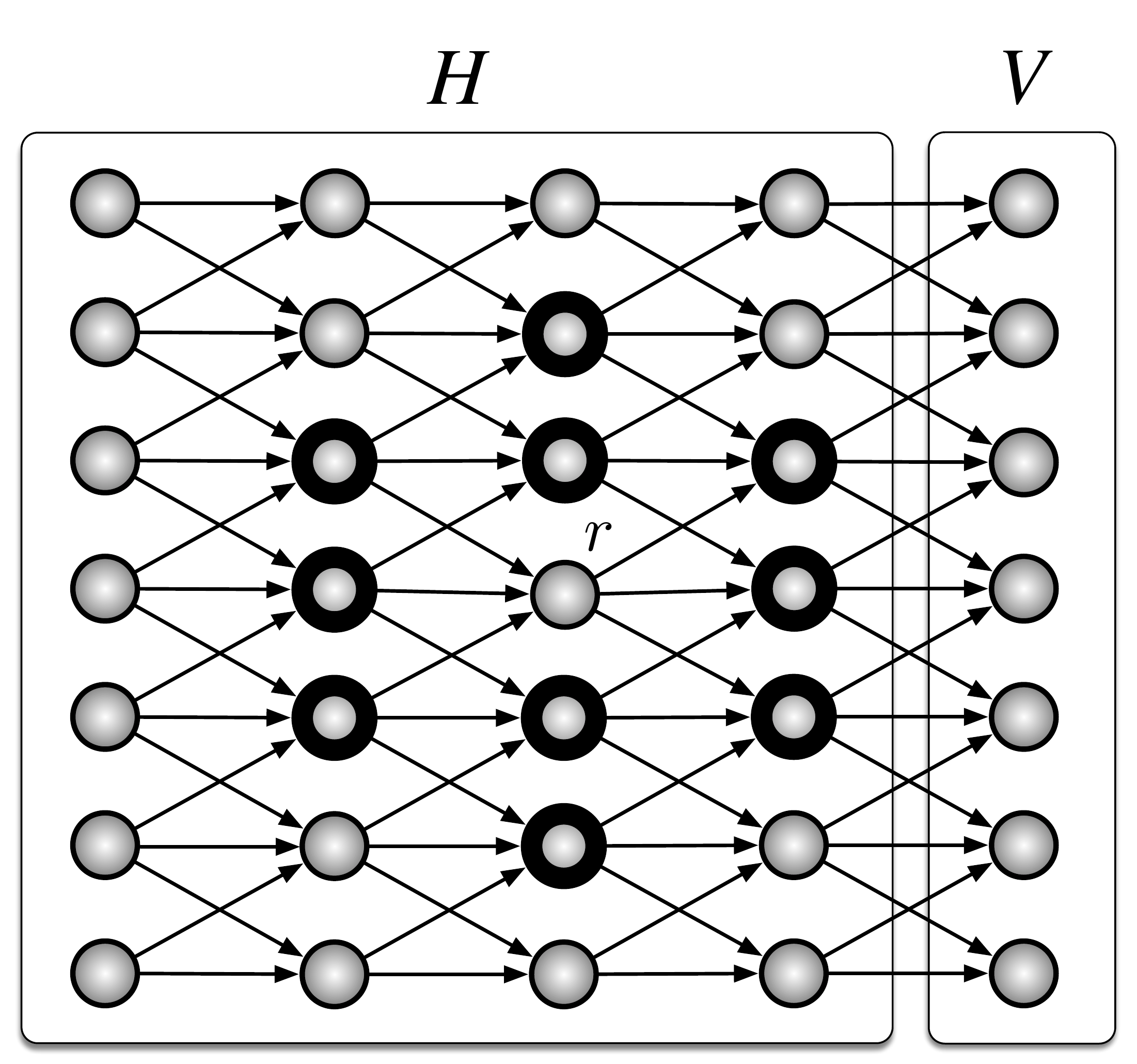}   
  \caption{Illustration of the Markov blanket $bl(r)$ of a node $r$. Its elements are highlighted by thick circles.} 
  \label{fig:markovblanket}
\end{figure} 

We exemplify the derivations of this section in the context of binary neural networks. 

\begin{example}[Neural networks (I)] \label{nn1}
We assume that the units $r \in N$, referred to as {\em neurons\/} in this context, are binary with state sets $\{-1,+1\}$.       
For each neuron $r$, we consider a vector $w_r = (w_{ir})_{i \in pa(r)}$
of {\em synaptic connection strengths\/} and a {\em threshold value\/} $\vartheta_r$. (For a synaptic strength $w_{ir}$, 
$i$ is referred to as the {\em pre-synaptic\/} and $r$ the {\em post-synaptic neuron\/}, respectively.)
We set $\xi_{(r;i)} := w_{ir}$, $i = 1,\dots, d_r - 1$, and $\xi_{(r;d_r)} := \vartheta_r$, that is $\xi_r = (w_r, \vartheta_r)$.   
In order to update its state, the neuron first evaluates the local function 
\[
      h_r(x_{pa(r)}) := \sum_{i \in pa(r)} w_{i r} x_i  - \vartheta_r  
\]
and then generates a state $x_r\in \{-1,+1\}$ with probability
\begin{eqnarray}
       k^r(x_r | x_{pa(r)}; w_r , \vartheta_r)  
            & := &   \frac{1}{1 + e^{- x_r h_r(x_{pa(r)})}}. \label{updateprob1}         
\end{eqnarray}
We calculate the derivatives
\begin{eqnarray}
   \frac{\partial}{\partial w_{i r}}  \ln k^r(x_r | x_{pa(r)}; w_r , \vartheta_r) 
             & = &  \frac{x_i x_r}{1 + e^{x_r h_r(x_{pa(r)})}} , \label{derivativesyn}  \\
  \frac{\partial}{\partial \vartheta_{r}}  \ln k^r(x_r | x_{pa(r)}; w_r , \vartheta_r) 
             & = & - \frac{x_r}{1 + e^{x_r h_r(x_{pa(r)})}} , \label{derivativethr} 
\end{eqnarray}
and, with (\ref{derivative}), we obtain  
\begin{eqnarray}
   \frac{\partial E}{\partial w_{ir}}  (w, \vartheta)
        & = & - \sum_{x_{pa(r)}} \sum_{x_r} p^\ast_\xi(x_{pa(r)} , x_r) \, \frac{x_i x_r}{1 + e^{x_r h_r(x_{pa(r)})}}, \label{hebb} \\
   \frac{\partial E}{\partial \vartheta_r}  (w, \vartheta)
      & = &  \sum_{x_{pa(r)}} \sum_{x_r} p^\ast_\xi(x_{pa(r)} , x_r) \,\frac{x_r}{1 + e^{x_r h_r(x_{pa(r)})}}. \label{single}
\end{eqnarray}
Equation (\ref{hebb}) is one instance of the {\em Hebb rule\/} which is based on the learning paradigm phrased as ``cells that fire together wire together'' \cite{He49}. 
Note, however, that the causal interpretation of the underlying directed acyclic graph ensures that 
the pre-synaptic activity $x_i$ is measured {\em before\/} the post-synaptic activity $x_r$. This causally consistent version of the Hebb rule has been 
experimentally studied in the context of {\em spike-timing-dependent plasticity\/} of real neurons (e.g., \cite{BP01}).      

In order to evaluate the derivatives (\ref{hebb}) and (\ref{single}), we have to sample from $p^\ast_\xi$. We use 
Gibbs sampling based on the expression (\ref{localcomp}) which, for binary state sets $\{-1,+1\}$, reduces to
\begin{eqnarray*}
   p(x_s | x_V, x_{H \setminus s} ; \xi) & = & \frac{1}{1 + 
       \frac{k^s (- x_s | x_{pa(s)} ; {\xi_s} )}{k^s(x_s | x_{pa(s)} ; {\xi_s})}  {\prod}_{i \in ch(s)}
       \frac{k^i(x_i | x_{pa(i)\setminus s},-x_s ; {\xi_i})}{k^i (x_i | x_{pa(i)\setminus s}, x_s ; {\xi_i})}}.
\end{eqnarray*}
Let us analyse the term in the denominator. Using the update rule (\ref{updateprob1}), simple calculations yield 
\begin{eqnarray*}
\lefteqn{\frac{k^s(- x_s | x_{pa(s)} ; {\xi_s})}{k^s (x_s | x_{pa(s)} ; {\xi_s})}  {\prod}_{i \in {ch(s)}}
       \frac{k^i (x_i | x_{pa(i)\setminus s},-x_s ; {\xi_i})}{k^i (x_i | x_{pa(i)\setminus s}, x_s ; {\xi_i})}} \\
       & = & \frac{e^{-\frac{1}{2} x_s h_s(x_{pa(s)})}}{e^{\frac{1}{2} x_s h_s(x_{pa(s)})}} \times \\
       &  & \qquad 
                 \prod_{i \in {ch(s)}} 
                 \frac{e^{\frac{1}{2} x_i \left( h_i(x_{pa(i)}) - 2 w_{si} x_s \right)}}{
                 e^{\frac{1}{2} x_i \left( h_i(x_{pa(i)}) - 2 w_{si} x_s \right)} + e^{-\frac{1}{2} x_i \left(h_i(x_{pa(i)}) - 2 w_{si} x_s \right)}}
                 \frac{e^{\frac{1}{2} x_i h_i(x_{pa(i)})} + e^{-\frac{1}{2} x_i h_i(x_{pa(i)})}}{e^{\frac{1}{2} x_i h_i(x_{pa(i)})}} \\
        & = &  e^{- x_s h_s(x_{pa(s)})}
                 \prod_{i \in {ch(s)}} 
                 \frac{e^{- w_{si} x_s x_i } \left( e^{\frac{1}{2} x_i h_i(x_{pa(i)})} + e^{-\frac{1}{2} x_i h_i(x_{pa(i)})} \right)}{
                 e^{- w_{si} x_s x_i } e^{\frac{1}{2} x_i h_i(x_{pa(i)})} + e^{w_{si} x_s x_i } e^{-\frac{1}{2} x_i h_i(x_{pa(i)})}} \\  
        & = &  e^{- x_s h_s(x_{pa(s)})}
                 \prod_{i \in {ch(s)}} 
                 \frac{e^{\frac{1}{2} x_i h_i(x_{pa(i)})} + e^{-\frac{1}{2} x_i h_i(x_{pa(i)})}}{
                 e^{\frac{1}{2} x_i h_i(x_{pa(i)})} + e^{2 w_{si} x_s x_i } e^{-\frac{1}{2} x_i h_i(x_{pa(i)})}} \\     
       & = &  e^{- x_s h_s(x_{pa(s)})}
                 \underbrace{\prod_{i \in {ch(s)}} 
                 \frac{1 + e^{- x_i h_i(x_{pa(i)})}}{
                 1 + e^{2 w_{si} x_s x_i } e^{- x_i h_i(x_{pa(i)})}}}_{=: g_s(x_{bl(s)})}                                            
\end{eqnarray*}
This finally implies 
\begin{equation}
   p_\xi(x_s | x_{bl(s)}) \, = \, \frac{1}{1 + e^{- x_s h_s(x_{pa(s)})} g_s(x_{bl(s)})}.
\end{equation}
Comparing this with the update probability (\ref{updateprob1}), we observe that the full Markov blanket is involved in terms of the modulation
function $g_s$. 
\end{example}

\subsection{Locality of the natural gradient}  
In the previous section, we have evaluated the partial derivatives (\ref{derivative}), which turn out to be local and allow us to apply the (stochastic) gradient method for learning. However, from the information-geometric point of view, we have to use the Fisher-Rao metric for evaluating the gradient, which leads to the well known  
{\em natural gradient method\/}. In general, the gradient is difficult to evaluate because the Fisher information matrix 
has to be inverted (see equations (\ref{gradinloc}) and (\ref{gradinloc2})). In our context of a model that is associated with a directed acyclic graph $G$, however, 
the Fisher information matrix simplifies considerably.   

\begin{theorem} \label{localform}
For a statistical model ${\mathcal M}$ that is parametrised according to (\ref{parametris}), 
the Fisher information matrix  $G({\xi}) := \left( g_{(r;i)(s;j)}({\xi})\right)_{(r;i)(s;j)}$ decomposes into ``local'' $d_r \times d_r$ matrices 
$G_r({\xi}) := \left( g_{(r;i,j)}({\xi}) \right)_{i,j}$, $r \in N$. More precisely, with
\begin{eqnarray}
    \lefteqn{g_{(r; i , j)}(\xi)} \label{localfisher} \\
     & := & \sum_{x_{pa(r)}} p(x_{pa(r)} ; \xi) \sum_{x_r}  k^r(x_r | x_{pa(r)} ; \xi_r) \,
              \frac{\partial  \ln k^r(x_r | x_{pa(r)} ; \cdot) }{\partial \xi_{(r ; i)}}(\xi_r)  \,
              \frac{\partial  \ln k^r(x_r | x_{pa(r)} ; \cdot) }{\partial \xi_{(r ; j)}}(\xi_r) ,  \nonumber
\end{eqnarray}
the following holds:
\[
    g_{(r;i)(s;j)} (\xi) \; = \; 
    \left\{ 
       \begin{array}{c@{,\quad}l}
           g_{(r; i,j)} (\xi) & \mbox{if $r = s$} \\
           0                          & \mbox{otherwise}
       \end{array}   
    \right..
\]
Using matrix notation, 
we have  
\[
G({\xi}) = 
\left(
\begin{array}{ccc}
G_{1}({\xi}) & & 0 \\
  & \ddots &  \\
0 &        & G_{{m+ n}}({\xi})
\end{array}
\right).
\]
\end{theorem}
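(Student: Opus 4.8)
The plan is to start from the definition (\ref{fisherinf}) of the Fisher information matrix on the full configuration space $\mathsf{X}_N = \mathsf{X}_V \times \mathsf{X}_H$ and to exploit the product structure (\ref{parametris}) to collapse each score function onto a single factor. Writing $\ln p(x_N ; \xi) = \sum_{t \in N} \ln k^t(x_t \mid x_{pa(t)} ; \xi_t)$, only the $r$-th summand depends on the parameter block $\xi_r$, so
\[
   \phi_{(r;i)}(x_N) \; := \; \frac{\partial \ln p(x_N ; \cdot)}{\partial \xi_{(r;i)}}(\xi) \; = \; \frac{\partial \ln k^r(x_r \mid x_{pa(r)} ; \cdot)}{\partial \xi_{(r;i)}}(\xi_r),
\]
which depends on $x_N$ only through the \emph{local} coordinates $x_r$ and $x_{pa(r)}$. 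The matrix entry then reads $g_{(r;i)(s;j)}(\xi) = \sum_{x_N} p(x_N ; \xi)\,\phi_{(r;i)}\,\phi_{(s;j)}$, and the whole theorem reduces to showing that this sum vanishes when $r \neq s$ and equals (\ref{localfisher}) when $r = s$. Since the matrix is symmetric, it suffices to treat $r < s$ in the off-diagonal case.

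Two elementary facts will drive the argument. First, a \emph{centering} property: because $\sum_{x_r} k^r(x_r \mid x_{pa(r)} ; \xi_r) = 1$ identically in $\xi_r$, differentiation under the sum gives $\sum_{x_r} k^r(x_r \mid x_{pa(r)} ; \xi_r)\,\phi_{(r;i)} = \sum_{x_r} \partial k^r / \partial \xi_{(r;i)} = 0$; that is, each local score has vanishing conditional mean given its parents. Second, an \emph{ancestral marginalisation} property: using the ordering $N = \{1,\dots,n+m\}$ with $r \leq s$ whenever $r \in pa(r)$, summing out the variables from the back shows inductively that $p(x_1,\dots,x_s ; \xi) = \prod_{t=1}^{s} k^t(x_t \mid x_{pa(t)} ; \xi_t)$, since each trailing factor sums to one over a variable on which no earlier factor depends. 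Consequently $p(x_s \mid x_1,\dots,x_{s-1} ; \xi) = k^s(x_s \mid x_{pa(s)} ; \xi_s)$: conditioned on all its predecessors, $x_s$ is still governed by its own kernel.

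For the off-diagonal blocks I would take $r < s$. Then every argument of $\phi_{(r;i)}$, namely $x_r$ and $x_{pa(r)} \subseteq \{x_1,\dots,x_{r-1}\}$, lies among $x_1,\dots,x_{s-1}$, so $\phi_{(r;i)}$ does not involve $x_s$; moreover neither score involves any variable of index exceeding $s$. Summing out $x_{s+1},\dots,x_{n+m}$ first and then grouping the $x_s$-sum gives
\[
   \sum_{x_N} p(x_N ; \xi)\,\phi_{(r;i)}\,\phi_{(s;j)}
   \; = \; \sum_{x_1,\dots,x_{s-1}} p(x_1,\dots,x_{s-1} ; \xi)\,\phi_{(r;i)}
   \underbrace{\sum_{x_s} k^s(x_s \mid x_{pa(s)} ; \xi_s)\,\phi_{(s;j)}}_{=\,0},
\]
where the inner sum vanishes by the centering property. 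For the diagonal ($r = s$) both scores depend on $x_N$ only through $(x_r, x_{pa(r)})$, so the expectation reduces to a sum against the marginal $p(x_{pa(r)} ; \xi)\,k^r(x_r \mid x_{pa(r)} ; \xi_r)$, which is precisely the local Fisher matrix (\ref{localfisher}).

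The routine parts are the centering identity and the reduction of the diagonal block; the genuine content is the off-diagonal vanishing, and I expect the main obstacle to lie in justifying the conditioning step cleanly — specifically, in establishing the ancestral marginalisation identity and verifying that, for $r < s$, the prefix $x_1,\dots,x_{s-1}$ simultaneously determines $\phi_{(r;i)}$ and leaves $x_s$ distributed according to $k^s$. This is exactly where acyclicity, through the existence of a topological ordering, is indispensable; the analogous block structure would fail for an undirected or cyclic model, where a single variable does not decouple from its non-neighbours upon conditioning on a prefix.
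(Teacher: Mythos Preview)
Your proposal is correct and follows essentially the same approach as the paper: localise the score via the log-factorisation, sum out the variables with index $> s$ (each trailing kernel integrates to one), and then use the centering identity $\sum_{x_s} \partial k^s / \partial \xi_{(s;j)} = 0$ to kill the off-diagonal blocks while the diagonal blocks collapse to the local expression (\ref{localfisher}). The only blemish is the typo ``$r \in pa(r)$'' where you mean ``$r \in pa(s)$'' in the description of the topological ordering.
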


\begin{proof}
The parametrisation (\ref{parametris}) yields 
\[
\ln  p(x ; \xi) \, = \, \sum_{r \in N} \ln  k^r(x_r | x_{pa(r)} ; \xi_r)
\]
and therefore 
\begin{equation} \label{partdiff}
 \frac{\partial  \ln p(x ; \cdot) }{\partial \xi_{( s ; j )}} (\xi) \, = \, \frac{\partial \ln k^s (x_s | x_{pa(s)} ; \cdot)}{\partial \xi_{( s ; j )}} (\xi_s).
\end{equation}
With (\ref{partdiff}) we obtain for $r \leq s$:
\begin{eqnarray*}
\lefteqn{ g_{(r ; i)(s ; j)}(\xi)} \\ 
      & = & \sum_{x} p(x ; \xi)  \, \frac{\partial  \ln p(x; \cdot) }{\partial \xi_{( r ; i )}} (\xi)  \, \frac{\partial  \ln p(x ; \cdot) }{\partial \xi_{( s ; j )}} (\xi) \\
      & = & \sum_{x} p(x ; \xi)  
           \frac{\partial \ln k^r(x_r | x_{pa(r)}; \cdot)}{\partial \xi_{( r ; i )}}(\xi_r)  \frac{\partial \ln k^s(x_s | x_{pa(s)} ; \cdot)}{\partial \xi_{( s ; j )}}(\xi_s) 
           \qquad\mbox{(by equation (\ref{partdiff}))} \\
      & = & \sum_{x_{< s}} \sum_{x_ s} \sum_{x_{>s}}  
          \left\{\prod_{i < s}  k^i(x_i | x_{pa(i)} ; \xi_i) \right\}   k^s(x_s | x_{pa(s)} ; \xi_s) \left\{\prod_{i > s}  k^i(x_i | x_{pa(i)} ; \xi_i) \right\} \\
      &  &  \qquad \times      
           \frac{\partial \ln k^r(x_r | x_{pa(r)} ; \cdot)}{\partial \xi_{( r ; i )}} (\xi_r)  
           \frac{\partial \ln k^s(x_s | x_{pa(s)} ; \cdot)}{\partial \xi_{( s ; j )}} (\xi_s)   \\
    & = & \sum_{x_{< s}} 
          \left\{\prod_{i < s}  k^i(x_i | x_{pa(i)} ; \xi_i) \right\} \sum_{x_ s}  k^s (x_s | x_{pa(s)} ; \xi_s) 
          \underbrace{ \left\{ \sum_{x_{>s}}  \prod_{i > s}  k^i(x_i | x_{pa(i)} ; \xi_i ) \right\}}_{= 1} \\
      &  &  \qquad \times      
           \frac{\partial \ln k^r (x_r | x_{pa(r)} ; \cdot)}{\partial \xi_{( r ; i )}} (\xi_r) 
           \frac{\partial \ln k^s (x_s | x_{pa(s)} ; \cdot)}{\partial \xi_{( s ; j )}} (\xi_s) \\      
  & = & \sum_{x_{pa(s)}}
        p_\xi(x_{pa(s)})  \sum_{x_ s}  k^s (x_s | x_{pa(s)} ; \xi_s) \,    
           \frac{\partial \ln k^r(x_r | x_{pa(r)} ; \cdot)}{\partial \xi_{( r ; i )}} (\xi_r)  
           \frac{\partial \ln k^s(x_s | x_{pa(s)} ; \cdot)}{\partial \xi_{( s ; j )}} (\xi_s) .
\end{eqnarray*}
If $r \not= s$, this expression reduces to
\begin{eqnarray*}                           
    &   &  \sum_{x_{pa(s)}}
        p_\xi(x_{pa(s)})  
           \frac{\partial \ln k^r(x_r | x_{pa(r)} ; \cdot)}{\partial \xi_{( r ; i )}} (\xi_r) 
            \sum_{x_ s} k^s (x_s | x_{pa(s)} ; \xi_s)  \frac{\partial \ln k^s(x_s | x_{pa(s)} ; \cdot)}{\partial \xi_{( s ; j )}} (\xi_s) \\           
    & = &  \sum_{x_{pa(s)}}
        p_\xi(x_{pa(s)})  
           \frac{\partial \ln k^r (x_r | x_{pa(r)} ; \cdot)}{\partial \xi_{( r ; i )}} (\xi_r) 
            \underbrace{\sum_{x_ s} \frac{\partial k^s (x_s | x_{pa(s)} ; \cdot)}{\partial \xi_{( s ; j )}}(\xi_s)}_{= 0}  \\          
    & = & 0.                   
\end{eqnarray*}
\end{proof}

Theorem \ref{localform} highlights a number of simplifications of the Fisher information matrix as result of the particular parametrisation of the statistical model in terms of a directed acyclic graph. The presented proof is adapted from \cite{Ay02} (see also the related work by \cite{SN17}):  

\begin{enumerate}
\item The Fisher information matrix $G$ has a block structure, reflecting the structure of the underlying graph (see Example \ref{shalldeep}). 
Each block $G_r$ corresponds to a node $r$ and has $d_r \times d_r$ components. Outside these blocks the matrix is filled with zeros. 
The natural gradient method requires the inversion of $G$ (the usual inverse $G^{-1}$, if it exists, or, more generally, the Moore-Penrose inverse $G^+$). 
With the block structure of $G$, this inversion reduces to the inversion of the individual matrices $G_r$. 
The corresponding simplification of the natural gradient is summarised in Corollary \ref{gradcoord}.  

\item The terms $g_{(r; i , j)}(\xi)$, defined by (\ref{localfisher}), are expectation values of the functions 
\[
      C(x_{pa(r)}; \xi_r) \, := \, \sum_{x_r}  k^r(x_r | x_{pa(r)} ; \xi_r) \,
              \frac{\partial  \ln k^r(x_r | x_{pa(r)} ; \cdot) }{\partial \xi_{(r ; i)}}(\xi_r)  \,
              \frac{\partial  \ln k^r(x_r | x_{pa(r)} ; \cdot) }{\partial \xi_{(r ; j)}}(\xi_r) .
\]
These functions are local in two ways. On the one hand, they depend only on local states $x_{pa(r)}$ and, on the other hand, only local parameters $\xi_r$ are involved. This kind of locality is 
very useful in applications of the natural gradient method. Especially in the context of neural networks, locality of learning is considered to be essential. 
Note, however, that the terms $g_{(r; i , j)}(\xi)$ are not completely local. The reason is that the expectation value in (\ref{localfisher}) is taken with respect to $p_\xi$ where $\xi$ is the full parameter vector. 
(As only the distribution of $X_{pa(r)}$ appears, parameters of non-ancestors of $r$ do not play a role in the evaluation of $g_{(r; i , j)}(\xi)$, which simplifies the situation a bit.) In order to evaluate the Fisher information matrix in 
applications, we have to overcome this non-locality by sampling from $p_\xi(x_{pa(r)})$. As we are dealing with directed acyclic graphs, this can be simply done by recursive application of the local kernels $k^r_{\xi_r}$. 
\end{enumerate}

To highlight the relevance of Theorem \ref{localform}, let us consider a few simple examples. 

\begin{example}[Exponential families] \label{exampexpfam}
Consider the statistical model given by local kernels of the exponential form
\begin{equation} \label{expfamstatmod}
    k^r(x_r | x_{pa(r)}; \xi_r) \, = \, \frac{\exp\left( \sum_{i = 1}^{d_r} \xi_{(r;i)} \phi^{(r;i)} (x_{pa(r)}, x_r) \right)}{\sum_{x_r'} \exp\left( \sum_{i = 1}^{d_r} \xi_{(r;i)} \phi^{(r;i)} (x_{pa(r)}, x_r') \right)}.
\end{equation}
In this case, the expression (\ref{localfisher}) yields
\begin{equation} \label{intcov}
   g_{(r;i,j)} \, = \, \sum_{x_{pa(r)}} p( x_{pa(r)} ; \xi) \, {\rm Cov} \Big( \phi^{(r;i)} (x_{pa(r)}, \cdot ) , \phi^{(r; j)} (x_{pa(r)}, \cdot ) \left| x_{pa(r)}; \xi_r \Big)\right. ,
\end{equation}
where the conditional covariance on the RHS of (\ref{intcov}) is evaluated with respect to $k^r( \cdot | x_{pa(r)}; \xi_r)$.
\end{example}

\begin{example}[Neural networks (II)] \label{nn2} 
Neural networks, as introduced in Example \ref{nn1}, 
can be considered as a special case of the statistical models of Example \ref{exampexpfam}. This can be seen by rewriting 
the transition probability (\ref{updateprob1}) as follows: 
\begin{eqnarray*}
       k^r(x_r | x_{pa(r)}; w_r , \vartheta_r)  
       & = & \frac{1}{1 + e^{- x_r h_r(x_{pa(r)})}} \\
       & = &  \frac{e^{\frac{1}{2} x_r h_r(x_{pa(r)})}}{
                  e^{\frac{1}{2} x_r h_r(x_{pa(r)})} +   
                  e^{- \frac{1}{2} x_r h_r(x_{pa(r)})}   }  \\
       & = & \frac{\exp \left( {\frac{1}{2} \sum_{j \in pa(r)} w_{j r} x_j  x_r - \frac{1}{2} \vartheta_r  x_r} \right)}{
                  \sum_{x_r'} \exp \left( {\frac{1}{2} \sum_{j \in pa(r)} w_{j r} x_j  x_r' - \frac{1}{2} \vartheta_r  x_r'} \right) }    
\end{eqnarray*}
This is a special case of (\ref{expfamstatmod}) which only involves pairwise interactions. In order to evaluate the terms (\ref{localfisher}) we need the derivatives
\begin{eqnarray}
   \frac{\partial}{\partial w_{i r}}  \ln k^r(x_r | x_{pa(r)}; w_r , \vartheta_r) 
      & = &  \frac{x_i x_r}{1 + e^{x_r h_r(x_{pa(r)})}} , \label{derivativesyn}  \\
  \frac{\partial}{\partial \vartheta_{r}}  \ln k^r(x_r | x_{pa(r)}; w_r , \vartheta_r) 
      & = & - \frac{x_r}{1 + e^{x_r h_r(x_{pa(r)})}} . \label{derivativethr} 
\end{eqnarray}
According to Theorem \ref{localform}, we can evaluate the Fisher information matrix in a local way. More explicitly, we have
\[
     g_{(r; i , j)}(\xi) \, = \, \sum_{x_{pa(r)}} p_\xi(x_{pa(r)}) \frac{f(x_i, x_j)}{(1+ e^{h_r(x_{pa(r)})})(1+ e^{- h_r(x_{pa(r)})}) } ,
\]   
where 
\[
      f(x_i,x_j) := 
      \left\{
         \begin{array}{c@{,\quad}l}
            x_i x_j & \mbox{if $1 \leq i, j \leq d_r - 1$,} \\
            - x_i     & \mbox{if $1 \leq i \leq d_r - 1$, $j = d_r$,} \\
             - x_j     & \mbox{if $i = d_r$, $1 \leq j \leq d_r - 1$,} \\
             1          & \mbox{if $i = d_r$, $j = d_r$.} \\
         \end{array}
      \right.
\]
\end{example}

\begin{example}[Shallow versus deep networks] \label{shalldeep}
In this example, we demonstrate the difference in sparsity of the Fisher information matrix for architectures of varying depth. 
\begin{figure}[h]        
  \centering
       \includegraphics[width=12cm]{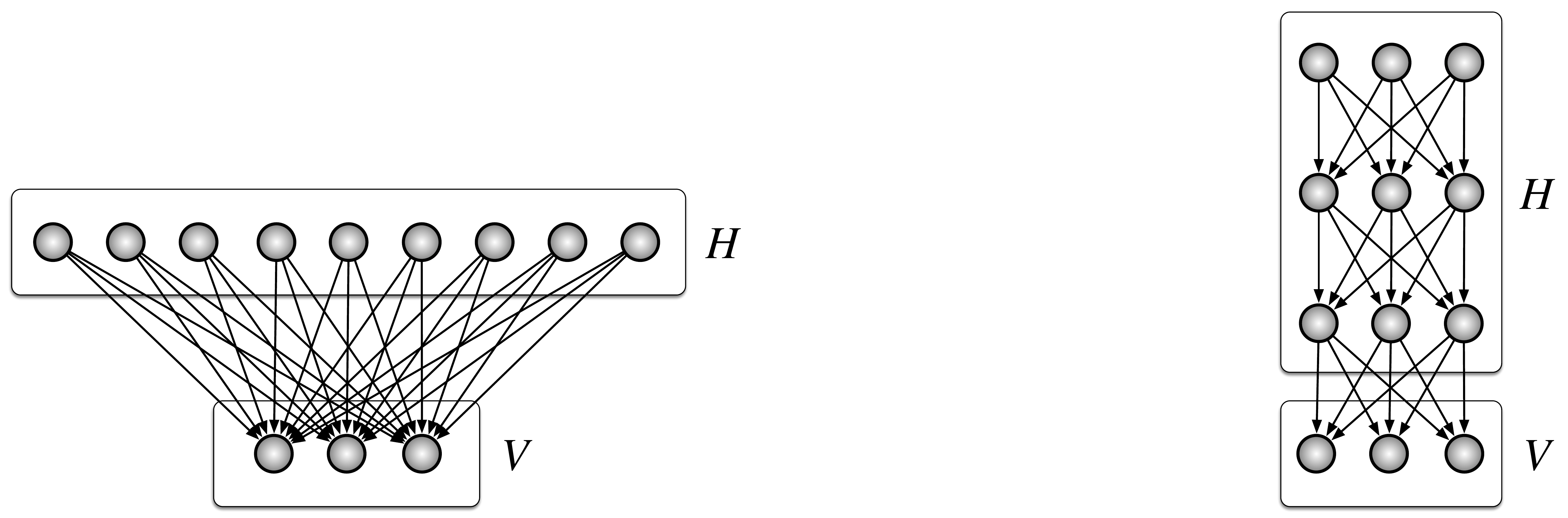}   
  \caption{Two architectures with the same number of parameters but different complexity of the Fisher information matrix.} 
  \label{fig:twodiffarch}
\end{figure}
Figure \ref{fig:twodiffarch} shows two networks with three visible and nine hidden neurons each. The number of synaptic connections is 27 in both cases. 
If we associate one parameter with each edge, the synaptic strength, then we have 27 parameters in the system (for simplicity, we do not consider the threshold values).
Theorem \ref{localform} implies the following structure of the Fisher information matrices in terms of $3 \times 3$ matrices $H_{ij}$ and $G_i$. In the shallow architecture 
we have
\[
     \left(
     \begin{array}{c c c c c c c c c}
     H_{11} &  H_{12} & H_{13} & 0 & 0 & 0 & 0 & 0 & 0 \\
     H_{21} & H_{22} & H_{23} & 0 & 0 & 0 & 0 & 0 & 0 \\
     H_{31} & H_{32} & H_{33} & 0 & 0 & 0 & 0 & 0 & 0 \\
     0      & 0 & 0 & H_{44} & H_{45} & H_{46} & 0 & 0 & 0 \\
     0      & 0 & 0 & H_{54} & H_{55} & H_{56} & 0 & 0 & 0 \\
     0      & 0 & 0 & H_{64} & H_{65} & H_{66} & 0 & 0 & 0 \\
     0      & 0 & 0 & 0 & 0 & 0 & H_{77} & H_{78} & H_{79} \\
      0      & 0 & 0 & 0 & 0 & 0 & H_{87} & H_{88} & H_{89} \\
       0      & 0 & 0 & 0 & 0 & 0 & H_{97} & H_{98} & H_{99} 
       \end{array}
     \right).
\]
The layered architecture implies
\[
     \left(
     \begin{array}{c c c c c c c c c}
     G_1 &      0 & 0 & 0 & 0 & 0 & 0 & 0 & 0 \\
     0      & G_2 & 0 & 0 & 0 & 0 & 0 & 0 & 0 \\
     0      & 0 & G_3 & 0 & 0 & 0 & 0 & 0 & 0 \\
     0      & 0 & 0 & G_4 & 0 & 0 & 0 & 0 & 0 \\
     0      & 0 & 0 & 0 & G_5 & 0 & 0 & 0 & 0 \\
     0      & 0 & 0 & 0 & 0 & G_6 & 0 & 0 & 0 \\
     0      & 0 & 0 & 0 & 0 & 0 & G_7 & 0 & 0 \\
      0      & 0 & 0 & 0 & 0 & 0 & 0 & G_8 & 0 \\
       0      & 0 & 0 & 0 & 0 & 0 & 0 & 0 & G_9 
       \end{array}
     \right).
\]
Out of the $27 \times 27 = 729$ components of the Fisher information matrix, we have $486$ zeros in the shallow case and $648$ zeros in the deep case. 

This example can be generalised to a network with $n$ visible and $m= l \cdot n$ hidden neurons. As in Figure \ref{fig:twodiffarch}, in the one case we arrange all $m$ hidden neurons in one layer of width 
$l \cdot n$ and, in the other case, we arrange the hidden neurons in $l$ layers of width $n$. In both cases, we have $n \cdot m = n (l \cdot n) = l \cdot n^2$ edges, which is the number of parameters, and therefore the Fisher information matrix has $l^2 n^4$ entries. With the shallow architecture, we have at most $n {(l \cdot n)}^2 = l^2 n^3$  non-zero components, whereas in the deep architecture there are at most 
$l \cdot n \cdot n^2 = l \cdot n^3$ non-zero entries. The difference is $l^2 \cdot n^3 - l \cdot n^3 = l \cdot n^3 (l - 1)$. For $n = l = 3$, we recover the above number difference $648 - 486 = 162$.   
\end{example}

\begin{example}[Restricted Boltzmann machine] If we deal with undirected graphical models as models ${\mathcal M}$ on the extended system, then the Fisher information matrix does not necessarily have a block structure as in the case of directed acyclic networks. 
Consider, for instance, a restricted Boltzmann machine, as shown in Figure \ref{fig:Boltzmann1}. With each edge $(i,j) \in V \times H$ we associate a weight $w_{ij}$
and denote the full weight matrix by $W$. The family of all weight matrices parametrises the family
\[
      p(x_V,x_H; W) \, = \, \frac{e^{\sum_{i \in V, j \in H} w_{ij} x_i x_j}}{\sum_{x_V',x_H'} e^{\sum_{i \in V, j \in H} w_{ij} x_i' x_j'}} .
\]  
\begin{figure}[h]        
  \centering
       \includegraphics[width=7cm]{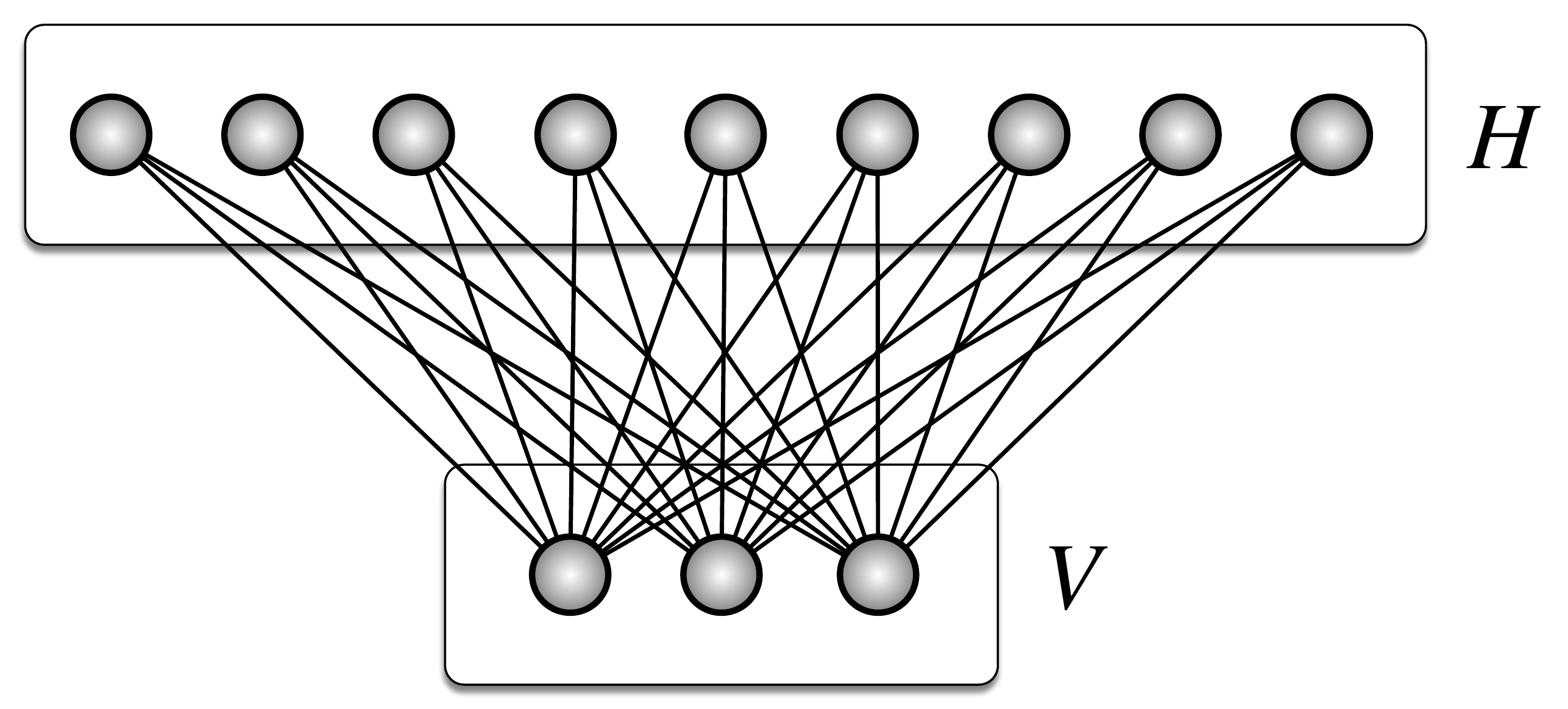}   
  \caption{The architecture of a restricted Boltzmann machine.} 
  \label{fig:Boltzmann1}
\end{figure}
The corresponding marginal model ${\mathcal M}_V$ is called a {\em restricted Boltzmann machine\/}. The Fisher information matrix on ${\mathcal M}$ 
is given by 
\[
     g_{ij,kl} (W) \, = \, {\rm Cov}_{p(\cdot ;W)} \left( X_i X_j , X_k X_l \right)
\]
which has no zeros imposed by the architecture. 
\end{example}
$ $\\

The simplification of the Fisher information matrix, stated in Theorem \ref{localform}, has several important consequences. 
As an immediate consequence we obtain a corresponding simplification of the gradient of a function in terms of 
the parameters 
\[ 
    \xi = (\underbrace{\xi_{(1;1)}, \dots, \xi_{(1;d_1)}}_{ = \xi_1}, \underbrace{\xi_{(2;1)}, \dots, \xi_{(2;d_2)}}_{= \xi_2}, \dots, \underbrace{\xi_{(n+m;1)}, 
    \dots, \xi_{(n+m; d_{n+m})}}_{= \xi_{n + m}}).
\] 
One can interpret these parameters as coordinates, which are then assumed to be in a bijective correspondence with the points of ${\mathcal M}$. 
However, in this article, we explicitly allow for the more general case of an overparametrised model, where the number of parameters can exceed its dimension. This generality 
is particularly important within the context of deep learning. Note that even when the number of parameters coincides with the dimension of 
${\mathcal M}$, we will be dealing with the projected model ${\mathcal M}_V = \pi_V({\mathcal M})$ which is often of lower dimension. Furthermore, the projected model 
can have singular points which do not admit corresponding tangent spaces.    
As a minimal assumption for the study of gradient fields on a general model, 
we will implicitly assume, if not otherwise stated, that the points we are looking at are non-singular. We will revisit the subject of singularities in Section \ref{gensit}.

With the map $\xi \mapsto p_\xi$, 
the tangent space of ${\mathcal M}$ in $p_\xi$ is spanned by the 
vectors $\partial_{(r;i)}(\xi) :=  \frac{\partial}{\partial \xi_{(r;i)}} \, p_\xi$, $r=1,\dots,n+m$, $i = 1,\dots,d_r$. 
We can 
represent the gradient of a smooth function ${E}$ on ${\mathcal M}$, mainly referring to the function (\ref{oriopi}), 
in terms of these tangent vectors:
\begin{equation} \label{localrep}
   {\rm grad}_{\xi} {E} \; = \; \sum_{r = 1}^{n+m} \sum_{i = 1}^{d_r} L_{(r;i)}(\xi) \,  \partial_{(r;i)}(\xi). 
\end{equation}   

\begin{corollary} \label{gradcoord}
Consider the situation of Theorem \ref{localform} and a 
real-valued smooth function $f$ on ${\mathcal M}$. With   
\[
 \nabla_{\xi_r} {E} := 
  \left( 
 \begin{array}{c}
\frac{\partial E}{\partial \xi_{(r;1)}}({\xi}) \\
 \vdots \\
\frac{\partial E}{\partial \xi_{(r;d_r)}}({\xi})
 \end{array}
 \right),
\]
we have the following coordinates of the natural gradient of $E$ in the representation (\ref{localrep}):
\begin{equation}  \label{allgemeineformel2}
 L_r(\xi) \; := \; \left( 
 \begin{array}{c}
   L_{(r;1)}(\xi) \\
 \vdots \\
    L_{(r;d_r)}(\xi)
 \end{array}
 \right) \; = \;  G_r^{+}({\xi}) \, \nabla_{\xi_r} {E} , \qquad r = 1,\dots,n+m.
\end{equation}
Here, $G_r^+(\xi)$ denotes the Moore-Penrose inverse of the matrix $G_r(\xi)$ defined by (\ref{localfisher}). 
(It reduces to the usual matrix inverse whenever $G_r(\xi)$ has maximal rank.)
\end{corollary}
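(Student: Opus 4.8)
The plan is to start from the coordinate form of the natural gradient on ${\mathcal M}$, equation (\ref{gradinloc2}), applied to $E$. Writing ${\rm grad}_\xi E = \sum_{(r;i)} L_{(r;i)}(\xi)\, \partial_{(r;i)}(\xi)$ as in (\ref{localrep}), the defining property of the gradient, $\langle {\rm grad}_\xi E, \partial_{(s;j)}(\xi)\rangle_{p_\xi} = \partial E / \partial \xi_{(s;j)}$ for every index $(s;j)$, is exactly the linear system $G(\xi)\, L(\xi) = \nabla_\xi E$, whose minimal-norm solution is $L(\xi) = G^+(\xi)\, \nabla_\xi E$. (The non-singularity assumption guarantees that $\nabla_\xi E$ lies in the range of $G(\xi)$, so this is a genuine solution and represents the intrinsic gradient.) Hence the whole claim reduces to showing that the block structure of $G(\xi)$ established in Theorem \ref{localform} is inherited by its Moore-Penrose inverse.

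I would first prove the elementary fact that, for a block-diagonal matrix $G = {\rm diag}(G_1, \dots, G_{n+m})$, one has $G^+ = {\rm diag}(G_1^+, \dots, G_{n+m}^+)$. The step is to set $M := {\rm diag}(G_1^+, \dots, G_{n+m}^+)$ and verify the four Penrose conditions $G M G = G$, $M G M = M$, $(G M)^\top = G M$, and $(M G)^\top = M G$. Since products of block-diagonal matrices are formed block by block, each condition reduces to the corresponding identity for the pair $(G_r, G_r^+)$, which holds by the very definition of the pseudoinverse (and each $G_r$ is symmetric, so each $G_r^+$ is as well). By uniqueness of the Moore-Penrose inverse, $M = G^+$.

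With $G^+(\xi) = {\rm diag}(G_1^+(\xi), \dots, G_{n+m}^+(\xi))$ established, I would partition the parameter gradient $\nabla_\xi E$ into the blocks $\nabla_{\xi_r} E$ appearing in the statement. Multiplying the block-diagonal matrix $G^+(\xi)$ by this block-partitioned vector proceeds block by block and yields precisely $L_r(\xi) = G_r^+(\xi)\, \nabla_{\xi_r} E$ for each $r$, which is (\ref{allgemeineformel2}). When $G_r(\xi)$ has maximal rank, $G_r^+(\xi)$ coincides with the ordinary inverse, giving the stated special case.

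The main difficulty is conceptual rather than computational. Because the parametrisation may be overparametrised, the spanning vectors $\partial_{(r;i)}(\xi)$ need not be linearly independent, so the coefficients $L_{(r;i)}$ in (\ref{localrep}) are not uniquely determined and $G(\xi)$ may be singular. The care required is to read (\ref{gradinloc2}) through the Moore-Penrose inverse and to confirm, using the non-singularity of $p_\xi$, that $G^+(\xi)\, \nabla_\xi E$ is a legitimate coordinate representation of the intrinsically defined natural gradient and not merely an artefact of one particular choice of generalised inverse.
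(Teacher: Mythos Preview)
Your proposal is correct and follows essentially the same route as the paper: the corollary is treated there as an immediate consequence of the block structure from Theorem \ref{localform}, with the key fact that the Moore--Penrose inverse of a block-diagonal matrix is block-diagonal being stated (without the Penrose-condition verification you supply) in the Appendix. Your discussion of the overparametrised case and the role of the pseudoinverse also mirrors the paper's Appendix.
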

\medskip

Note that Theorem \ref{localform} as well as its Corollary \ref{gradcoord} can equally be applied to the recognition model 
${\mathcal L}_{H|V}$ defined by (\ref{facrec}). In Section 
\ref{wsa} we have studied natural objective functions that involve both, the generative as well as the recognition model, and highlighted their locality properties. Together with the locality of the corresponding Fisher information matrices, these properties allow us to
evaluate a natural gradient version of the wake-sleep algorithm, referred to as {\em natural wake-sleep\/} by \cite{VVMA20}.  
\medskip

The prime objective function to be optimised is typically defined on the projected model ${\mathcal M}_V$ (see, e.g., the function (\ref{minimisation})). 
It naturally carries the Fisher-Rao metric of ${\mathcal P}_V$
so that we can define the natural gradient of the given objective function 
directly on ${\mathcal M}_V$. On the other hand, we have seen that the Fisher information matrix on the full model 
${\mathcal M} \subseteq {\mathcal P}_{V,H}$ 
has a block structure associated with the underlying network. 
This implies useful locality properties of the natural gradient and thereby makes the method applicable within the context of deep learning.  
The main problem that we are now going to study is the following: Can we extend the locality of the natural gradient on the full model ${\mathcal M}$, 
as stated in Corollary \ref{gradcoord}, to the natural gradient on the projected model ${\mathcal M}_V$? 
In the following section we first study this problem in a more general setting of Riemannian manifolds.

\section{Gradients on full versus coarse grained models}  \label{singularnatgrad} 
\subsection{The general problem} \label{gensit}
We now develop a more general perspective, which we motivate by analogy to the context of the previous sections.  
Assume that we have two 
Riemannian manifolds $({\mathcal Z}, g^{\mathcal Z})$ and 
$({\mathcal X}, g^{\mathcal X})$ 
and a differentiable map $\pi: {\mathcal Z} \to {\mathcal X}$, with its differential $d \pi_p: T_p{\mathcal Z} \to T_{\pi(p)}{\mathcal X}$ 
in $p$. 
The manifold ${\mathcal Z}$ corresponds to the manifold of (strictly positive) distributions on the full set of units, the visible and the hidden units. The map 
$\pi$ plays the role of the marginalisation map which marginalises out the hidden units and which we will interpret in Section \ref{chents}
as one instance of a more general coarse graining procedure.  
Typically, we have a model ${\mathcal M} \subseteq {\mathcal Z}$ which corresponds to 
a model consisting of the joint distributions on the full system that can be represented by the network. It is obtained in terms of a parametrisation 
$\varphi: \Xi \to {\mathcal Z}$, $\xi \mapsto p_\xi$, where $\Xi$ is a differentiable manifold, usually an open subset of ${\Bbb R}^d$. 
In general, ${\mathcal M}$ will not be a sub-manifold of ${\mathcal Z}$ and 
can contain various kinds of singularities (for more details see \cite{Wa09}). 
We restrict attention to the non-singular points of ${\mathcal M}$. A point 
$p$ in ${\mathcal M} \subseteq {\mathcal Z}$ is said to be a
{\em non-singular\/} point of ${\mathcal M}$ if there exists a smooth chart $\psi: U \to V \subseteq {\Bbb R}^n$ for ${\mathcal Z}$ 
such that $p \in U$ and, for some $k$,  
\begin{equation} \label{slice}
   \psi({\mathcal M} \cap U) = \{ (x_1,\dots,x_k, x_{k +1}, \dots , x_{n}) \in V \, : \,  x_{k +1} = \cdots =  x_{n} = 0\}.
\end{equation}
We denote the set of non-singular points of ${\mathcal M}$ by ${\rm Smooth}({\mathcal M})$. 
If a point $p \in {\mathcal M}$ is not non-singular, it is called a {\em singularity\/} or a {\em singular point\/} of ${\mathcal M}$. 
In a non-singular point $p$, the tangent space $T_p {\mathcal M}$ is well defined. Throughout this article, 
we will assume that the parametrisation $\varphi$ of 
${\mathcal M}$ is a {\em proper parametrisation\/} in the sense that
for all $p \in {\rm Smooth}({\mathcal M})$ and all $\xi \in \Xi$ with $\varphi(\xi) = p$, 
the image of the differential ${d \varphi}_\xi$ coincides with the full tangent space $T_{p} {\mathcal M}$. This assumption 
is required, but often not explicitly stated, when dealing with the natural gradient method for optimisation on parametrised models. 
More precisely, 
when we interpret the Fisher information matrix (\ref{fisherinf})
as a ``coordinate representation'' of the Fisher-Rao metric, we implicitly assume that the vectors $\partial_i (\xi) = \frac{\partial}{\partial \xi_i} p_\xi$, $i = 1,\dots, d$, span the tangent space of the model in $p_\xi$. Note that linear independence, which ensures the 
non-degeneracy of the Fisher information matrix, is not required and would in fact be too restrictive given that overparametrised models
play an important role within the field of deep learning. 
\medskip

We now consider a smooth function $f: {\mathcal X} \to {\Bbb R}$ and study its gradient on ${\mathcal X}$ 
(with respect to $g^{\mathcal X}$) in relation to the corresponding gradient of $f \circ \pi: {\mathcal Z} \to {\Bbb R}$ on 
${\rm Smooth}({\mathcal M})$ (with respect to $g^{\mathcal Z}$). For a non-singular point of ${\mathcal M}$, we decompose 
the tangent space $T_p {\mathcal M}$ into a ``vertical component''
$T^{\mathcal V}_p {\mathcal M} := T_p {\mathcal M} \cap \ker {d {\pi}}_p$ and its orthogonal complement 
$T^{\mathcal H}_p {\mathcal M}$ in $T_p {\mathcal M}$, the corresponding ``horizontal component''. 
We have the following proposition where we use the somewhat simpler notation ``$\langle \cdot , \cdot \rangle$'' for both metrics, 
$g^{\mathcal Z}$ and $g^{\mathcal X}$. 

\begin{proposition} \label{general}
Consider a model ${\mathcal M}$ in ${\mathcal Z}$ and a differentiable map $\pi : {\mathcal Z} \to {\mathcal X}$ and let    
$p$ be a non-singular point of ${\mathcal M}$. 
Assume that the following consistency condition is satisfied:
\begin{equation} \label{invaria}
X,Y \in T^{\mathcal H}_p {\mathcal M}  \quad \Rightarrow \quad 
{\left\langle X,Y \right\rangle}_p = {\left\langle {d {\pi}}_p (X), {d {\pi}}_p (Y) \right\rangle}_{{\pi}(p)}.
\end{equation}
Then, for all smooth functions $f: {\mathcal X} \to {\Bbb R}$, we have 
\begin{equation} \label{pushgrad}
     {d \pi}_p \left( {\rm grad}^{\mathcal M}_p (f \circ \pi) \right) =  \Pi  \left( {\rm grad}^{\mathcal X}_{\pi (p)} f \right) ,
\end{equation}
where $\Pi$ denotes the projection of tangent vectors in $T_{\pi(p)} {\mathcal X}$ onto ${d {\pi}}_p (T_p {\mathcal M})$.   
\end{proposition}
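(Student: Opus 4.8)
The plan is to work entirely from the defining variational property of the Riemannian gradient, using the consistency condition (\ref{invaria}) only to promote $d\pi_p$ to a linear isometry on the horizontal space. Write $q := \pi(p)$ and $u := {\rm grad}^{\mathcal M}_p(f \circ \pi)$. The one identity I will unpack repeatedly is the defining property of $u$ on $({\mathcal M}, g^{\mathcal Z})$: for every $V \in T_p {\mathcal M}$,
\[
   {\langle u, V \rangle}_p \; = \; d(f \circ \pi)_p(V) \; = \; df_q\big( d\pi_p(V) \big),
\]
where the last step is the chain rule for differentials. Since $p$ is non-singular, $T_p{\mathcal M}$ and the orthogonal decomposition $T_p{\mathcal M} = T^{\mathcal H}_p{\mathcal M} \oplus T^{\mathcal V}_p{\mathcal M}$ are well defined, so every vector appearing below makes sense.

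First I would show that $u$ is horizontal. For any $V \in T^{\mathcal V}_p {\mathcal M} = T_p{\mathcal M} \cap \ker d\pi_p$ we have $d\pi_p(V) = 0$, so the identity above yields ${\langle u, V\rangle}_p = df_q(0) = 0$. Thus $u$ is orthogonal to $T^{\mathcal V}_p{\mathcal M}$; as $u \in T_p{\mathcal M}$ and the decomposition is orthogonal, this forces $u \in T^{\mathcal H}_p{\mathcal M}$. Next I would record two facts about $d\pi_p$ restricted to $T^{\mathcal H}_p{\mathcal M}$: it is injective (a horizontal vector in $\ker d\pi_p$ lies in $T^{\mathcal H}_p{\mathcal M} \cap T^{\mathcal V}_p{\mathcal M} = \{0\}$), so that $d\pi_p(T_p{\mathcal M}) = d\pi_p(T^{\mathcal H}_p{\mathcal M})$; and, crucially, hypothesis (\ref{invaria}) states exactly that this restriction preserves the inner product, hence is a linear isometry onto the subspace $d\pi_p(T_p{\mathcal M}) \subseteq T_q{\mathcal X}$.

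Finally I would check that $W := d\pi_p(u)$ obeys the variational characterisation of $\Pi\big({\rm grad}^{\mathcal X}_q f\big)$, namely that it is the unique vector in $d\pi_p(T_p{\mathcal M})$ with ${\langle W, Z\rangle}_q = df_q(Z)$ for all $Z \in d\pi_p(T_p{\mathcal M})$. Given such a $Z$, write $Z = d\pi_p(Y)$ with $Y \in T^{\mathcal H}_p{\mathcal M}$; the isometry then gives ${\langle W, Z\rangle}_q = {\langle d\pi_p(u), d\pi_p(Y)\rangle}_q = {\langle u, Y\rangle}_p$, and the gradient identity of the first step turns this into $df_q(d\pi_p(Y)) = df_q(Z)$. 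Since $W \in d\pi_p(T_p{\mathcal M})$, uniqueness of the orthogonal projection gives $W = \Pi\big({\rm grad}^{\mathcal X}_q f\big)$, which is precisely (\ref{pushgrad}).

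The hard part is not any calculation but spotting the right structural reading of the hypothesis: condition (\ref{invaria}) is used for nothing more than to make $d\pi_p$ an isometry on $T^{\mathcal H}_p{\mathcal M}$, and this is exactly the device that lets one transport the inner product from $T_q{\mathcal X}$ back to $T_p{\mathcal M}$, where the gradient of $f \circ \pi$ is defined. The remaining care is bookkeeping --- keeping the vertical and horizontal pieces straight and invoking non-singularity of $p$ to guarantee that the tangent space exists.
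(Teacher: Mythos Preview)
Your proof is correct and follows essentially the same route as the paper's: first show the gradient $u$ is horizontal by pairing with vertical vectors, then use the consistency condition to identify $d\pi_p$ as an isometry on the horizontal space, and finally verify that $d\pi_p(u)$ satisfies the inner-product identity characterising $\Pi\big({\rm grad}^{\mathcal X}_{q} f\big)$ on $d\pi_p(T_p{\mathcal M})$. The only cosmetic difference is that the paper obtains a horizontal preimage of $Z$ by decomposing an arbitrary preimage into vertical and horizontal parts, whereas you invoke the equality $d\pi_p(T_p{\mathcal M}) = d\pi_p(T^{\mathcal H}_p{\mathcal M})$ directly; the content is the same.
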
    
\begin{proof}
First observe that ${\rm grad}^{\mathcal M}_p (f \circ \pi) \in T^{\mathcal H}_p {\mathcal M}$. 
Indeed, for all $Y \in T^{\mathcal V}_p {\mathcal M} $ we have 
\begin{equation} \label{orth}
 \langle {\rm grad}^{\mathcal M}_p (f \circ \pi), Y \rangle \; = \; {d (f \circ \pi)}_p (Y) \; = \; d f_{\pi(p)} (\underbrace{{d \pi}_p (Y)}_{= 0}) \; = \; 0.
\end{equation}
Let $X' \in {d \pi}_p(T_p {\mathcal M}) \subseteq  T_{\pi(p)} {\mathcal X}$. There exists $X \in T_p {\mathcal M}$
such that ${d \pi}_p (X) = X'$. We can decompose $X$ orthogonally into a part $X_1$ contained in 
$T^{\mathcal V}_p {\mathcal M}$ and a part $X_2$ contained in $T^{\mathcal H}_p {\mathcal M}$. 
With this decomposition we have $X' = {d \pi}_p (X) = {d \pi}_p (X_1 + X_2) = {d \pi}_p (X_2)$.
This implies
\begin{eqnarray*} 
   {\langle {d \pi}_p ({\rm grad}^{\mathcal M}_p (f \circ \pi)) , X' \rangle}_{\pi(p)} 
      & = &    {\langle {d \pi}_p ({\rm grad}^{\mathcal M}_p (f \circ \pi)) ,  {d \pi}_p (X_2) \rangle}_{\pi(p)} \\
      & = & {\langle {\rm grad}^{\mathcal M}_p (f \circ \pi) , X_2 \rangle}_p \qquad (\mbox{because of (\ref{orth}) and (\ref{invaria})}) \\
      & = & {d (f \circ \pi)}_p (X_2) \\
      & = & {d f}_{\pi(p)} ({d \pi}_p (X_2)) \\
      & = & {\langle {\rm grad}^{\mathcal X}_{\pi(p)} f , {d \pi}_p (X_2) \rangle}_{\pi(p)} \\
      & = & {\langle {\rm grad}^{\mathcal X}_{\pi(p)} f , X' \rangle}_{\pi(p)} .
\end{eqnarray*}
This proves equation (\ref{pushgrad}). 
\end{proof}

As stated above, the parametrised model ${\mathcal M}$ plays the role of 
the distributions on the full network, consisting of the visible and hidden units. We want to relate this 
model to the projected model ${\mathcal S} := \pi({\mathcal M})$. 
The composition of the parametrisation $\varphi$ and the projection $\pi$ 
serves as a parametrisation $\xi \mapsto \pi (p_{\xi} )$ of ${\mathcal S}$ as shown in the following diagram.  
\[
\begin{tikzcd}
 \Xi    \arrow{rr}{\varphi} \arrow[rrrr, bend left, "\pi \circ \varphi"] & &  {\mathcal M} \arrow{rr}{\pi} 
         & &  {\mathcal S}  \\                                                             
\end{tikzcd}
\]
The map $\pi \circ \varphi$ is a proper parametrisation ${\mathcal S}$ if for all $q \in {\rm Smooth}({\mathcal S})$ 
and all $\xi$ with $\pi(p_\xi) = q$, the image of the differential ${d (\pi \circ \varphi)}_\xi$ coincides with the full tangent space $T_{q} {\mathcal S}$.   
Obviously, this does not follow from the assumption that $\varphi$ is a proper parametrisation of ${\mathcal M}$ and requires further assumptions. 
One necessary, but not sufficient, condition 
is the following: Assume that $\pi \circ \varphi$ is a proper parametrisation of ${\mathcal S}$ and consider a point  
$p \in {\rm Smooth}({\mathcal M})$ with $\pi(p) \in {\rm Smooth}({\mathcal S})$.    
With $\xi \in \Xi$, $\varphi(\xi) = p$, we have
\begin{eqnarray}
T_{\pi(p)} {\mathcal S} & = & {d (\pi \circ \varphi)}_\xi \left( T_\xi \Xi \right) \qquad \;\,\, (\mbox{$\pi \circ \varphi$ proper parametristation}) \nonumber \\
   & = & {d \pi}_{\varphi(\xi)} \left( {d \varphi}_\xi \left( T_\xi \Xi \right) \right) \;\, \, \, \, \, (\mbox{by the chain rule}) \nonumber \\
   & = & {d \pi}_{\varphi(\xi)} \left( T_{\varphi(\xi)} {\mathcal M} \right) \qquad (\mbox{$\varphi$ proper parametristation}) \nonumber \\
   & = & {d \pi}_{p} \left( T_{p} {\mathcal M} \right). \label{surj}
\end{eqnarray}
The condition (\ref{surj}) is sufficient if $\pi^{-1}({\rm Smooth}({\mathcal S})) \subseteq {\rm Smooth}({\mathcal M})$, which is clearly 
satisfied if ${\mathcal M}$ is a smooth sub-manifold of ${\mathcal Z}$ (with no singularities). 
\medskip

We have the following implication of Proposition \ref{general}.

\begin{theorem} \label{theinvargrad}
Consider a proper parametrisation $\varphi: \Xi \to {\mathcal Z}$ of ${\mathcal M}$ 
and a smooth map $\pi: {\mathcal Z} \to {\mathcal X}$.
Furthermore, assume that the compatibility condition (\ref{invaria}) is satisfied.  
If the composition $\pi \circ \varphi: \Xi \to {\mathcal X}$ is a proper parametrisation of ${\mathcal S} = \pi({\mathcal M})$ then 
for all $p \in {\rm Smooth}({\mathcal M})$ with $\pi(p) \in {\rm Smooth} ({\mathcal S})$, 
and all smooth functions $f : {\mathcal X} \to {\Bbb R}$, we have 
\begin{equation} \label{identitaet11}
      {d \pi}_{p} \left( {\rm grad}^{\mathcal M}_{p} (f \circ \pi) \right) =  {\rm grad}^{\mathcal S}_{\pi(p)} f .  
\end{equation}
\end{theorem}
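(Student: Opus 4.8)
The plan is to obtain Theorem \ref{theinvargrad} as a direct consequence of Proposition \ref{general}, the only extra ingredient being the identification of the abstract projection $\Pi$ appearing there with the orthogonal projection onto the tangent space $T_{\pi(p)}{\mathcal S}$. Proposition \ref{general} already delivers
\[
   {d \pi}_p \left( {\rm grad}^{\mathcal M}_p (f \circ \pi) \right) = \Pi \left( {\rm grad}^{\mathcal X}_{\pi(p)} f \right),
\]
where $\Pi$ projects $T_{\pi(p)}{\mathcal X}$ onto ${d \pi}_p(T_p{\mathcal M})$. So it suffices to establish two facts: that ${d \pi}_p(T_p{\mathcal M})$ equals the full tangent space $T_{\pi(p)}{\mathcal S}$, and that ${\rm grad}^{\mathcal S}_{\pi(p)} f$ is the orthogonal projection of ${\rm grad}^{\mathcal X}_{\pi(p)} f$ onto $T_{\pi(p)}{\mathcal S}$.

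For the first fact I would simply invoke the chain of equalities (\ref{surj}), which under the standing hypotheses --- $\varphi$ a proper parametrisation of ${\mathcal M}$, $\pi \circ \varphi$ a proper parametrisation of ${\mathcal S}$, and $p \in {\rm Smooth}({\mathcal M})$ with $\pi(p) \in {\rm Smooth}({\mathcal S})$ so that both tangent spaces are defined --- gives exactly $T_{\pi(p)}{\mathcal S} = {d \pi}_p(T_p{\mathcal M})$. This is what turns the $\Pi$ of Proposition \ref{general} into the orthogonal projection onto $T_{\pi(p)}{\mathcal S}$.

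For the second fact I would use the standard submanifold gradient formula: since ${\mathcal S} \subseteq {\mathcal X}$ carries the metric induced from $g^{\mathcal X}$, the gradient of the restriction $f|_{\mathcal S}$ is the orthogonal projection of the ambient gradient. The verification is one line: writing ${\rm grad}^{\mathcal X}_{\pi(p)} f = \Pi({\rm grad}^{\mathcal X}_{\pi(p)} f) + N$ with $N$ orthogonal to $T_{\pi(p)}{\mathcal S}$, one has for every $v \in T_{\pi(p)}{\mathcal S}$ that $\langle {\rm grad}^{\mathcal S}_{\pi(p)} f, v \rangle = {d f}_{\pi(p)}(v) = \langle {\rm grad}^{\mathcal X}_{\pi(p)} f, v \rangle = \langle \Pi({\rm grad}^{\mathcal X}_{\pi(p)} f), v \rangle$, using $\langle N, v \rangle = 0$. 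As both ${\rm grad}^{\mathcal S}_{\pi(p)} f$ and $\Pi({\rm grad}^{\mathcal X}_{\pi(p)} f)$ lie in $T_{\pi(p)}{\mathcal S}$ and pair identically with every $v$ there, they coincide. Chaining this with Proposition \ref{general} yields ${d \pi}_p({\rm grad}^{\mathcal M}_p(f \circ \pi)) = \Pi({\rm grad}^{\mathcal X}_{\pi(p)} f) = {\rm grad}^{\mathcal S}_{\pi(p)} f$, which is (\ref{identitaet11}).

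The hard part is not a computation but the hypothesis that $\pi \circ \varphi$ be a \emph{proper} parametrisation of ${\mathcal S}$: this is precisely what upgrades the always-valid inclusion ${d \pi}_p(T_p{\mathcal M}) \subseteq T_{\pi(p)}{\mathcal S}$ to an equality in (\ref{surj}). Without it, $\Pi$ would project onto a proper subspace of $T_{\pi(p)}{\mathcal S}$ and one could only conclude a projected --- hence strictly weaker --- form of (\ref{identitaet11}). Everything else is a bookkeeping combination of Proposition \ref{general} with the submanifold gradient formula.
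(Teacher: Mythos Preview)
Your proof is correct and follows essentially the same route as the paper: invoke Proposition \ref{general}, then use the proper-parametrisation hypothesis via (\ref{surj}) to identify ${d\pi}_p(T_p{\mathcal M})$ with $T_{\pi(p)}{\mathcal S}$, so that $\Pi$ becomes the projection onto $T_{\pi(p)}{\mathcal S}$. The paper compresses your explicit submanifold-gradient step into the remark that ``$\Pi$ reduces to the identity map'', but the argument is the same.
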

\begin{proof} The assumption that $\pi \circ \varphi$ is a proper parametrisation implies $T_{\pi(p)} {\mathcal S} = {d \pi}_{p} \left( T_{p} {\mathcal M} \right)$ 
(see (\ref{surj})). In that case, the projection $\Pi$ on the RHS of (\ref{pushgrad}) reduces to the identity map. 
\end{proof}

Note that if we do not assume that the composition $\pi \circ \varphi$ is a proper parametrisation of ${\mathcal S}$ we 
have to replace the RHS of (\ref{identitaet11}) by 
$\Pi \left( {\rm grad}^{\mathcal S}_{\pi(p)} f \right)$, where $\Pi$ denotes the projection of tangent vectors in 
$T_{\pi(p)} {\mathcal S}$ onto ${d {\pi}}_p (T_p {\mathcal M})$. Therefore, without a proper parametrisation 
it can well be the case 
that the gradient on ${\mathcal M}$ vanishes in a point $p$ while the corresponding gradient on ${\mathcal S}$, 
that is ${\rm grad}^{\mathcal S}_{\pi (p)} f$, does not. 
Such a point $p$ is referred to as spurious critical point (see \cite{TKB20}). In addition to the problem of having singularities of ${\mathcal S} = \pi({\mathcal Z})$, this 
represents another problem with gradient methods for the optimisation of smooth functions on parametrised models. However, it turns out that in the context of the natural gradient method, where we require models to be properly parametrised, the problem of spurious critical points does not appear. 
\medskip

We conclude this section by addressing the following problem: If we assume 
that the compatibility condition (\ref{invaria}) is satisfied for a model ${\mathcal M}$ in ${\mathcal Z}$, 
what can we say about the corresponding compatibility for a sub-model ${\mathcal L}$ of ${\mathcal M}$?  
In general we cannot expect that (\ref{invaria}) also holds for ${\mathcal L}$. 
The following theorem characterises those sub-models ${\mathcal L}$ of ${\mathcal M}$ for which this is satisfied.  

\begin{theorem} \label{giltauch}
Assume that (\ref{invaria}) holds for a model ${\mathcal M}$ in ${\mathcal Z}$ and consider a sub-model 
${\mathcal L} \subseteq {\mathcal M}$. Then (\ref{invaria}) also holds for ${\mathcal L}$ if and only if 
for each point $p \in {\rm Smooth} ({\mathcal L})$ 
the tangent space $T_p {\mathcal L}$ satisfies 
\begin{equation} \label{conditionforsub}
   T_p {\mathcal L} \, = \, \left( T_p {\mathcal L}  \cap T^{\mathcal H}_p {\mathcal M} \right) + 
   \left(T_p {\mathcal L}  \cap T^{\mathcal V}_p {\mathcal M}  \right). 
\end{equation}
\end{theorem}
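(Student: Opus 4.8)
The plan is to reduce the whole statement to two purely linear-algebraic equivalences at a fixed point $p \in {\rm Smooth}({\mathcal L}) \subseteq {\rm Smooth}({\mathcal M})$, working entirely inside the inner-product space $(T_p {\mathcal M}, \langle\cdot,\cdot\rangle_p)$ with its orthogonal splitting $T_p {\mathcal M} = T^{\mathcal H}_p {\mathcal M} \oplus T^{\mathcal V}_p {\mathcal M}$. First I would set up the analogous decomposition for ${\mathcal L}$: since $T_p {\mathcal L} \subseteq T_p {\mathcal M}$, the vertical part is $T^{\mathcal V}_p {\mathcal L} = T_p {\mathcal L} \cap \ker d\pi_p = T_p {\mathcal L} \cap T^{\mathcal V}_p {\mathcal M}$, and $T^{\mathcal H}_p {\mathcal L}$ is its orthogonal complement inside $T_p {\mathcal L}$ (not inside $T_p {\mathcal M}$; this distinction is the crux). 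With this notation, condition (\ref{invaria}) for ${\mathcal L}$ is exactly the statement that $\langle X, Y\rangle_p = \langle d\pi_p(X), d\pi_p(Y)\rangle_{\pi(p)}$ for all $X, Y \in T^{\mathcal H}_p {\mathcal L}$. I claim the theorem follows from: (a) condition (\ref{invaria}) for ${\mathcal L}$ holds if and only if $T^{\mathcal H}_p {\mathcal L} \subseteq T^{\mathcal H}_p {\mathcal M}$; and (b) $T^{\mathcal H}_p {\mathcal L} \subseteq T^{\mathcal H}_p {\mathcal M}$ if and only if (\ref{conditionforsub}) holds.

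For equivalence (a) the easy direction is ``$\Leftarrow$'': if $T^{\mathcal H}_p {\mathcal L} \subseteq T^{\mathcal H}_p {\mathcal M}$ then any $X, Y \in T^{\mathcal H}_p {\mathcal L}$ already lie in $T^{\mathcal H}_p {\mathcal M}$, so (\ref{invaria}) for ${\mathcal M}$ applies verbatim and yields (\ref{invaria}) for ${\mathcal L}$. The substantive direction ``$\Rightarrow$'' is where I would spend the effort, and it is the main obstacle: here I use the diagonal case $Y = X$ together with positive-definiteness of the metric. Given $X \in T^{\mathcal H}_p {\mathcal L}$, decompose it orthogonally in $T_p {\mathcal M}$ as $X = X_{\mathcal H} + X_{\mathcal V}$ with $X_{\mathcal H} \in T^{\mathcal H}_p {\mathcal M}$ and $X_{\mathcal V} \in T^{\mathcal V}_p {\mathcal M} \subseteq \ker d\pi_p$. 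Then $d\pi_p(X) = d\pi_p(X_{\mathcal H})$, so using (\ref{invaria}) for ${\mathcal L}$ on the left and (\ref{invaria}) for ${\mathcal M}$ applied to $X_{\mathcal H}$ on the right, $\langle X, X\rangle_p = \langle d\pi_p(X), d\pi_p(X)\rangle_{\pi(p)} = \langle d\pi_p(X_{\mathcal H}), d\pi_p(X_{\mathcal H})\rangle_{\pi(p)} = \langle X_{\mathcal H}, X_{\mathcal H}\rangle_p$. Since orthogonality gives $\langle X, X\rangle_p = \langle X_{\mathcal H}, X_{\mathcal H}\rangle_p + \langle X_{\mathcal V}, X_{\mathcal V}\rangle_p$, this forces $\langle X_{\mathcal V}, X_{\mathcal V}\rangle_p = 0$, hence $X_{\mathcal V} = 0$ and $X = X_{\mathcal H} \in T^{\mathcal H}_p {\mathcal M}$. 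Note that only the quadratic form (the case $Y = X$) is needed, so no polarisation argument is required.

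Finally, equivalence (b) is elementary linear algebra with the orthogonal splitting. Writing $H = T^{\mathcal H}_p {\mathcal M}$, $V = T^{\mathcal V}_p {\mathcal M}$ and $L = T_p {\mathcal L}$: if (\ref{conditionforsub}) holds, then $L = (L\cap H) + (L \cap V)$ is an orthogonal direct sum (because $H \perp V$ and $H \cap V = \{0\}$), and since $L \cap V = T^{\mathcal V}_p {\mathcal L}$, the orthogonal complement of $T^{\mathcal V}_p {\mathcal L}$ in $L$ is exactly $L \cap H$; thus $T^{\mathcal H}_p {\mathcal L} = L \cap H \subseteq H$. Conversely, if $T^{\mathcal H}_p {\mathcal L} \subseteq H$, then from the defining orthogonal splitting $L = T^{\mathcal V}_p {\mathcal L} \oplus T^{\mathcal H}_p {\mathcal L}$ we have $T^{\mathcal V}_p {\mathcal L} = L \cap V$ and $T^{\mathcal H}_p {\mathcal L} \subseteq L \cap H$, whence $L = (L \cap V) + (L \cap H) \subseteq L$ gives (\ref{conditionforsub}). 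Combining (a) and (b) proves the theorem. The one point to state carefully at the outset is that $p$ being non-singular for both ${\mathcal L}$ and ${\mathcal M}$ guarantees that $T_p {\mathcal L}$ and $T_p {\mathcal M}$ are genuine subspaces on which these orthogonal decompositions make sense.
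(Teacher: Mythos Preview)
Your proof is correct and follows essentially the same approach as the paper: both reduce the statement to a pointwise linear-algebra lemma inside $T_p{\mathcal M}$ with its orthogonal splitting into $T^{\mathcal H}_p{\mathcal M}\oplus T^{\mathcal V}_p{\mathcal M}$, and both rely on the same norm computation (decompose $X\in T^{\mathcal H}_p{\mathcal L}$ as $X_{\mathcal H}+X_{\mathcal V}$, compare $\|X\|^2$ with $\|d\pi_p(X)\|^2=\|X_{\mathcal H}\|^2$ via the assumed invariance for ${\mathcal M}$). The only organisational difference is that you factor the equivalence through the intermediate condition $T^{\mathcal H}_p{\mathcal L}\subseteq T^{\mathcal H}_p{\mathcal M}$ and argue both implications directly, whereas the paper proves the harder direction by contrapositive (picking a nonzero $X$ in the orthogonal complement of $(T_p{\mathcal L}\cap T^{\mathcal H}_p{\mathcal M})+(T_p{\mathcal L}\cap T^{\mathcal V}_p{\mathcal M})$ inside $T_p{\mathcal L}$ and showing $\|X\|>\|d\pi_p(X)\|$); the underlying computation is identical.
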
 

This theorem is a direct implication of Lemma \ref{lemmlin} below which reduces the problem to the simple setting of linear algebra. 
Let $({B} , {\langle \cdot , \cdot \rangle}_{B})$, $({C}, {\langle \cdot , \cdot \rangle}_{C})$ be two 
finite-dimensional real Hilbert spaces, and let $T: {B} \to {C}$ be a linear map. We can decompose 
${B}$ into a ``vertical component'' ${B}^{\mathcal V} := \ker T$ and its orthogonal complement 
${B}^{\mathcal H}$
in $B$, the corresponding ``horizontal component''.  
Now let ${A}$ be a linear subspace of ${B}$, equipped with the induced inner product ${\langle \cdot, \cdot \rangle}_A$, 
and consider the restriction $T_{A}: A \to C$ of $T$ to $A$. 
Denoting by $\bot_{A}$ and $\bot_{B}$ the orthogonal complements in ${A}$ and ${B}$, respectively, 
we can decompose ${A}$ into 
\begin{equation}
 {A}^{\mathcal V} \, := \, \ker T_{A} \, = \, {A} \cap \ker T \, = \, A \cap B^{\mathcal V} ,
\end{equation}
and
\begin{equation}   
  {A}^{\mathcal H} \, := \, \left( {A}^{\mathcal V} \right)^{\perp_{A}} \, = \, A \cap \left( {A}^{\mathcal V} \right)^{\perp_{B}} \, = \, 
  A \cap \left( A \cap B^{\mathcal V} \right)^{\perp_{B}} \, = \,  A \cap \left( A^{\perp_{B}} + B^{\mathcal H} \right) \, \supseteq \, A \cap B^{\mathcal H} .
\end{equation} 
Note that, while we always have ${A}^{\mathcal V} \subseteq {B}^{\mathcal V}$, in general ${A}^{\mathcal H} \not\subseteq {B}^{\mathcal H}$.

\begin{lemma} \label{lemmlin}
Assume:
\begin{equation} \label{lininv}
   X, Y \in B^{\mathcal H} \quad \Rightarrow \quad {\langle X , Y \rangle}_{B} = {\langle T(X) , T(Y) \rangle}_{C}.
\end{equation}
Then the following two statements about a subspace $A$ of $B$ are equivalent: 
\begin{eqnarray} 
   (i) & X, Y \in A^{\mathcal H} \quad \Rightarrow \quad {\langle X , Y \rangle}_{A} = 
         {\langle T_{A} (X) , T_{A} (Y) \rangle}_{C}. \label{39aufu} \\
  (ii) & {A} \, = \, ({A} \cap B^{\mathcal H})  + ({A} \cap B^{\mathcal V}). \label{decomp}
\end{eqnarray}
\end{lemma}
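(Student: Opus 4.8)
The plan is to reduce the claimed equivalence to the single clean condition $A^{\mathcal{H}} \subseteq B^{\mathcal{H}}$, and then test each of the statements (i) and (ii) against it. Throughout I will use that the inner product on $A$ is the one induced from $B$, so that $\langle \cdot , \cdot \rangle_A$ is merely the restriction of $\langle \cdot , \cdot \rangle_B$, and that $T_A = T|_A$, whence $T_A(X) = T(X)$ for $X \in A$. These identifications let me work entirely inside $B$ and $C$.

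First I would analyse statement (i). For $X \in A^{\mathcal{H}} \subseteq B$, write the $B$-orthogonal decomposition $X = X_{\mathcal{H}} + X_{\mathcal{V}}$ with $X_{\mathcal{H}} \in B^{\mathcal{H}}$ and $X_{\mathcal{V}} \in B^{\mathcal{V}} = \ker T$, and similarly for $Y$. Since $T$ annihilates $B^{\mathcal{V}}$, we have $T(X) = T(X_{\mathcal{H}})$ and $T(Y) = T(Y_{\mathcal{H}})$, so the hypothesis (\ref{lininv}) applied to $X_{\mathcal{H}}, Y_{\mathcal{H}} \in B^{\mathcal{H}}$ yields $\langle T(X) , T(Y) \rangle_C = \langle X_{\mathcal{H}} , Y_{\mathcal{H}} \rangle_B$. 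On the other hand, orthogonality of $B^{\mathcal{H}}$ and $B^{\mathcal{V}}$ gives $\langle X , Y \rangle_B = \langle X_{\mathcal{H}} , Y_{\mathcal{H}} \rangle_B + \langle X_{\mathcal{V}} , Y_{\mathcal{V}} \rangle_B$. Subtracting leaves the exact discrepancy $\langle X , Y \rangle_B - \langle T(X) , T(Y) \rangle_C = \langle X_{\mathcal{V}} , Y_{\mathcal{V}} \rangle_B$. Hence (i) holds precisely when this symmetric bilinear form vanishes on $A^{\mathcal{H}}$, and by polarisation this is equivalent to $\| X_{\mathcal{V}} \|^2 = 0$, i.e. $X_{\mathcal{V}} = 0$, for every $X \in A^{\mathcal{H}}$. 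Thus (i) $\Longleftrightarrow A^{\mathcal{H}} \subseteq B^{\mathcal{H}}$.

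Next I would show that $A^{\mathcal{H}} \subseteq B^{\mathcal{H}}$ is in turn equivalent to (ii). Recall the $A$-orthogonal splitting $A = A^{\mathcal{V}} \oplus A^{\mathcal{H}}$ with $A^{\mathcal{V}} = A \cap B^{\mathcal{V}}$. If $A^{\mathcal{H}} \subseteq B^{\mathcal{H}}$, then $A^{\mathcal{H}} \subseteq A \cap B^{\mathcal{H}}$, so $A = A^{\mathcal{V}} + A^{\mathcal{H}} \subseteq (A \cap B^{\mathcal{V}}) + (A \cap B^{\mathcal{H}})$; the reverse inclusion is trivial, giving (ii). Conversely, assume (ii) and take $X \in A^{\mathcal{H}}$, writing $X = u + v$ with $u \in A \cap B^{\mathcal{H}}$ and $v \in A \cap B^{\mathcal{V}} = A^{\mathcal{V}}$. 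Since $X \perp_A A^{\mathcal{V}}$ we have $\langle X , v \rangle_B = 0$, while $u \perp v$ in $B$ gives $\langle X , v \rangle_B = \langle u + v , v \rangle_B = \| v \|^2$. Hence $v = 0$ and $X = u \in B^{\mathcal{H}}$, so $A^{\mathcal{H}} \subseteq B^{\mathcal{H}}$. Chaining the two equivalences proves the lemma.

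The only genuine content is the identity $\langle X , Y \rangle_B - \langle T(X) , T(Y) \rangle_C = \langle X_{\mathcal{V}} , Y_{\mathcal{V}} \rangle_B$ in the first step; once it is in hand, both the polarisation argument and the linear-algebra equivalence with (ii) are routine. The point to watch, and where a careless argument would fail, is the distinction between $A^{\mathcal{H}}$, the $A$-orthogonal complement of $A^{\mathcal{V}}$, and the generally smaller space $A \cap B^{\mathcal{H}}$, which need not coincide, as the remark preceding the lemma already emphasises. Condition (ii) is exactly what forces these two to agree.
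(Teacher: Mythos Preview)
Your proof is correct and follows essentially the same route as the paper's: both arguments hinge on the intermediate condition $A^{\mathcal H}\subseteq B^{\mathcal H}$, and both use the decomposition $X=X_{\mathcal H}+X_{\mathcal V}$ together with \eqref{lininv} to compare $\|X\|_B$ with $\|T(X)\|_C$. The only organisational difference is that you make the factorisation through $A^{\mathcal H}\subseteq B^{\mathcal H}$ explicit on both sides (via the identity $\langle X,Y\rangle_B-\langle T(X),T(Y)\rangle_C=\langle X_{\mathcal V},Y_{\mathcal V}\rangle_B$ and polarisation), whereas the paper proves $\neg\text{(ii)}\Rightarrow\neg\text{(i)}$ by picking a nonzero $X$ in the $A$-orthogonal complement of $(A\cap B^{\mathcal H})+(A\cap B^{\mathcal V})$ and exhibiting the strict inequality $\|X\|_A>\|T_A(X)\|_C$; this is just the $X=Y$ case of your identity.
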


\begin{proof} Let us first assume that (\ref{decomp}) holds true. 
This implies 
\begin{equation} 
   A^{\mathcal H} \, = \, \left( A^{\mathcal V} \right)^{\perp_{A}} \, = \, \left( A \cap B^{\mathcal V} \right)^{\perp_{A}} \, = \, 
   A \cap B^{\mathcal H} \, \subseteq \, B^{\mathcal H} .
\end{equation}
For all $X, Y \in A^{\mathcal H}  \subseteq  B^{\mathcal H}$,  (\ref{lininv}) then takes the form
\begin{equation} \label{lininv2}  
   {\langle X , Y \rangle}_{A} =   {\langle X , Y \rangle}_{B} =  {\langle T(X) , T(Y) \rangle}_{C} 
   =  {\langle T_{A}(X) , T_{A}(Y) \rangle}_{C}.
\end{equation}
In order to prove the opposite implication, we assume that (\ref{decomp}) does not hold for ${A}$. This means that 
\begin{equation} \label{propsubset}
     {Q} := ({A} \cap B^{\mathcal H} ) +  ( {A} \cap B^{\mathcal V} )
\end{equation}
is a proper subspace of ${A}$. 
We denote the orthogonal complement of ${Q}$ in 
${A}$ by ${R}$ and choose a non-trivial vector $X$ in ${R}$. Such a vector can be uniquely decomposed as a sum of two non-trivial 
vectors $X_1 \in B^{\mathcal H}$ and $X_2 \in B^{\mathcal V} $. This implies 
\begin{eqnarray*}
  {\| X \|}_{A} & = &  {\| X \|}_{B} \\
    & = &  {\| X_1 + X_2 \|}_{B} \\
    & > &   {\| X_1 \|}_{B} \\
    & = &  {\| T(X_1) \|}_{C} \\
    & = &  {\| T(X_1) + T(X_2) \|}_{C} \\
    & = &  {\| T(X) \|}_{C} \\
    & = &  {\| T_{A}(X) \|}_{C} .
\end{eqnarray*}
This means that (\ref{39aufu}) does not hold for the subspace ${A}$. 
\end{proof}

\subsection{A new interpretation of Chentsov's theorem} \label{chents}
We now come back to the context of probability distributions but take a slightly more general perspective than in Section \ref{xysetting}.    
We interpret $X_V$ as a coarse graining of the set $\mathsf{X}_V \times \mathsf{X}_H$ which lumps together all pairs $(v , h)$, $(v', h')$ with $v = v'$. 
Replacing the Cartesian product $\mathsf{X}_V \times \mathsf{X}_H$ by a general set $\mathsf{Z}$,
a coarse graining of $\mathsf{Z}$ is an onto mapping $X: \mathsf{Z} \to \mathsf{X}$, which partitions $\mathsf{Z}$ into the atoms 
$\mathsf{Z}_x := X^{-1}(x)$. The corresponding push-forward map is given by
\[
   {X}_\ast: {\mathcal P}(\mathsf{Z}) \to  {\mathcal P}(\mathsf{X}), \qquad p = 
   \sum_{z} p(z) \, \delta^z \mapsto p_X = \sum_x \underbrace{\left( \sum_{z \in \mathsf{Z}_x} p(z) \right)}_{=: p(x)} \delta^x ,
\]
with the differential 
\[
   {d X_\ast}_p: {\mathcal T}(\mathsf{Z}) \to  {\mathcal T}(\mathsf{X}), \qquad V = \sum_{z} V(z) \, \delta^z \mapsto  \sum_x \left( \sum_{z \in \mathsf{Z}_x} V(z) \right) \delta^x .
\]
Obviously, we have 
\begin{equation} \label{vertik}
     {\mathcal V}_p \, := \, \ker {d X_\ast}_p \, = \, 
     \left\{ \sum_{z} V(z) \, \delta^{z}  \; : \; \mbox{$\sum_{z \in \mathsf{Z}_x} V(z) \, = \, 0$ for all $x$} \right\} ,
\end{equation}
with the orthogonal complement 
\begin{equation} \label{hori}
    {\mathcal H}_p \, := \, {{\mathcal V}_p}^\perp   \, = \, 
    \left\{  \widetilde{U} = \sum_{x} \frac{U(x)}{p(x)} \sum_{z \in \mathsf{Z}_x} p(z) \, \delta^z   \; : \; \sum_x U(x) = 0 \right\}
\end{equation}
with respect to the Fisher-Rao metric in $p$ (note that ${\mathcal V}_p$ is independent of $p$). For a vector 
\[
    \widetilde{U} \, = \,  \sum_{x} \frac{U(x)}{p(x)} \sum_{z \in \mathsf{Z}_x} p(z) \, \delta^z \; \in \;  {\mathcal H}_p, 
\] 
we have
\[
    {d X_\ast}_p (\widetilde{U}) \, = \, U.
\]
Given a vector $V = \sum_{z} V(z) \, \delta^z \in {\mathcal T}(\mathsf{Z})$, we can decompose it uniquely as 
\[
    V = V^{\mathcal H} + V^{\mathcal V}, 
\]
with $V^{\mathcal H} \in {\mathcal H}_p$ and $V^{\mathcal V} \in {\mathcal V}_p$. More precisely,
\begin{eqnarray}
      V^{\mathcal H} & = &  \sum_x \sum_{z \in \mathsf{Z}_x} \left( \frac{p(z)}{p(x)}\sum_{z' \in \mathsf{Z}_x} V(z') \right) \delta^z , \\
      V^{\mathcal V} & = & \sum_x \sum_{z \in \mathsf{Z}_x} \left( V(z) - \frac{p(z)}{p(x)}\sum_{z' \in \mathsf{Z}_x} V(z') \right) \delta^z .
\end{eqnarray}
We now examine the inner product of two such vectors $\widetilde{U}, \widetilde{V} \in {\mathcal H}_p$:
\begin{eqnarray}
{\langle \widetilde{U},  \widetilde{V} \rangle}_{p}  
       & = & \sum_{z} \frac{1}{p(z)} \, \widetilde{U}(z) \widetilde{V}(z) \nonumber \\
       & = & \sum_x \sum_{z \in Z_x} \frac{1}{p(z)} \left( \frac{U(x)}{p(x)} p(z)  \right) \left(  \frac{V(x)}{p(x)} p(z) \right) \nonumber \\
       & = & \sum_x   \frac{U(x)}{p(x)}   \frac{V(x)}{p(x)} \sum_{z \in Z_x} p(z) \nonumber \\
       & = &  \sum_x \frac{1}{p(x)} \, U(x)  V(x)  \nonumber \\
       & = & {\langle U , V \rangle}_{p_X} . \label{innprod}
\end{eqnarray}
As the inner product (\ref{innprod}) coincides with ${\langle   {d X_\ast}_p (\widetilde{U}) ,  {d X_\ast}_p (\widetilde{V}) \rangle}_{X_\ast(p)}$, 
the compatibility condition (\ref{invaria}) is satsified.  
Given that there are no singularities involved, 
Theorem \ref{theinvargrad}
implies that for all smooth functions $f: {\mathcal P}(\mathsf{X}) \to {\Bbb R}$ 
and all $p \in {\mathcal P}(\mathsf{Z})$, the following equality of gradients holds:
\begin{equation} \label{identitaet3}
      {d X_\ast}_p \left( {\rm grad}_{p} (f \circ X_\ast) \right) \, = \, {\rm grad}_{X_\ast(p)} f ,
\end{equation}
where ${\mathcal P}(\mathsf{Z})$ and ${\mathcal P}(\mathsf{X})$ are equipped with the respective Fisher-Rao metrics. 
Even though this is a simple observation, it highlights an important point here. A coarse graining is generally associated 
with a loss of information, which is expressed by the monotonicity of the Fisher-Rao metric. This information loss is maximal when we project from the full space 
${\mathcal P}(\mathsf{Z})$ onto ${\mathcal P}(\mathsf{X})$. 
Nevertheless, the gradient of any function $f$ that is defined on ${\mathcal P}(\mathsf{X})$ is not sensitive to this information loss. 
In order to study parametrised models ${\mathcal M}$ in ${\mathcal P}(\mathsf{Z})$ with the 
same invariance of gradients, we have to impose the condition (\ref{conditionforsub}), which takes the form 
\begin{equation} \label{admissib}
     T_p {\mathcal M} \, = \, \big( T_p {\mathcal M} \cap {\mathcal H}_p \big) +  \big( T_p {\mathcal M} \cap  {{\mathcal V}_p} \big), 
     \qquad p \in {\rm Smooth}({\mathcal M}). 
\end{equation}

\begin{definition} \label{cylindrical}
If a model ${\mathcal M} \subseteq {\mathcal P}(\mathsf{Z})$ satisfies the condition (\ref{admissib}) in $p$, we say that it is 
{\em cylindrical\/} in $p$. If it is cylindrical in all non-singular points, we say that it is 
({\em pointwise\/}) {\em cylindrical\/}.  
\end{definition}

Of particular interest are cylindrical models with a trivial vertical component. 
These are the models, for which the coarse graining $X$ is a minimal sufficient statistic. 
They have been used by \cite{Che82} in order to characterise the Fisher-Rao metric. To be more precise, 
we need the definition of a {\em Markov kernel\/}. We consider the space 
of linear maps from ${\mathcal Z} = {\Bbb R}^{\mathsf{Z}}$ to ${\mathcal X} = {\Bbb R}^{\mathsf{X}}$, which is canonically isomorphic to 
${\mathcal Z}^\ast \otimes {\mathcal X}$, and define the polytope of Markov kernels as  
\[
    {\mathcal K}({\mathsf{Z}} | {\mathsf{X}}) \, := \, 
    \left\{ K = \sum_{x,z} k(z | x) \, \delta^z \otimes e_x \; : \; \mbox{$k(z | x) \geq 0$ for all $x,z$, and $\displaystyle \sum_z k(z | x) = 1$ for all $x$} \right\}.
\]  
The set ${\mathcal P}(\mathsf{Z})$ of probability vectors is a subset where each vector $p$ is identified 
with $k(z | x) := p(z)$. We have equality of the two sets if $\mathsf{X}$ consists of only one element.
We now consider a Markov kernel $K$ that is coupled with the coarse graining $X: \mathsf{Z} \to \mathsf{X}$ in the sense that it 
satisfies $k(z | x) > 0$ if and only if $z \in \mathsf{Z}_x$.
This defines an embedding $K_\ast: {\mathcal P}(\mathsf{X}) \; \to \; {\mathcal P}(\mathsf{Z})$, 
\[
    \qquad p = \sum_x p(x) \, \delta^x \; \mapsto \; \sum_{z} \left(\sum_x p(x) k(z | x) \right) \delta^z = \sum_{x} p(x) \left(\sum_{z \in \mathsf{Z}_x} k(z | x) \, \delta^z\right).  
\] 
The image of $K_\ast$, which we denote by ${\mathcal M}(K)$, is a simplex, given by the extreme points
\[
     \sum_{z \in \mathsf{Z}_x} k(z | x) \, \delta^z, \qquad x \in \mathsf{X},
\]
and we have $X_\ast \circ K_\ast = {\rm id}_{\mathcal{P}(\mathsf{X})}$. The differential of $K_\ast$ is given by 
\[
   {d K_\ast}: {\mathcal T}(\mathsf{X}) \; \to \; {\mathcal T}(\mathsf{Z}), \qquad V = \sum_x V(x) \, \delta^x \; \mapsto \; \sum_{x} V(x) \left(\sum_{z \in \mathsf{Z}_x} k(z | x) \, \delta^z\right),  
\] 
with image 
\[
   {\rm im} \, d K_\ast \, = \, \left\{  \sum_{x} V(x) \left(\sum_{z \in \mathsf{Z}_x} k(z | x) \, \delta^z\right)   \; : \; \sum_x V(x) = 0 \right\}. 
\]
The following simple calculation shows that $K_\ast$ is an isometric embedding, referred to as {\em Markov embedding\/} (see Figure \ref{fig:MarkovEmb}): 
\begin{eqnarray}
\lefteqn{{\langle d K_\ast (U),  d K_\ast (V) \rangle}_{K_\ast(p)}} \nonumber \\ 
       & = & \sum_x \sum_{z \in \mathsf{Z}_x} \frac{1}{\sum_{x'} p(x') k(z | x')} \left( \sum_{x'} U(x') k(z | x') \right) 
                  \left( \sum_{x'} V(x') k(z | x') \right)  \nonumber  \\
       & = &  \sum_x \sum_{z \in \mathsf{Z}_x} \frac{1}{p(x) k(z | x)} U(x) k(z | x)  V(x) k(z | x)  \nonumber  \\
       & = &  \sum_x \frac{1}{p(x)} U(x)  V(x) \sum_{z \in {\mathsf Z}_x}  k(z | x)  \nonumber  \\
       & = &  \sum_x \frac{1}{p(x)} U(x)  V(x)  \nonumber  \\
       & = & {\langle U , V \rangle}_{p}. \label{iso}
\end{eqnarray}
Obviously, 
for $p \in {\mathcal M}(K)$, we have $T_p {\mathcal M}(K) = {\mathcal H}_p$, and therefore 
$T_p {\mathcal M} \cap {\mathcal H}_p = {\mathcal H}_p$ and $T_p {\mathcal M} \cap  {{\mathcal V}_p} = \{0\}$. This 
implies (\ref{admissib}) and thereby proves that ${\mathcal M}(K)$ is cylindrical. In analogy to (\ref{identitaet3}),  
we have for all smooth functions $f: {\mathcal P}(\mathsf{X}) \to {\Bbb R}$ and all 
$p \in {\mathcal M}(K)$, 
\begin{equation} \label{identitaet4}
      {d X_\ast}_p \left( {\rm grad}^{{\mathcal M}(K)}_{p} (f \circ X_\ast) \right) \, = \, {\rm grad}_{X_\ast(p)} f , 
\end{equation}
where this time the gradient on the LHS is evaluated on ${\mathcal M}(K)$, with respect to the induced Fisher-Rao metric, and the one on the RHS remains as it is.  
This is a simple observation which follows directly from the fact that $\left. X_\ast \right|_{{\mathcal M}(K)}$ is an isometry 
between ${\mathcal M}(K)$ and ${\mathcal P}(\mathsf{X})$ (see (\ref{iso})). In fact, $\left. X_\ast \right|_{{\mathcal M}(K)}$ being an isometry is equivalent to 
the invariance (\ref{identitaet4}) of the gradients. 

\begin{figure}[h]        
  \centering
       \includegraphics[width=12cm]{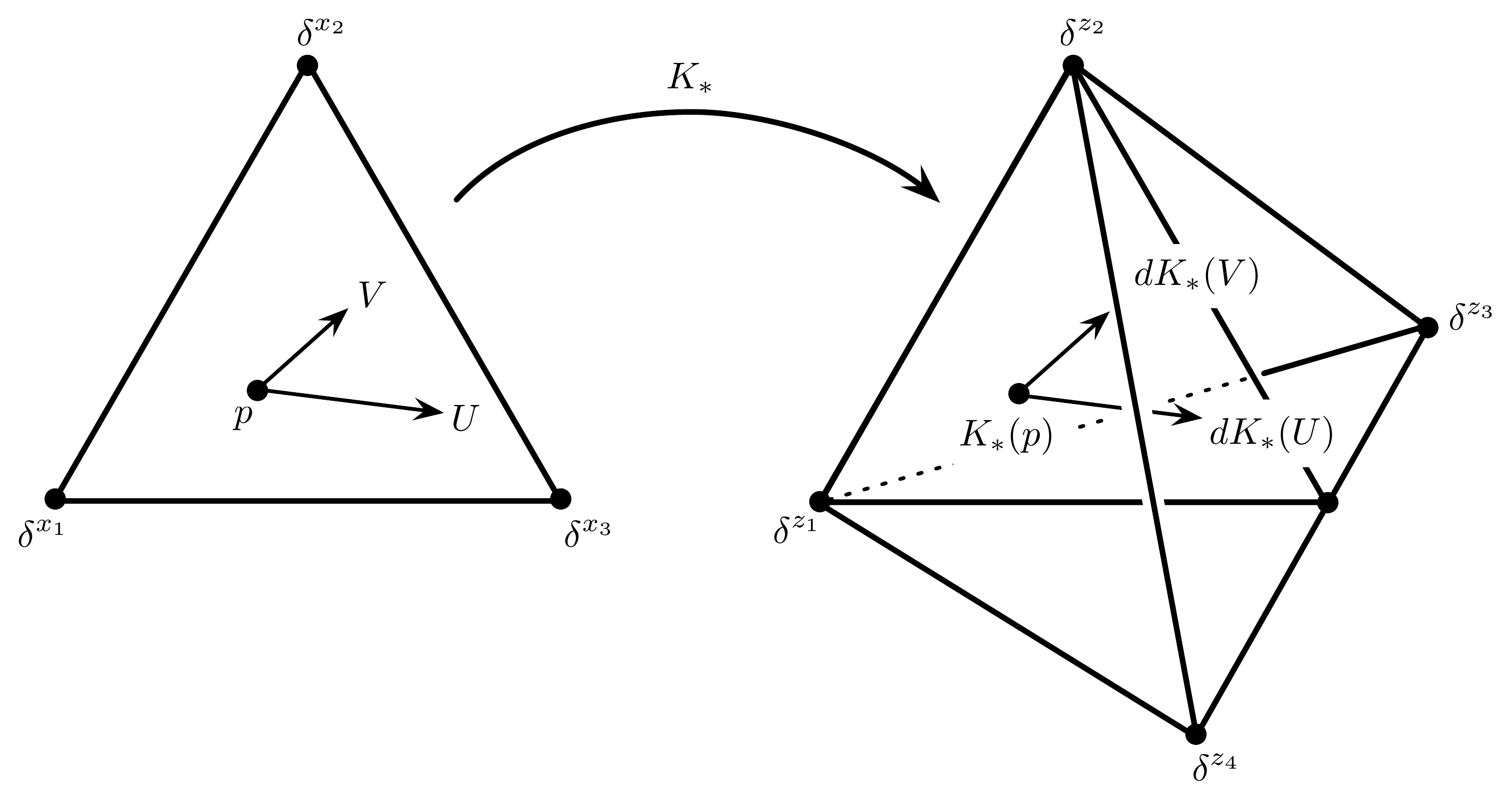}   
  \caption{Markov embedding associated with the following coarse graining $X$: $z_1 \mapsto x_1$, $z_2 \mapsto x_2$, 
  $z_3 \mapsto x_3$, $z_4 \mapsto x_3$. The inner product between $U$ and $V$ equals the inner product of 
  $dK_\ast(U)$ and $d K_\ast(V)$ (see (\ref{iso})).}
  \label{fig:MarkovEmb}
\end{figure}

In order to compute the gradient of a function on an extended space that is equivalent to the actual gradient, we want to use 
Theorem \ref{theinvargrad}. Its applicability is based on the invariance property (\ref{innprod}) of the Fisher-Rao metric with respect to 
coarse grainings.
Instances of this equivalence are given by the equations (\ref{identitaet3}) and (\ref{identitaet4}) where we considered 
two extreme cases, the full model ${\mathcal P}(\mathsf{Z})$ and the model ${\mathcal M}(K)$, respectively, which both project onto 
${\mathcal P}(\mathsf{X})$. We know that Theorem \ref{theinvargrad} also holds for all cylindrical models ${\mathcal M}$, 
including, but not restricted to, intermediate cases where ${\mathcal M}(K) \subseteq {\mathcal M} \subseteq {\mathcal P}(\mathsf{Z})$.  
How flexible are we here with the choice of the metric? 
In fact, 
a reformulation of Chentsov's uniqueness result identifies the Fisher-Rao metric as the only metric for which 
Theorem \ref{theinvargrad} holds. 

\begin{theorem} \label{chentsovreform} 
Assume that for any non-empty finite set $\mathsf{S}$, ${\mathcal P}(\mathsf{S})$ is equipped with a Riemannian metric
$g^{(\mathsf{S})}$. Then the following properties are equivalent:
\begin{enumerate} 
\item Let $X: \mathsf{Z} \to \mathsf{X}$
be a coarse graining, and consider a proper parametrisation  $\varphi: \Xi \to {\mathcal P}(\mathsf{Z})$ of  
a cylindrical model ${\mathcal M}$ in ${\mathcal P}(\mathsf{Z})$. 
Assume that $X_\ast \circ \varphi$ is a proper parametrisation of 
the model ${\mathcal M}_X := X_\ast ({\mathcal M})$ in ${\mathcal P}(\mathsf{X})$.
Then for every non-singular point $p \in {\mathcal M}$ satisfying that $X_\ast (p)$ is also a non-singular point of ${\mathcal M}_X$, 
and all smooth functions $f: {\mathcal P}(\mathsf{X}) \to {\Bbb R}$ , we have  
\begin{equation} \label{identitaet2}
      {d X_\ast}_p \left( {\rm grad}^{\mathcal M}_{p} (f \circ X_\ast) \right) \, = \, {\rm grad}^{{\mathcal M}_X}_{X_\ast (p)} f , 
\end{equation}
where the gradient on the LHS is evaluated with respect to the restriction of $g^{(\mathsf{Z})}$ 
and the RHS is evaluated with respect to the restriction of $g^{(\mathsf{X})}$.
\item There exists a positive real number $\alpha$ such that for all $\mathsf{S}$, the metric $g^{(\mathsf{S})}$ coincides with the Fisher-Rao metric multiplied by $\alpha$.
\end{enumerate}
\end{theorem}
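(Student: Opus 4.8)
The plan is to prove the two implications separately. The direction \emph{(2)} $\Rightarrow$ \emph{(1)} is essentially a repackaging of Theorems \ref{theinvargrad} and \ref{giltauch}, while \emph{(1)} $\Rightarrow$ \emph{(2)} is where Chentsov's classical uniqueness result enters, after we translate the gradient-invariance property into the isometry of Markov embeddings.

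For \emph{(2)} $\Rightarrow$ \emph{(1)}, suppose $g^{(\mathsf{S})} = \alpha \, g_{\mathrm{FR}}$ for every $\mathsf{S}$, with a fixed $\alpha > 0$, where $g_{\mathrm{FR}}$ denotes the Fisher-Rao metric. The calculation (\ref{innprod}) shows that $g_{\mathrm{FR}}$ satisfies the compatibility condition (\ref{invaria}) for the coarse-graining map $X_\ast$ on the full space ${\mathcal P}(\mathsf{Z})$; multiplying both metrics by the same constant $\alpha$ preserves this identity, so (\ref{invaria}) holds for $g^{(\mathsf{Z})}$ and $g^{(\mathsf{X})}$ as well. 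Since ${\mathcal M}$ is cylindrical (Definition \ref{cylindrical}), its tangent spaces satisfy (\ref{admissib}), which is exactly the decomposition hypothesis (\ref{conditionforsub}) of Theorem \ref{giltauch} applied with the ambient space ${\mathcal P}(\mathsf{Z})$ in the role of ${\mathcal M}$ and the model ${\mathcal M}$ in the role of the sub-model ${\mathcal L}$. Hence (\ref{invaria}) is inherited by ${\mathcal M}$, and since $X_\ast \circ \varphi$ is assumed to be a proper parametrisation of ${\mathcal M}_X$, Theorem \ref{theinvargrad} yields precisely (\ref{identitaet2}). The overall factor $1/\alpha$ produced by differentiating with respect to $\alpha \, g_{\mathrm{FR}}$ appears on both sides of (\ref{identitaet2}) and cancels.

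For \emph{(1)} $\Rightarrow$ \emph{(2)}, the idea is to feed the hypothesis into Chentsov's theorem by specialising to the Markov-embedding models. First I record the elementary fact, valid for any diffeomorphism $\psi: (M, g_M) \to (N, g_N)$, that $\psi$ is an isometry if and only if ${d \psi}_p ( {\rm grad}^{M}_{p}(f \circ \psi) ) = {\rm grad}^{N}_{\psi(p)} f$ for all smooth $f$ and all $p$. One direction is immediate; for the converse one uses that for each $X \in T_p M$ there is a smooth $f$ with ${\rm grad}^{N}_{\psi(p)} f = {d\psi}_p(X)$, whence the gradient identity and the injectivity of ${d\psi}_p$ force ${\rm grad}^{M}_{p}(f \circ \psi) = X$, and then, for arbitrary $Y \in T_p M$,
\[
{\langle X , Y \rangle}_{M} = {d(f \circ \psi)}_p (Y) = {df}_{\psi(p)}({d\psi}_p(Y)) = {\langle {d\psi}_p(X) , {d\psi}_p(Y) \rangle}_{N}.
\]
Now let $K$ be any Markov kernel coupled to a coarse graining $X$. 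As shown just before (\ref{identitaet4}), ${\mathcal M}(K)$ is cylindrical, $X_\ast({\mathcal M}(K)) = {\mathcal P}(\mathsf{X})$, and $\left. X_\ast \right|_{{\mathcal M}(K)}$ is a diffeomorphism with inverse $K_\ast$; moreover $K_\ast$ and $X_\ast \circ K_\ast = {\rm id}$ guarantee that both $\varphi$ and $X_\ast \circ \varphi$ are proper. Applying property \emph{(1)} to ${\mathcal M} = {\mathcal M}(K)$ gives the gradient identity (\ref{identitaet2}) for this diffeomorphism, so by the fact just recorded $\left. X_\ast \right|_{{\mathcal M}(K)}$ — equivalently the embedding $K_\ast$ — is an isometry with respect to the metric family $g^{(\cdot)}$. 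This is exactly the invariance of $g^{(\cdot)}$ under congruent Markov embeddings. Chentsov's theorem (\cite{Che82}; see also \cite{AJLS17}) then asserts that such a family must be a positive constant multiple of the Fisher-Rao metric, and the embeddings linking simplices of different cardinalities force the constant $\alpha$ to be shared across all $\mathsf{S}$, giving \emph{(2)}.

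The main obstacle is the reduction step in \emph{(1)} $\Rightarrow$ \emph{(2)}: recognising that the gradient-invariance property, specialised to the models ${\mathcal M}(K)$, is nothing but the statement that every Markov embedding is isometric for $g^{(\cdot)}$. The converse half of the isometry/gradient-invariance equivalence requires the small but essential observation that gradients of suitable functions realise arbitrary tangent vectors. After this, the classification of Markov-invariant metrics is the deep classical input, which I invoke rather than reprove. Some care is also needed to confirm that the ${\mathcal M}(K)$ meet \emph{all} hypotheses of property \emph{(1)} — cylindricity and properness of both parametrisations, together with non-singularity of the relevant points — but these follow from the simplex structure of ${\mathcal M}(K)$.
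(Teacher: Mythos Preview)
Your proof is correct and follows essentially the same route as the paper: for \emph{(2)} $\Rightarrow$ \emph{(1)} you combine the Fisher--Rao invariance (\ref{innprod}) with Theorem~\ref{theinvargrad}, and for \emph{(1)} $\Rightarrow$ \emph{(2)} you specialise to ${\mathcal M}(K)$, convert gradient invariance into isometry of Markov embeddings, and invoke Chentsov's uniqueness result. Your version is more explicit than the paper's in two places --- you spell out via Theorem~\ref{giltauch} why cylindricity transfers (\ref{invaria}) from ${\mathcal P}(\mathsf{Z})$ down to ${\mathcal M}$, and you supply the short argument that gradient invariance for a diffeomorphism is equivalent to it being an isometry --- but these are elaborations of the same strategy rather than a different approach.
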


\begin{proof} 
``(1) $\Rightarrow$ (2):'' We choose the particular cylindrical sub-manifold ${\mathcal M}(K)$ 
of ${\mathcal P}(\mathsf{Z})$. In this case, (\ref{identitaet2}) is equivalent to $\left. X_\ast \right|_{{\mathcal M}(K)}$ being an isometry between ${\mathcal M}(K)$ and 
${\mathcal P}(\mathsf{X})$. 
On the other hand, according to Chentsov's well-known result \citep{Che82}, this invariance characterises the Fisher-Rao metric up to a constant 
$\alpha > 0$ (see also \citep{AJLS17}).\\
``(2) $\Rightarrow$ (1):'' This follows from the invariance property (\ref{innprod}), which holds for the Fisher-Rao metric, and 
 Theorem \ref{theinvargrad} .  
\end{proof}

\subsection{Cylindrical extensions of a model} 
Throughout this section, we 
consider a model ${\mathcal M}$, together with 
a proper parametrisation ${\Bbb R}^d \supseteq \Xi \to {\mathcal P}(\mathsf{Z})$, 
$\xi \mapsto p_\xi \in {\mathcal M}$, satisfying that the composition $\xi \mapsto X_\ast (p_\xi)$
is a proper parametrisation of ${\mathcal M}_X := X_\ast ({\mathcal M})$. This ensures that all tangent spaces in non-singular points of ${\mathcal M}$ and 
${\mathcal M}_X$, respectively, can be generated in terms 
of partial derivatives with respect to the parameters $\xi_i$, $i = 1, \dots, d$.   
\medskip

We can easily construct a model 
$\widetilde{\mathcal M} \subseteq {\mathcal P}(\mathsf{Z})$ that satisfies the conditions
\begin{equation} \label{cylext}
   {\rm (a)} \;\; {\mathcal M} \subseteq \widetilde{\mathcal M}, \qquad 
   {\rm (b)} \;\; X_\ast({\mathcal M}) = X_\ast(\widetilde{\mathcal M}), \qquad \mbox{and} \qquad {\rm (c)} \;\; \mbox{$\widetilde{\mathcal M}$ is cylindrical}. 
\end{equation}
We refer to such a model as a {\em cylindrical extension of ${\mathcal M}$\/}. Before we come to the explicit construction of cylindrical extensions, let us first 
demonstrate their direct use for relating the respective natural gradients to each other.   
Given a non-singular
point $p \in {\mathcal M}$ that is also non-singular in $\widetilde{\mathcal M}$ and has a non-singular projection $X_\ast(p)$,   
we can decompose the tangent space $T_p \widetilde{\mathcal M}$ into the sum $T_p {\mathcal M} \oplus T^\perp_p {\mathcal M}$, where 
the second summand is the orthogonal complement of the first one in $T_p \widetilde{\mathcal M}$. 
We can use this decomposition in order to relate the natural gradient of a smooth function $f$ defined on the projected model ${\mathcal M}_X$ 
to the natural gradient of $f \circ X_\ast$: 
\begin{eqnarray}
    {\rm grad}^{{\mathcal M}_X}_{X_\ast(p)} f 
       & \stackrel{{d X_\ast}_p}{\leftarrow} &  
                  {\rm grad}_p^{\widetilde{\mathcal M}} (f \circ X_\ast) \nonumber \\ 
       & = &  {\rm grad}_p^{\top} (f \circ X_\ast) +  {\rm grad}_p^{\bot} (f \circ X_\ast) \nonumber \\
       & = & {\rm grad}_p^{\mathcal M} (f \circ X_\ast) +  {\rm grad}_p^{\bot} (f \circ X_\ast). \label{generalis}
\end{eqnarray}
(Here ``$\top$'' stands for the projection onto $T_p {\mathcal M}$ and ``$\bot$'' stands for the projection onto the corresponding orthogonal complement in 
$T_p \widetilde{\mathcal M}$.) The difference between the natural gradient on the full model ${\mathcal M}$ and the natural gradient on the 
coarse grained model ${\mathcal M}_X$ is given by 
${\rm grad}^{\bot} (f \circ X_\ast)$ which vanishes when ${\mathcal M}$ itself is already cylindrical. 
Thus, the equality (\ref{generalis}) generalises (\ref{identitaet2}). 

\paragraph{The product extension I}  
Given a non-singular point $p_\xi = \sum_{z} p(z;\xi) \, \delta^z$ of ${\mathcal M}$, the tangent space in $p_\xi$ is spanned by 
\begin{equation} \label{tangentvec}
    \partial_i(\xi) \, := \, \sum_{z \in \mathsf{Z}} \frac{\partial p(z; \cdot)}{\partial \xi_i} (\xi) \, \delta^z \, = \, 
    \sum_{z \in \mathsf{Z}} p(z; \xi) \frac{\partial \ln p(z; \cdot)}{\partial \xi_i} (\xi) \, \delta^z, \qquad i = 1, \dots,  d.  
\end{equation}
Now, consider the projection of $p_\xi$ onto ${\mathcal P}(\mathsf{X})$ in terms of $X_\ast$, that is 
$X_\ast (p_\xi) = \sum_{x \in \mathsf{X}}  p(x;\xi) \, \delta^x$ where $p(x;\xi) = \sum_{z \in \mathsf{Z}_x} p(z; \xi)$. 
Assuming that this projected point is a non-singular point of ${\mathcal M}_X = X_\ast ({\mathcal M})$, 
the corresponding tangent space $T_{X_\ast (p_\xi)} {\mathcal M}_X$ is spanned by 
\begin{eqnarray} 
    \bar{\partial}^{\mathcal H}_i(\xi) 
    & := & {d X_\ast}_\xi (\partial_i(\xi)) \; = \;  \sum_{x \in \mathsf{X}}  \frac{\partial p(x; \cdot)}{\partial \xi_i}(\xi) \, \delta^x \nonumber \\
     & = & \sum_{x \in \mathsf{X}} p(x; \xi) \frac{\partial \ln p(x; \cdot)}{\partial \xi_i}(\xi) \, \delta^x , \qquad i = 1,\dots, d.   \label{horiz}
\end{eqnarray}
In addition to the described projection of $p_\xi$ onto the ``horizontal'' space, leading to ${\mathcal M}_X$, 
we can also project  it onto the ``vertical'' space. In order to do so, we define a Markov kernel $K_\xi = \sum_{x,z} p(z |x ; \xi) \, \delta^z \otimes e_x$:
\begin{equation} \label{condzx}
     p(z | x; \xi) :=
     \left\{ 
        \begin{array}{c@{,\quad}l}
           \frac{p(z ; \xi)}{p(x ; \xi)} & \mbox{if $X(z) = x$} \\ 
           0                       & \mbox{otherwise} .
        \end{array}
     \right.
\end{equation}
We denote the image of the map $\xi \mapsto K_\xi$ by ${\mathcal M}_{Z|X} \subseteq {\mathcal K}(\mathsf{Z} | \mathsf{X})$, and assume that $K_\xi$ is a 
non-singular point of ${\mathcal M}_{Z|X}$. 
The corresponding tangent vectors in $K_\xi$ are given by 
\begin{eqnarray} 
    \bar{\partial}^{\mathcal V}_i(\xi)
    & := & \sum_{x, z}  \frac{\partial p(z |x ; \cdot)}{\partial \xi_i}(\xi) \, \delta^z \otimes e_x \nonumber \\
    &  = & \sum_{x, z} p(z | x; \xi) \frac{\partial \ln p(z | x; \cdot)}{\partial \xi_i}(\xi) \,   \delta^z \otimes e_x ,  \qquad i = 1,\dots, d.  \label{vert}
\end{eqnarray}
Note that for all three sets of vectors, $\partial_i(\xi)$, $ \bar{\partial}^{\mathcal H}_i(\xi)$, and 
$ \bar{\partial}^{\mathcal V}_i(\xi)$, $i = 1,\dots,d$, linear independence is not required.  
In fact, it is important to include overparametrised systems into the analysis, where linear independence is not given. 
\medskip

Now, we can define the {\em product extension\/} $\widetilde{\mathcal M}^{I}$  of ${\mathcal M}$  
as follows: for each pair $({\xi}, {\xi'}) \in \Xi \times \Xi$, we define $p_{\xi,\xi'} = p(\cdot ; \xi, \xi')$ as 
\begin{eqnarray} 
      \sum_z p(z; \xi, \xi') \, \delta^z 
         & :=  & \sum_{x} \sum_{z \in \mathsf{Z}_x} \left[ p(z ; \xi) + p(x ; \xi) \big(  p(z | x ; \xi') -  p(z | x ; \xi) \big) \right] \, \delta^z \nonumber \\
         &  =  & \sum_{x} \sum_{z \in \mathsf{Z}_x}  p(x ; \xi) \, p(z | x ; \xi') \, \delta^z . \label{firstex}
\end{eqnarray}
The product extension is then simply the set of all points that can be obtained in this way. Obviously, ${\mathcal M}$ consists of those points in 
$\widetilde{\mathcal M}^{I}$ that are given by identical parameters, that is $\xi = \xi'$, which proves (\ref{cylext}) (a).
Furthermore, $X_\ast(p_{\xi,\xi'}) = X_\ast(p_{\xi})$, and therefore this extension has the same projection as the original model ${\mathcal M}$ 
so that (\ref{cylext}) (b) is satisfied. 
The last requirement for $\widetilde{\mathcal M}^{I}$ to be a cylindrical extension of ${\mathcal M}$,  (\ref{cylext}) (c), will be proven below in 
Proposition \ref{horvert}. We obtain the tangent space by taking the derivatives with respect to $\xi_1,\dots,\xi_d$ and $\xi'_1,\dots, \xi'_d$, respectively:
\begin{eqnarray}
    \partial^{\mathcal H}_i (\xi, \xi') 
       & := &  \frac{\partial}{\partial \xi_i} \sum_{z}  p(z; \xi, \xi') \, \delta^z \nonumber \\
       & = &  \sum_{z} p(z; \xi, \xi')  \, \frac{\partial  \ln p(z; \cdot, \xi')}{\partial \xi_i}  (\xi) \, \delta^z \nonumber \\
       & = &  \sum_x \sum_{z \in \mathsf{Z}_x} p(z; \xi, \xi')  \, \frac{\partial  \ln p(x ; \cdot)}{\partial \xi_i}  (\xi) \, \delta^z \nonumber \\
       & = & \sum_{x} \sum_{z \in \mathsf{Z}_x} p(x; \xi) \, p(z | x ; \xi' ) \, \frac{\partial  \ln p(x; \cdot)}{\partial \xi_i}  (\xi) \, \delta^z \nonumber \\
       & = & \sum_{x}  p(x; \xi) \, \frac{\partial  \ln p(x; \cdot)}{\partial \xi_i}  (\xi) \, \left( \sum_{z \in \mathsf{Z}_x} p(z | x ; \xi' ) \, \delta^z \right), \qquad i = 1,\dots,d. 
       \label{hori}
\end{eqnarray}
A comparison with (\ref{horiz}) shows that we have a natural isometric correspondence 
\begin{equation} \label{correspondence1}
    \partial^{\mathcal H}_i (\xi, \xi') \longleftrightarrow \bar{\partial}^{\mathcal H}_i (\xi), \qquad i = 1,\dots, d ,  
\end{equation}
by mapping $\delta^x$ to $\sum_{z \in \mathsf{Z}_x} p(z | x ; \xi' ) \, \delta^z$ (this map is given by the Markov embedding discussed above; see also 
Figure \ref{fig:MarkovEmb}). 
Now we consider the vertical directions:
\begin{eqnarray}
    \partial^{\mathcal V}_i (\xi, \xi') 
       & := & \frac{\partial}{\partial \xi_i '} \sum_{z}  p(z; \xi, \xi') \, \delta^z \nonumber \\  
       & = &  \sum_{z} p(z; \xi, \xi')  \, \frac{\partial  \ln p(z; \xi, \cdot )}{\partial \xi_i'}  (\xi') \, \delta^z \nonumber \\
       & = &  \sum_x \sum_{z \in \mathsf{Z}_x} p(z; \xi, \xi')  \, \frac{\partial  \ln p(z | x ; \cdot)}{\partial \xi_i'}  (\xi') \, \delta^z \nonumber \\
       & = & \sum_{x} \sum_{z \in \mathsf{Z}_x} p(x; \xi) \, p(z | x ; \xi' ) \, \frac{\partial  \ln  p(z | x ; \cdot) }{\partial \xi_i'}  (\xi') \, \delta^z, \qquad i = 1,\dots, d.
       \label{verti}
\end{eqnarray}
A comparison with (\ref{vert}) shows that we also have a natural correspondence 
\begin{equation} \label{correspondence2}
 \partial^{\mathcal V}_i (\xi, \xi') \longleftrightarrow \bar{\partial}^{\mathcal V}_i (\xi'), \qquad i = 1, \dots, d, 
\end{equation}
by mapping $\delta^z \otimes e_x$ to $p(x ; \xi) \, \delta^z$, in addition to the above-mentioned correspondence (\ref{correspondence1}). 
This proves that $(\xi,\xi') \mapsto p_{\xi,\xi'}$ is a proper parametrisation of $\widetilde{\mathcal M}^I$. 
The situation is illustrated in Figure \ref{fig:correspondence}. 

\begin{figure}[h]        
  \centering
       \includegraphics[width=8cm]{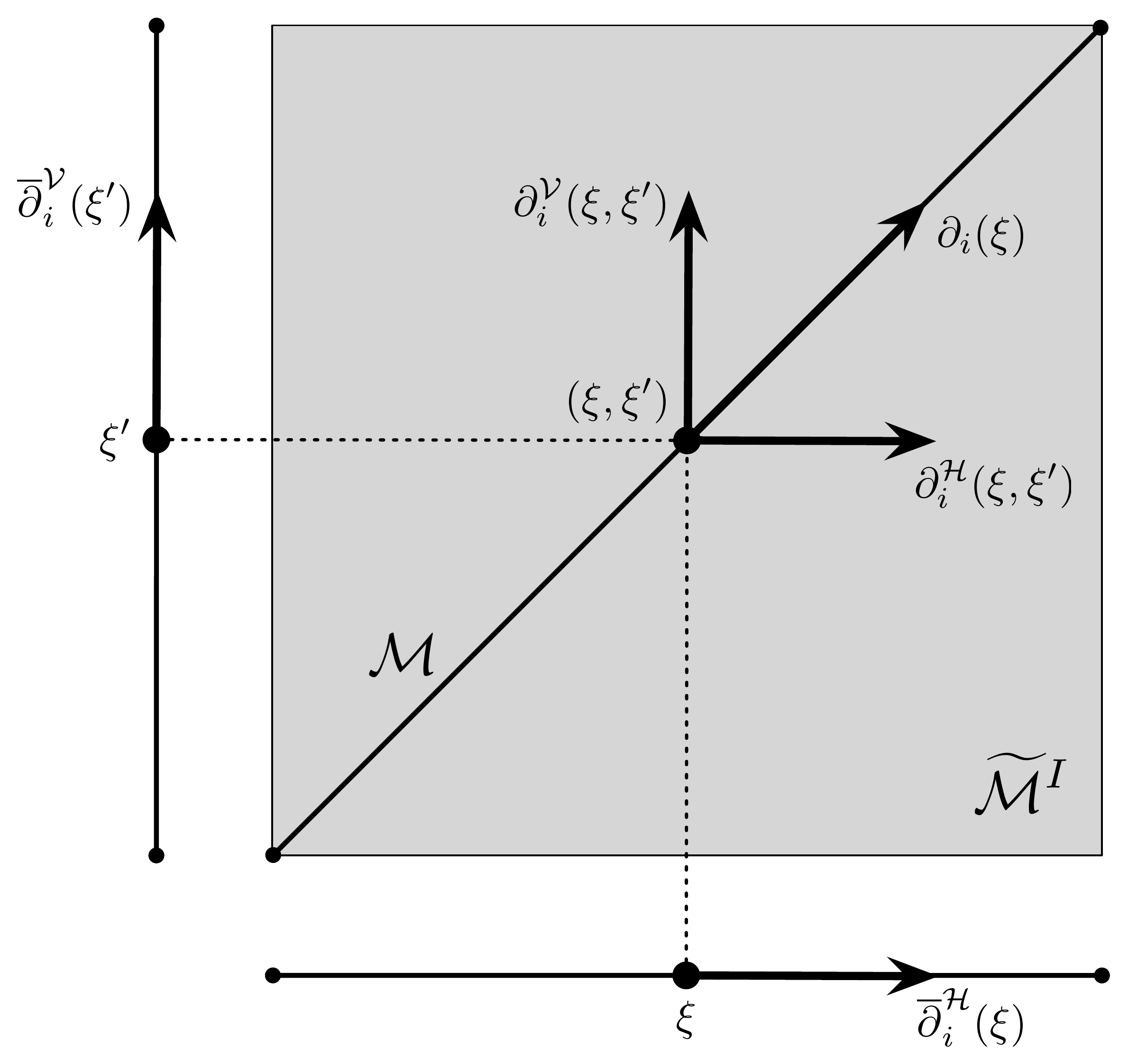}   
  \caption{Extension of ${\mathcal M}$ to the cylindrical model $\widetilde{\mathcal M}^{I}$, with the corresponding tangent vectors.}  
  \label{fig:correspondence} 
\end{figure}
\medskip

Now we consider the natural Fisher-Rao metric on $\widetilde{\mathcal M}^{I} \subseteq {\mathcal P}(\mathsf{Z})$ in $(\xi,\xi')$, assuming that 
all points associated with $(\xi,\xi')$ are non-singular. 
It follows from Proposition \ref{horvert} below that ${\langle \partial_i^{\mathcal H} (\xi, \xi'),  \partial_j^{\mathcal V}(\xi,\xi') \rangle}_{\xi,\xi'} = 0$ for all $i,j$,  
where ${\langle \cdot, \cdot \rangle}_{\xi,\xi'}$ denotes the Fisher-Rao metric.    
For the inner products of the horizontal vectors we obtain
\begin{eqnarray}
  g^{\mathcal H}_{ij} (\xi,\xi') 
  & := & {\left\langle  \partial^{\mathcal H}_i (\xi, \xi'),  \partial^{\mathcal H}_j (\xi, \xi') \right\rangle}_{\xi,\xi'} \nonumber \\
  & =  & \sum_x  p(x; \xi) \sum_{z \in \mathsf{Z}_x} p(z | x ; \xi') \, \frac{\partial \ln p(x; \cdot)}{\partial \xi_i} (\xi) \, \frac{\partial \ln p(x; \cdot)}{\partial \xi_j} (\xi) \nonumber \\
  & = &  \sum_x p(x; \xi) \, \frac{\partial \ln p(x; \cdot)}{\partial \xi_i} (\xi) \, \frac{\partial \ln p(x; \cdot)}{\partial \xi_j} (\xi) \nonumber \\
  & = &  {\left\langle  \overline{\partial}^{\mathcal H}_i (\xi),  \overline{\partial}^{\mathcal H}_j (\xi) \right\rangle}_{\xi} . \label{origfush}
\end{eqnarray}
In particular, these inner products do not depend on $\xi'$. The inner products of the vertical vectors are given by
\begin{eqnarray}
  g^{\mathcal V}_{ij} (\xi,\xi') 
     & := & {\left\langle  \partial^{\mathcal V}_i (\xi, \xi'),  \partial^{\mathcal V}_j (\xi, \xi') \right\rangle}_{\xi,\xi'} \nonumber \\
     & = & \sum_x  p(x; \xi) \sum_{z \in \mathsf{Z}_x} p(z | x ; \xi') \, 
     \frac{\partial \ln p(z | x; \cdot)}{\partial \xi_i'} (\xi') \, \frac{\partial \ln p(z | x; \cdot)}{\partial \xi_j'} (\xi'). 
\end{eqnarray} 
This defines two matrices, 
$G^{\mathcal H}(\xi) = {(g^{\mathcal H}_{ij}(\xi))}_{1 \leq i,j \leq d}$ and 
$G^{\mathcal V}(\xi,\xi') = {(g^{\mathcal V}_{ij}(\xi, \xi'))}_{1 \leq i,j \leq d}$, 
and the Fisher-Rao metric with respect to the product coordinate system
is a block matrix 
\[
    \widetilde{G}(\xi,\xi') = 
    \left( 
       \begin{array}{cc} 
         G^{\mathcal H}(\xi) & 0 \\
         0 & G^{\mathcal V}(\xi,\xi')
       \end{array}
    \right) .
\] 
In order to compute the gradient of a function 
$\widetilde{f} : \widetilde{\mathcal M}^{I} \to {\Bbb R}$, we have to consider the pseudoinverse of 
$\widetilde{G}(\xi,\xi')$, and, with the Euclidean gradient $\nabla_{\xi,\xi'} \widetilde{f} = (\nabla_\xi \widetilde{f}, \nabla_{\xi'} \widetilde{f})$, we have 
\begin{equation} \label{grd}
    {\rm grad}_{\xi,\xi'} \widetilde{f} \; = \; \widetilde{G}^{+}(\xi, \xi') \nabla_{\xi,\xi'} \widetilde{f} = 
    \left( 
       \begin{array}{cc} 
         {G^{\mathcal H}}^{+}(\xi) & 0 \\
         0 & {G^{\mathcal V}}^{+}(\xi,\xi') 
       \end{array}
    \right)
    \left(   
       \begin{array}{c} 
       \nabla_\xi \widetilde{f} \\
       \nabla_{\xi'} \widetilde{f}
       \end{array}
    \right) .
\end{equation}
Now we assume $\widetilde{f} = f \circ X_\ast$, where $f$ is a function defined on the model 
$X_\ast(\widetilde {\mathcal M}) = X_\ast({\mathcal M}) = {\mathcal M}_X$. 
This implies that it only depends on the horizontal variable $\xi$: $\widetilde{f} (p_{\xi,\xi'}) = (f \circ X_\ast) (p_{\xi, \xi'}) = 
f \left( X_\ast (p_{\xi, \xi'})\right) = f(p_\xi)$, and we have  
$\nabla_{\xi'} \widetilde{f} = 0$ and $\nabla_{\xi} \widetilde{f} = \nabla_{\xi} {f}$. With (\ref{grd}), we obtain 
\begin{equation} \label{original}
     {\rm grad}_{\xi,\xi'} (f \circ X_\ast)  \; = \; 
        \left( 
           \begin{array}{c} 
              {G^{\mathcal H}}^{+}(\xi) \nabla_{\xi} f \\
              0      
           \end{array}
        \right) .
\end{equation}
This is a confirmation of our more general result
that the natural gradient on the extended space equals the natural gradient on the reduced space, if the model in the extended space is cylindrical 
(see Theorem \ref{chentsovreform}). However, equation (\ref{original}) does not imply any simplification of the problem, because $G^{\mathcal H}(\xi)$ equals the 
original Fisher information matrix defined for the reduced space ${\mathcal M}_X$ and does not necessarily have a block structure (see equation (\ref{origfush})). 
Assuming that the Fisher information matrix $G(\xi)$ on the full model ${\mathcal M}$ has a block structure, we can try to exploit this structure within its  
product extension $\widetilde{\mathcal M}^I$. For this, note that the tangent vectors (\ref{tangentvec}) of ${\mathcal M}$ in $p_\xi$ can be expressed as 
\begin{equation} \label{tangvecor}
   \partial_i(\xi) \, = \,  \partial^{\mathcal H}_i (\xi) + \partial^{\mathcal V}_i (\xi, \xi) \, \in \, T_\xi {\mathcal M}, \qquad i = 1, \dots, d. 
\end{equation}
This implies $G(\xi) = G^{\mathcal H}(\xi) + G^{\mathcal V} (\xi , \xi)$, and therefore, according to (\ref{original}), we have to invert 
$G^{\mathcal H}(\xi) = G(\xi) - G^{\mathcal V}(\xi,\xi)$, a difference of two matrices where the first one has a block structure and the second one does not. 
This shows 
that the block structure of $G(\xi)$ is not sufficient for the simplification of the problem.  
In what follows, we modify the product extension $\widetilde{\mathcal M}^I$ and open up the possibility for simplification. 
The main idea here parallels the idea of introducing a recognition model, in addition to the generative model, as we did in the context of the wake-sleep 
algorithm in Section \ref{wsa}.

\paragraph{The product extension II} We now generalise the first product extension and replace (\ref{firstex}) by $p_{\xi, \eta} = p(\cdot; \xi, \eta)$ where 
\begin{eqnarray*}
      \sum_z p(z; \xi, \eta) \, \delta^z 
         & :=  & \sum_{x} \sum_{z \in \mathsf{Z}_x} \left[ p(z ; \xi) + p(x ; \xi) \big(  q(z | x ; \eta) -  p(z | x ; \xi) \big) \right] \, \delta^z \\
         &  =  & \sum_{x} \sum_{z \in \mathsf{Z}_x}  p(x ; \xi) \, q(z | x ; \eta) \, \delta^z ,
\end{eqnarray*}
where we denote by $q$ the elements of a model ${\mathcal L}_{Z|X}$ that is properly parametrised by 
$\eta = (\eta_1,\dots,\eta_{d'}) \in {\rm H} \subseteq {\Bbb R}^{d'}$ and contains the model 
${\mathcal M}_{Z|X}$. 
That is, for each $\xi$ there is a $\eta = \eta(\xi)$ such that $p(z | x ; \xi) = q(z | x ; \eta)$. This is closely related to the recognition model 
discussed in Section \ref{wsa}.

Consider a pair $(\xi, \eta) \in \Xi \times {\rm H}$ so that all points associated with it are non-singular points of the respective models. 
For the horizontal and vertical vectors we obtain, analogous to (\ref{hori}) and (\ref{verti}), 
\begin{eqnarray}
     \partial_i^{\mathcal H}(\xi, \eta)  
      & = &  \sum_{x}  p(x; \xi) \, \frac{\partial  \ln p(x; \cdot)}{\partial \xi_i}  (\xi) \, \left( \sum_{z \in \mathsf{Z}_x} q(z | x ; \eta ) \, \delta^z \right), \qquad i = 1,\dots, d,\\
     \partial_i^{\mathcal V}(\xi , \eta) 
     & = & \sum_{x} \sum_{z \in \mathsf{Z}_x} p(x; \xi) \, q(z | x ; \eta ) \, \frac{\partial  \ln  q(z | x ; \cdot) }{\partial \eta_i}  (\eta) \, \delta^z , \qquad i = 1,\dots, d'.   
\end{eqnarray}
Furthermore,
\begin{eqnarray}
    \left\langle  \partial_i^{\mathcal H}(\xi, \eta)  ,  \partial_j^{\mathcal V}(\xi, \eta)  \right\rangle_{\xi, \eta} 
       & = & 0 , \\
    \left\langle  \partial_i^{\mathcal H}(\xi, \eta)  ,  \partial_j^{\mathcal H}(\xi, \eta)  \right\rangle_{\xi, \eta}     
       & = & \sum_{x} p(x ; \xi) \, \frac{\partial \ln p(x ; \cdot)}{\partial \xi_i} (\xi) \, \frac{\partial \ln p(x ; \cdot)}{\partial \xi_j} (\xi) , \\
    \left\langle  \partial_i^{\mathcal V}(\xi, \eta)  ,  \partial_j^{\mathcal V}(\xi, \eta)  \right\rangle_{\xi, \eta}     
       & = & \sum_{x} p(x ; \xi) \sum_{z \in \mathsf{Z}_x} \, \frac{\partial \ln q(z | x ; \cdot)}{\partial \eta_i} (\eta) \, \frac{\partial \ln q(z | x ; \cdot)}{\partial \eta_j } (\eta) .    
\end{eqnarray}
\begin{figure}[h]        
  \centering
       \includegraphics[width=8cm]{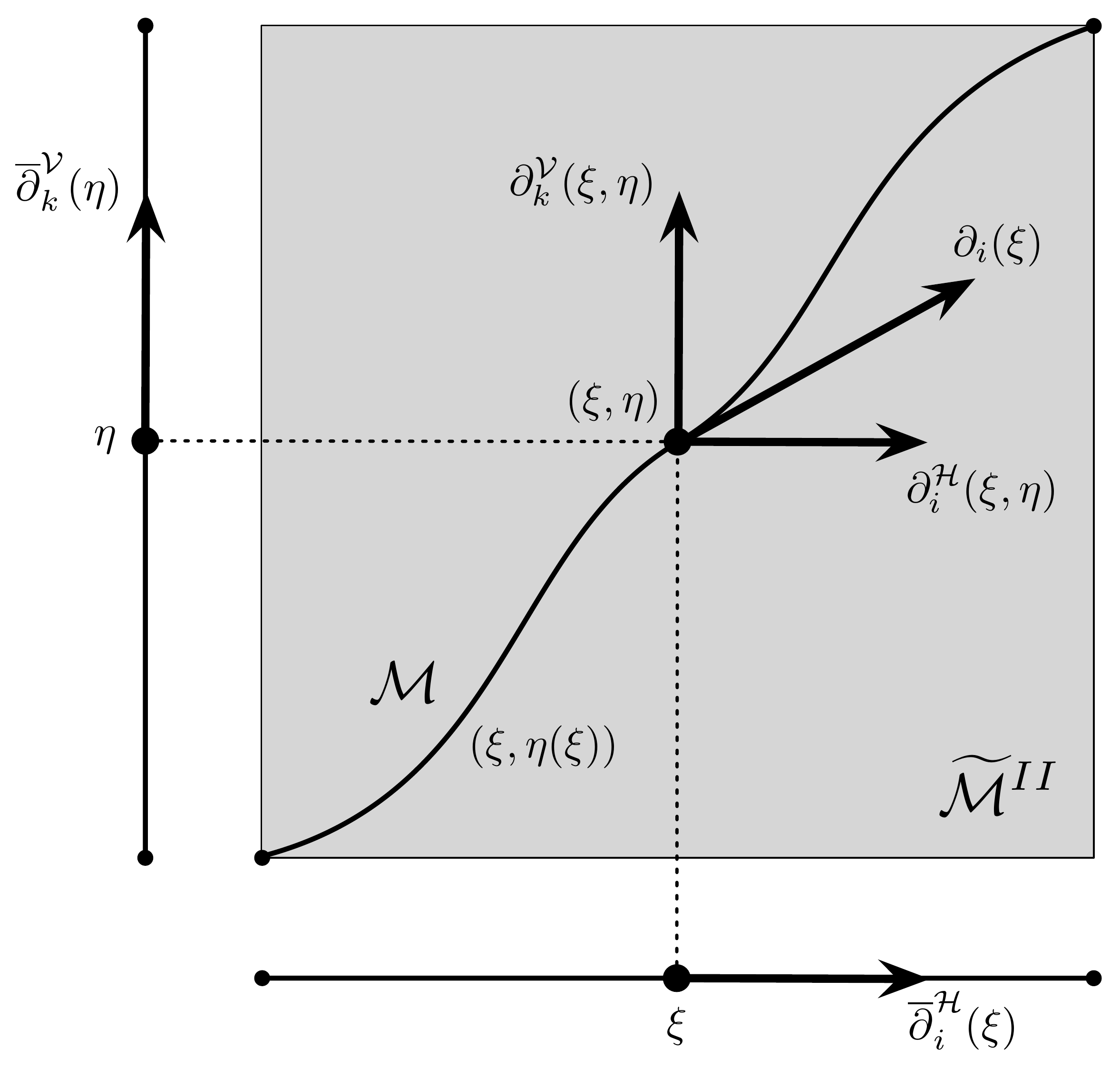}   
  \caption{Extension of ${\mathcal M}$ to the cylindrical model $\widetilde{\mathcal M}^{II}$, with the corresponding tangent vectors.}  
  \label{fig:correspondence2} 
\end{figure}
For the gradient of a function $f$ on ${\mathcal M}_X$, we obtain the same formula as (\ref{original}). However, with the second product extension we can choose the 
model ${\mathcal L}_{Z|X}$ to be larger than ${\mathcal M}_{Z|X}$. This provides a way to simplify $G^{\mathcal H}(\xi)$ in (\ref{original}). 
In order to be more explicit, consider the natural embedding of ${\mathcal M}$ into $\widetilde{\mathcal M}^{II}$, 
\begin{equation}
   \xi \; \; \mapsto \; \; \sum_z p(z ; \xi, \eta (\xi)) \, \delta^z . 
\end{equation}     
For the tangent vectors we now obtain
\begin{eqnarray*}
   \partial_i(\xi) 
      & = & \sum_z p(z ; \xi, \eta(\xi)) \, \frac{\partial \ln p(z ; \cdot , \eta(\cdot))}{\partial \xi_i} (\xi) \, \delta^z \\
      & = & \sum_x \sum_{z \in \mathsf{Z}_x} 
                p(x ; \xi) \, q(z | x ; \eta(\xi)) \, \left[ \frac{\partial \ln p(x ; \cdot )}{\partial \xi_i} (\xi) + 
                \frac{\partial \ln q(z | x ; \eta (\cdot))}{\partial \xi_i} (\xi) \right] \, \delta^z \\
   & = & \sum_x \sum_{z \in \mathsf{Z}_x} 
                p(x ; \xi) \, q(z |x ; \eta (\xi)) \, \frac{\partial \ln p(x ; \cdot )}{\partial \xi_i} (\xi)  \, \delta^z  \\
   & &             +    
               \sum_x \sum_{z \in \mathsf{Z}_x} p(x ; \xi) \, 
               q(z |x ; \eta(\xi)) \sum_k \frac{\partial \ln q(z | x ; \cdot )}{\partial \eta_k} (\eta (\xi)) \, \frac{\partial \eta_k}{\partial \xi_i} (\xi) \, \delta^z \\
   & &  \mbox{(by the chain rule)} \nonumber \\          
      & = & \partial_i^{\mathcal H}(\xi , \eta(\xi)) + \sum_{k} \frac{\partial \eta_k}{\partial \xi_i} (\xi) \, \partial_k^{\mathcal V}(\xi , \eta (\xi)) . 
\end{eqnarray*}
This derivation generalises the equation (\ref{tangvecor}). For the Fisher information matrix $G(\xi) = (g_{ij}(\xi))_{1\leq i, j \leq d}$ we obtain
\begin{eqnarray*}
    g_{ij}(\xi) & = &  \left\langle \partial_i (\xi) , \partial_j(\xi) \right\rangle_{\xi} \\
        & = & \left\langle \partial_i^{\mathcal H}(\xi , \eta (\xi)) , \partial_j^{\mathcal H}(\xi , \eta (\xi)) \right\rangle_{\xi} + 
                 \sum_{k,l} \frac{\partial \eta_k}{\partial \xi_i} (\xi) \, \frac{\partial \eta_l}{\partial \xi_j} (\xi) 
                 \left\langle \partial_k^{\mathcal V}(\xi , \eta(\xi)) , \partial_l^{\mathcal V}(\xi , \eta (\xi)) \right\rangle_{\xi} \\
       & = & g^{\mathcal H}_{ij} (\xi) + \sum_{k,l} \frac{\partial \eta_k}{\partial \xi_i} (\xi) \, \frac{\partial \eta_l}{\partial \xi_j} (\xi) \, g^{\mathcal V}_{kl}(\xi, \eta (\xi)) .          
\end{eqnarray*}
Thus, we can insert  
\begin{equation} \label{diffcompl}
    g^{\mathcal H}_{ij} (\xi)  \; = \;  g_{ij}(\xi) - \sum_{k,l} \frac{\partial \eta_k}{\partial \xi_i} (\xi) \, \frac{\partial \eta_l}{\partial \xi_j} (\xi) \, g^{\mathcal V}_{kl}(\xi, \eta(\xi)).  
\end{equation}
into equation (\ref{original}). At first sight, this does not appear to simplify the problem. However, as we will outline in the next section, 
it suggests conditions for both, the generative model as well as the recognition model, that would be sufficient for a simplification of 
$G^{\mathcal H}(\xi)$. These conditions involve locality properties, as we studied in Section \ref{bnn}, but also an appropriate coupling between the two models.  
\medskip

We now prove that the second product extension, and thereby also the first one, are indeed cylindrical extensions of ${\mathcal M}$.  

\begin{proposition} \label{horvert}
The product extensions $\widetilde{\mathcal M}^{II}$ and, as a special case, $\widetilde{\mathcal M}^I$ are cylindrical extensions of ${\mathcal M}$. 
More precisely, we have 
\begin{eqnarray}
     T_{\xi,\eta} \widetilde{\mathcal M}^{II} \cap {\mathcal H}_{\xi,\eta} 
     & = &  {\rm span} \left\{ \partial^{\mathcal H}_i(\xi,\eta) \, : \, i = 1,\dots, d \right\} \label{eq1} \\
      T_{\xi,\eta} \widetilde{\mathcal M}^{II} \cap {\mathcal V}_{\xi,\eta} 
     & = &  {\rm span} \left\{ \partial^{\mathcal V}_i(\xi,\eta) \, : \, i = 1,\dots, d' \right\} \label{eq2} \\
     T_{\xi,\eta} \widetilde{\mathcal M}^{II} 
     & = & \left( T_{\xi,\eta} \widetilde{\mathcal M}^{II} \cap {\mathcal H}_{\xi,\eta} \right) +   
               \left( T_{\xi,\eta} \widetilde{\mathcal M}^{II} \cap {\mathcal V}_{\xi,\eta} \right) . \label{eq3}
\end{eqnarray}
\end{proposition}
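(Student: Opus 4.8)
The plan is to reduce all three identities to two elementary membership statements together with the fact that ${\mathcal H}_{\xi,\eta}$ and ${\mathcal V}_{\xi,\eta}$ are orthogonal complements, so that ${\mathcal H}_{\xi,\eta} \cap {\mathcal V}_{\xi,\eta} = \{0\}$. First I would record the structure of the base point $p = p_{\xi,\eta}$. Since $p(z) = p(x;\xi)\, q(z | x;\eta)$ for $z \in \mathsf{Z}_x$ and $q(\cdot | x;\eta)$ is normalised, the marginal is $p(x) = \sum_{z \in \mathsf{Z}_x} p(z) = p(x;\xi)$ and the conditional is $p(z)/p(x) = q(z | x;\eta)$. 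These two identifications are exactly what let me match the abstract descriptions of ${\mathcal V}_p$ in (\ref{vertik}) and of ${\mathcal H}_p = {\mathcal V}_p^{\perp}$ against the explicit tangent vectors $\partial_i^{\mathcal V}(\xi,\eta)$ and $\partial_i^{\mathcal H}(\xi,\eta)$.

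Next I would establish the two membership facts. To see that $\partial_i^{\mathcal V}(\xi,\eta) \in {\mathcal V}_{\xi,\eta}$, by the characterisation (\ref{vertik}) it suffices to check that its coefficients sum to zero on each atom $\mathsf{Z}_x$; summing over $z \in \mathsf{Z}_x$ produces $p(x;\xi) \sum_{z \in \mathsf{Z}_x} \frac{\partial q(z | x;\cdot)}{\partial \eta_i}(\eta)$, which vanishes upon differentiating the normalisation $\sum_{z \in \mathsf{Z}_x} q(z | x;\eta) = 1$. To see that $\partial_i^{\mathcal H}(\xi,\eta) \in {\mathcal H}_{\xi,\eta}$, I would use $p(z)/p(x) = q(z | x;\eta)$ to rewrite this vector in the canonical horizontal form $\sum_x \frac{U(x)}{p(x)} \sum_{z \in \mathsf{Z}_x} p(z)\, \delta^z$ with $U(x) = \frac{\partial p(x;\cdot)}{\partial \xi_i}(\xi)$; the side condition $\sum_x U(x) = 0$ then holds because $\sum_x U(x) = \frac{\partial}{\partial \xi_i} \sum_x p(x;\xi) = \frac{\partial}{\partial \xi_i}(1) = 0$. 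As a byproduct this also yields the orthogonality $\langle \partial_i^{\mathcal H}(\xi,\eta), \partial_j^{\mathcal V}(\xi,\eta) \rangle_{\xi,\eta} = 0$ asserted earlier, since ${\mathcal H}_{\xi,\eta}$ and ${\mathcal V}_{\xi,\eta}$ are orthogonal.

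It then remains to pass from inclusions to equalities. Writing $W^{\mathcal H} := {\rm span}\{\partial_i^{\mathcal H}(\xi,\eta) : i = 1,\dots,d\}$ and $W^{\mathcal V} := {\rm span}\{\partial_i^{\mathcal V}(\xi,\eta) : i = 1,\dots,d'\}$, the two membership facts give $W^{\mathcal H} \subseteq T_{\xi,\eta}\widetilde{\mathcal M}^{II} \cap {\mathcal H}_{\xi,\eta}$ and $W^{\mathcal V} \subseteq T_{\xi,\eta}\widetilde{\mathcal M}^{II} \cap {\mathcal V}_{\xi,\eta}$. For the reverse inclusion in (\ref{eq1}), I would take $v \in T_{\xi,\eta}\widetilde{\mathcal M}^{II} \cap {\mathcal H}_{\xi,\eta}$ and use that the parametrisation $(\xi,\eta) \mapsto p_{\xi,\eta}$ makes $T_{\xi,\eta}\widetilde{\mathcal M}^{II} = W^{\mathcal H} + W^{\mathcal V}$, so that $v = v^{\mathcal H} + v^{\mathcal V}$ with $v^{\mathcal H} \in W^{\mathcal H} \subseteq {\mathcal H}_{\xi,\eta}$ and $v^{\mathcal V} \in W^{\mathcal V} \subseteq {\mathcal V}_{\xi,\eta}$; then $v^{\mathcal V} = v - v^{\mathcal H} \in {\mathcal H}_{\xi,\eta} \cap {\mathcal V}_{\xi,\eta} = \{0\}$, whence $v = v^{\mathcal H} \in W^{\mathcal H}$. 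This proves (\ref{eq1}); the symmetric argument proves (\ref{eq2}); and (\ref{eq3}) is then immediate, since $T_{\xi,\eta}\widetilde{\mathcal M}^{II} = W^{\mathcal H} + W^{\mathcal V} = (T_{\xi,\eta}\widetilde{\mathcal M}^{II} \cap {\mathcal H}_{\xi,\eta}) + (T_{\xi,\eta}\widetilde{\mathcal M}^{II} \cap {\mathcal V}_{\xi,\eta})$.

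The main point to be careful about is that the argument must not proceed by a dimension count: the model is allowed to be overparametrised, so the spanning vectors $\partial_i^{\mathcal H}(\xi,\eta)$ and $\partial_i^{\mathcal V}(\xi,\eta)$ need not be linearly independent and neither of the sums above need be direct within its block. The decomposition step must therefore rest solely on the triviality of ${\mathcal H}_{\xi,\eta} \cap {\mathcal V}_{\xi,\eta}$, which is insensitive to such linear dependence; this is the only place where the orthogonal-complement structure coming from the coarse graining is genuinely used.
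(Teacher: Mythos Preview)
Your proposal is correct and takes essentially the same approach as the paper: verify the memberships $\partial_i^{\mathcal H}(\xi,\eta)\in{\mathcal H}_{\xi,\eta}$ and $\partial_i^{\mathcal V}(\xi,\eta)\in{\mathcal V}_{\xi,\eta}$ via the normalisation identities, then combine $T_{\xi,\eta}\widetilde{\mathcal M}^{II}=W^{\mathcal H}+W^{\mathcal V}$ with ${\mathcal H}_{\xi,\eta}\cap{\mathcal V}_{\xi,\eta}=\{0\}$ to deduce all three equalities (you make this last step more explicit than the paper does, and your care about avoiding a dimension count is well placed). The one thing you omit is the straightforward verification of conditions (a) and (b) in (\ref{cylext}), namely ${\mathcal M}\subseteq\widetilde{\mathcal M}^{II}$ and $X_\ast({\mathcal M})=X_\ast(\widetilde{\mathcal M}^{II})$, which the paper checks explicitly before turning to cylindricality.
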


\begin{proof} We have to verify the properties (a), (b), and (c) in (\ref{cylext}). \\
(a) We have assumed that for each $\xi$ there is a $\eta = \eta(\xi)$ such that $p(z | x ; \xi) = q(z | x ; \eta(\xi))$. This implies that each distribution $p_\xi \in {\mathcal M}$
is also contained in $\widetilde{\mathcal M}^{II}$:
\[
 p(z ; \xi) \, = \, p(x ; \xi) p(z | x ; \xi) \, = \,  p(x ; \xi) q(z | x ; \eta(\xi)) \, = \, p(z ; \xi, \eta(\xi)).
\]
(b) Clearly, from (a) we obtain $X_\ast ({\mathcal M}) \subseteq X_\ast (\widetilde{\mathcal M}^{II})$. To prove the opposite inclusion, we consider a point
$p_{\xi,\eta} \in \widetilde{\mathcal M}^{II}$ and show that the point $p_\xi \in {\mathcal M}$ has the same $X_\ast$-projection:
\begin{eqnarray*}
     X_\ast \left( p_{\xi, \eta} \right) 
        & = & X_\ast \left( \sum_{x} \sum_{z \in \mathsf{Z}_x} p(x ; \xi) \, q(z| x ; \eta) \, \delta^z \right) \\
        & = & \sum_{x} \left( \sum_{z \in \mathsf{Z}_x} p(x ; \xi) \, q(z| x ; \eta) \right) \, \delta^x \\
        & = & \sum_{x} p(x ; \xi) \, \delta^x  \\
        & = & \sum_{x} \left( \sum_{z \in \mathsf{Z}_x} p(x ; \xi) \, q(z | x ; \eta (\xi)) \right) \, \delta^x \\
        & = & \sum_{x} \left( \sum_{z \in \mathsf{Z}_x} p(x ; \xi) \, p(z | x ; \xi) \right) \, \delta^x \\
        & = & \sum_{x} \left( \sum_{z \in \mathsf{Z}_x} p(z ; \xi) \right) \, \delta^x \; = \; X_\ast \left( \sum_z p(z ; \xi) \, \delta^z \right) \; = \; X_\ast \left( p_\xi \right). 
\end{eqnarray*}
(c) We have 
\[
    {\mathcal H}_{\xi, \eta} \, := \, 
    \left\{  \widetilde{U} = \sum_{x} U(x) \sum_{z \in \mathsf{Z}_x} q(z | x ; \eta) \, \delta^z   \; : \; \sum_x U(x) = 0 \right\} 
\]
with the orthogonal complement 
\[
     {\mathcal V}_{\xi,\eta} \, := \, 
     \left\{ \sum_{z} V(z) \, \delta^{z}  \; : \; \mbox{$\sum_{z \in \mathsf{Z}_x} V(z) \, = \, 0$ for all $x$} \right\} .
\] 
We first show that the horizontal vectors 
\begin{equation*}
 \partial^{\mathcal H}_i (\xi, \eta)   
    \; = \; \sum_{x} p(x; \xi) \,  \frac{\partial  \ln p(x; \cdot)}{\partial \xi_i}  (\xi) \sum_{z \in \mathsf{Z}_x}  q(z | x ; \eta ) \, \delta^z 
\end{equation*}
are contained in ${\mathcal H}_{\xi, \eta}$. To this end, 
we set $U(x) = p(x; \xi) \,  \frac{\partial  \ln p(x; \cdot)}{\partial \xi_i}  (\xi)$ and verify 
\begin{eqnarray*}
   \sum_x U(x) 
      & = & \sum_x p(x; \xi) \,  \frac{\partial  \ln p(x; \cdot)}{\partial \xi_i}  (\xi) \\
      & = & \sum_x \left.\frac{\partial  p(x; \cdot)}{\partial \xi_i}  (\xi) \; = \; \frac{\partial}{\partial \xi_i} \sum_x p(x; \cdot) \right|_{\xi} \\
      & = & 0. 
\end{eqnarray*}
Now we show that the vertical vectors
\begin{equation*}    
 \partial^{\mathcal V}_i (\xi, \eta)  
     \; = \; \sum_{x} \sum_{z \in \mathsf{Z}_x} p(x; \xi) \, q(z | x ; \eta ) \, \frac{\partial  \ln  q(z | x ; \cdot) }{\partial \eta_i}  (\eta) \, \delta^z 
\end{equation*}
are contained in ${\mathcal V}_{\xi,\eta}$. 
We set $V(z) := p(X(z); \xi) \, q(z | X(z) ; \eta ) \, \frac{\partial  \ln  q(z | X(z) ; \cdot) }{\partial \eta_i}  (\eta)$ and verify 
\begin{eqnarray*} 
\sum_z V(z) 
   & = & \sum_{x} \sum_{z \in \mathsf{Z}_x} p(x; \xi) \, q(z | x ; \eta ) \, \frac{\partial  \ln  q(z | x ; \cdot) }{\partial \eta_i}  (\eta) \\
   & = & \sum_{x} \sum_{z \in \mathsf{Z}_x} p(x; \xi) \, \frac{\partial   q(z | x ; \cdot) }{\partial \eta_i}  (\eta) \\
   & = &  \sum_{x} p(x; \xi) \left. \frac{\partial}{\partial \eta_i} \sum_{z \in \mathsf{Z}_x} q(z | x ; \cdot) \right|_{\eta} \\
   & = & 0. 
\end{eqnarray*}
In conclusion, we have 
\begin{eqnarray*}
  T_{\xi, \eta} \widetilde{\mathcal M}^{II} 
     & = & {\rm span} \left\{ \partial^{\mathcal H}_i(\xi, \eta) \, : \, i = 1,\dots, d \right\} + 
               {\rm span} \left\{ \partial^{\mathcal V}_i(\xi,\eta) \, : \, i = 1,\dots, d' \right\} \\
     & \subseteq &  \left( T_{\xi,\eta} \widetilde{\mathcal M}^{II} \cap {\mathcal H}_{\xi,\eta} \right) + 
                             \left( T_{\xi,\eta} \widetilde{\mathcal M}^{II} \cap {\mathcal V}_{\xi,\eta} \right) \\
     & \subseteq & T_{\xi, \eta} \widetilde{\mathcal M}^{II} ,                        
\end{eqnarray*}
which proves the equalities (\ref{eq1}), (\ref{eq2}), and (\ref{eq3}). 
\end{proof}

\section{Conclusions: A natural gradient perspective of the wake-sleep algorithm}  \label{conclu}
Information geometry provides two natural geometries associated with a learning system that has visible units $V$ and hidden units $H$. 
Typically, the system is given in terms of a model ${\mathcal M}$ of probability distributions of global states of the full system, 
${\mathcal P}_{V,H}$, but the objective function $f$ only depends on the probability distribution of the visible states, giving rise to 
a projected model ${\mathcal M}_V \subseteq {\mathcal P}_V$. Both geometric objects, ${\mathcal M}$ and ${\mathcal M}_V$, 
carry a natural geometry inherited from the respective ambient space. In Section \ref{bnn} we studied various locality properties 
of the natural gradient based on the first geometry, thereby assuming a factorisation of the elements of ${\mathcal M}$ according to a 
directed acyclic graph. These properties simplify the Fisher information matrix for ${\mathcal M}$ 
and allow us to apply the natural gradient method to deep networks. The second geometry, 
the geometry of ${\mathcal M}_V$, was studied in Section \ref{singularnatgrad} where we took a somewhat more general perspective.  
In what follows, we restate the general problem of comparing the two mentioned geometries 
within that perspective and summarise the corresponding results.   
\medskip

Consider a model ${\mathcal S}$ in the set ${\mathcal P}(\mathsf{X})$ of probability distributions on a finite 
set $\mathsf{X}$, that is ${\mathcal P}(\mathsf{X})$, and a smooth 
function $f: {\mathcal P}(\mathsf{X}) \to {\Bbb R}$. 
The task is to optimise $f$ on ${\mathcal S}$ in terms of the natural gradient ${\rm grad}^{\mathcal S} f$. 
With no further assumptions this can be a very difficult problem. Typically, however, ${\mathcal S}$ is obtained as 
the image of a simpler model ${\mathcal M}$ of probability distributions on a larger set $\mathsf{Z}$, 
${\mathcal P}(\mathsf{Z})$. 
More precisely, we consider a surjective map $X: \mathsf{Z} \to \mathsf{X}$, and the corresponding push-forward map 
$X_\ast : {\mathcal P}(\mathsf{Z}) \to {\mathcal P}(\mathsf{X})$ of probability measures. The model ${\mathcal M}_X$ is then nothing but the $X_\ast$-image 
of ${\mathcal M}$, that is ${\mathcal S} = {\mathcal M}_X = X_\ast({\mathcal M})$. Now, instead of optimising $f$ on ${\mathcal M}_X$, we can optimise 
$f \circ X_\ast$ on ${\mathcal M}$ and aim to simplify the problem by exploiting the structure of ${\mathcal M}$. This works to some extent. Even though the  
two problems are closely related,  
the corresponding gradient fields $dX_\ast \left( {\rm grad}^{\mathcal M} (f \circ X_\ast) \right)$ and ${\rm grad}^{{\mathcal M}_X} f$ 
typically differ from each other. Thus, the optimisation of $f$ on ${\mathcal M}_X$ based on the Fisher-Rao metric on 
${\mathcal P}(\mathsf{X})$, and the optimisation of $f \circ X_\ast$ on ${\mathcal M}$ based on the Fisher-Rao metric on ${\mathcal P}(\mathsf{Z})$ are 
not equivalent. We can try to improve the situation by replacing the Fisher-Rao metric on ${\mathcal M}$ and ${\mathcal M}_X$, respectively, 
by different Riemannian metrics. While this might be a reasonable approach for the simplification of the problem, 
from the information-geometric perspective, the Fisher-Rao metric 
is the most natural one, which is the reason for referring to the Fisher-Rao gradient as the 
{\em natural gradient\/}. This is directly linked to the invariance of gradients, as we have highlighted in this article. 
If we request invariance of the gradients for all coarse grainings $X: \mathsf{Z} \to \mathsf{X}$, 
all models ${\mathcal M} \subseteq {\mathcal P}(\mathsf{Z})$ from a particular class, 
and all functions $f: {\mathcal M}_X \to {\Bbb R}$, by Chentsov's classical characterisation theorem, we {\em have\/} to impose the Fisher-Rao metric on the individual models 
(see Theorem \ref{chentsovreform}). Even then, the invariance of gradients is satisfied only if the model 
is cylindrical in the sense of Definition \ref{cylindrical}. Given a model ${\mathcal M}$ that is not cylindrical, we have proposed cylindrical extensions 
$\widetilde{\mathcal M}$ which contain ${\mathcal M}$. The natural gradient of $f$ on ${\mathcal M}_X$
is then equivalent to the natural gradient of $f \circ X_\ast$ on such an extension $\widetilde{\mathcal M}$. \\

As an outlook, we want to touch upon the following two related problems:
\begin{enumerate}
\item Can we exploit the simplicity 
of the original model ${\mathcal M}$ in order to simplify the optimisation on $\widetilde{\mathcal M}$?
\item The original model ${\mathcal M}$ is associated with some network. 
What kind of network can we associate with the extended model $\widetilde{\mathcal M}$?
\end{enumerate}  
We want to briefly address these problems within the context of Section \ref{bnn}, where $\mathsf{X} = \mathsf{X}_{V}$, 
$\mathsf{Z} = \mathsf{X}_{V} \times \mathsf{X}_H$, and $X = X_V: (v, h) \mapsto v$.   
As the cylindrical extension $\widetilde{\mathcal M}^{II}$ suggests,  
it can be associated with the addition of a recognition model ${\mathcal L}_{H|V}$, assuming that ${\mathcal M}$ is a generative model. 
If both models are parametrised by (\ref{parametris}) and (\ref{facrec}), respectively, then the corresponding Fisher information matrices simplify as 
stated in Theorem \ref{localform}. They both have a block structure where each block corresponds to one unit. Outside of these blocks, the matrices are filled with zeros. 
Being more precise, we consider all parameters that correspond to unit $r$, the parameters $\xi_r = (\xi_{(r;1)}, \dots, \xi_{(r;d_r)})$ of the generative model ${\mathcal M}$, 
and the parameters $\eta_r = (\eta_{(r; 1)} , \dots , \eta_{(r; d_r')})$ of the recognition model ${\mathcal L}_{H|V}$. With (\ref{diffcompl}) we then obtain  
\begin{equation} \label{diffcompl2}
    g^{\mathcal H}_{(r;i) (s; j)} (\xi)  \; = \;  
    g_{(r;i) (s; j)}(\xi) - \sum_{t,u} \sum_{(t;k),(u;l)} \frac{\partial \eta_{(t;k)}}{\partial \xi_{(r; i)}} (\xi) \, \frac{\partial \eta_{(u;l)}}{\partial \xi_{(s; j)}} (\xi) \, 
    g^{\mathcal V}_{(t; k)(u; l)}(\xi, \eta(\xi)). 
\end{equation}
We know that $g_{(r;i) (s; j)}(\xi) = 0$ if $r \not= s$ and $g^{\mathcal V}_{(t; k)(u; l)}(\xi, \eta(\xi)) = 0$ if $t \not= u$. With the latter property, the 
sum on the RHS of (\ref{diffcompl2}) reduces to 
\begin{equation} \label{diffcompl3}
   \sum_t \sum_{(t;k),(t;l)} \frac{\partial \eta_{(t;k)}}{\partial \xi_{(r; i)}} (\xi) \, \frac{\partial \eta_{(t;l)}}{\partial \xi_{(s; j)}} (\xi) \, g^{\mathcal V}_{(t; k)(t; l)}(\xi, \eta(\xi)). 
\end{equation}
If all partial derivatives $\partial \eta_{(t;k)} / \partial \xi_{(r; i)}(\xi)$ are local in the sense that they vanish whenever $t \not= r$, 
then the matrix $G^{\mathcal H}(\xi)$ inherits the block structure of the matrices $G(\xi)$ and $G^{\mathcal V}(\xi, \eta(\xi))$. 
However, this is typically not the case and represents an additional coupling between the generative model and the recognition model. 
Without that coupling, the partial derivatives in (\ref{diffcompl3}) will ``overwrite'' the block structure of the matrix $G(\xi)$, leading to a 
non-local matrix $G^{\mathcal H}(\xi)$ with $g^{\mathcal H}_{(r;i) (s; j)} (\xi) \not= 0$ even if $r \not= s$.       
The degree of non-locality will depend on the specific properties of the partial derivatives $\partial \eta_{(t;k)} / \partial \xi_{(r; i)} (\xi)$. 
\medskip

We conclude this article by revisiting the wake-sleep algorithm of Section \ref{wsa}.  
Let us assume that (\ref{diffcompl2}) and (\ref{diffcompl3}) imply a sufficient simplification so that a 
natural gradient step in $\widetilde{\mathcal M}^{II}$ can be made. 
This will update the generation parameters, say from $\xi$ to $\xi + \Delta \xi$, and leave the 
recognition parameters $\eta$ unchanged. Such a an update corresponds to a natural gradient version of the wake step. 
The resulting point $(\xi + \Delta \xi, \eta)$ in $\widetilde{\mathcal M}^{II}$ will typically be outside of ${\mathcal M}$. 
As the simplification through (\ref{diffcompl2}) and (\ref{diffcompl3}) only holds on ${\mathcal M}$, we have to update the recognition parameters, say from 
$\eta$ to $\eta + \Delta \eta$, so that the resulting point $(\xi + \Delta \xi , \eta + \Delta \eta)$ is again in ${\mathcal M}$. This sleep step 
will ensure that the next update of the generation parameters benefits from the simplicity of the Fisher information matrix. The situation is illustrated in Figure \ref{fig:wakesleepillu}.  
\begin{figure}[h]        
  \centering
       \includegraphics[width=8cm]{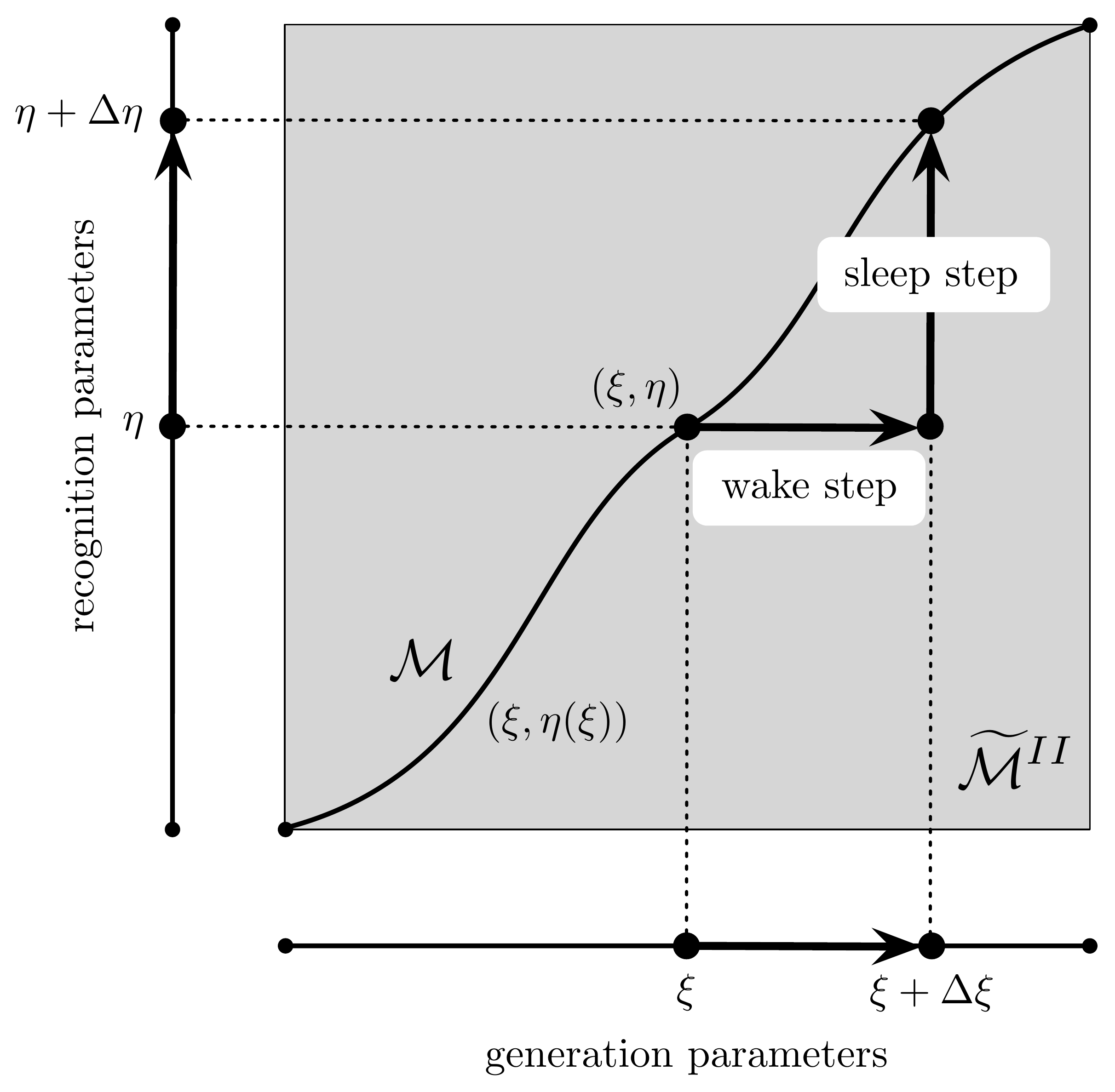}   
  \caption{Illustration of the generalised wake-sleep algorithm, 
  taking place on the cylindrical extension $\widetilde{\mathcal M}^{II}$ of ${\mathcal M}$.}  
  \label{fig:wakesleepillu} 
\end{figure}
Note that it is irrelevant how we get back to ${\mathcal M}$ within the 
sleep step, as far as we do not change the generation parameters. Also, it might be required to 
apply several sleep steps until we get back to ${\mathcal M}$, which highlights the asymmetry of time scales of the two phases. 
This asymmetric version has been outlined and discussed in the context of the $em$-algorithm 
by \cite{IAN98}.
The overall wake-sleep step will typically not follow the gradient of an objective function on $\widetilde{\mathcal M}^{II}$. However, this is not the aim here. The prime process is the process in $\xi$ which parametrises 
${\mathcal M}_V$. Effectively, the outlined version of the wake-sleep algorithm will follow the natural gradient of the objective function with respect to the geometry of 
${\mathcal M}_V$. The natural wake-sleep algorithm with respect to the geometry of ${\mathcal M}$ has been recently studies by \cite{VVMA20}. 
\medskip  

In Section \ref{wsa} we introduced the recognition model as an auxiliary model for sampling, which was required for the evaluation of the gradient with respect 
to $\xi$. This work reveals another role of the recognition model in the context of the natural gradient method. 
It allows us to define an extension of the original model ${\mathcal M}$ 
so that we can effectively apply the natural gradient method on ${\mathcal M}_V$ within the context of deep learning.  
The presented results suggest criteria for the coupling between the generative model and recognition model that would ensure the locality 
of the natural gradient on this projected model.      

\section*{Acknowledgement}
The author is grateful for valuable discussions with Luigi Malag{\`o}, Riccardo Volpi, and Csongor-Huba V{\'a}rady.

\section{Appendix: Moore-Penrose inverse and gradients}
We consider a parametrised model ${\mathcal M}$ with a parametrisation 
$\xi: {\Bbb R}^d \supseteq U \to V \subseteq {\mathcal M}$, $\xi \mapsto p_\xi$. 
For a non-singular point $p_\xi \in {\mathcal M}$, we assume that the tangent space in $p_\xi$, $T_\xi {\mathcal M}$, is spanned by the vectors $\partial_i := \frac{\partial}{\partial \xi_i}$, 
$i = 1,\dots, d$. Note that we do not assume that these vectors are independent. Now consider a function $f: {\mathcal M} \to {\Bbb R}$ that is smooth in $p_\xi$, and its differential 
\[
     {df}_{\xi} : \; T_\xi {\mathcal M} \; \to \; {\Bbb R}, \qquad X \; \mapsto \; {df}_\xi(X) = \frac{\partial f}{\partial X} (\xi). 
\]   
This is a linear form on $T_\xi {\mathcal M}$. With a non-degenerate bilinear form $g_\xi$ on $T_\xi {\mathcal M}$ we can identify ${df}_\xi$ with a vector 
${\rm grad}_\xi f \in T_\xi {\mathcal M}$, which points in the direction of maximal infinitesimal increase of $f$ in $\xi$. It is uniquely characterised by the equation 
\begin{equation} \label{bedi}
                         g_\xi ({\rm grad}_\xi f , X) \, = \, {df}_\xi(X), \qquad X \in T_\xi {\mathcal M} .
\end{equation}
Now, we express the gradient in local coordinates. First, it has a representation 
\begin{equation} \label{darste}
                        {\rm grad}_\xi f \, = \, \sum_{i = 1}^d   x^i \, \partial_i .
\end{equation}
Note that this representation of the gradient in terms of the coefficients $x = (x^1,\dots,x^d)$ is not necessarily unique (due to the fact that the vectors $\partial_i$, $i = 1,\dots,d$, need not be independent). 
We insert the RHS of (\ref{darste}) and $X = \partial_j$ into (\ref{bedi}) and obtain
\begin{equation}
     \sum_{i = 1}^d x^i  \, g_{ij}(\xi) \; = \; \frac{\partial f}{\partial \xi_j} (\xi), \qquad j = 1,\dots, d ,
\end{equation}
or, in matrix notation, 
\begin{equation} \label{gnabla}
        G(\xi) \, x \; = \; \nabla_\xi f .
\end{equation}
Any coefficient vector $x \in {\Bbb R}^d$ will provide an equally valid representation of the gradient in terms of the tangent vectors $\partial_i$. 
Furthermore, we know that there is 
at least one solution $x$ that represents the gradient. In the case where $G(\xi)$ is of maximal rank this solution is unique and 
we can simply apply the inverse of $G(\xi)$ in order to obtain the coefficients of the gradient as $x = G^{-1}(\xi) \nabla_\xi f$. 
This is the usual case when we have a local (diffeomorphic) coordinate system around the point $p_\xi$. Even though we interpret a parametrisation of a model as a 
coordinate system, the number of parameters often exceeds the dimension of the model. In these cases, the matrix $G(\xi)$ will not be of maximal rank 
so that we have a non-trivial kernel ${\rm ker} \, G(\xi)$. We can always add to a solution $x$ of (\ref{gnabla}) a vector $y$ from that kernel and obtain another solution $x + y$.    
The affine space $A = x + {\rm ker} \, G(\xi) \subseteq {\Bbb R}^d$ of solutions describes all possible representations of the gradient in terms of 
$\partial_1,\dots, \partial_d$. They are all equally adequate for describing a learning process that takes place in ${\mathcal M}$. 
However, from the perspective of linear algebra there is a natural choice, the element in the affine 
solution space $A$ that is orthogonal to ${\rm ker} \, G(\xi)$ (with respect to the canonical inner product in ${\Bbb R}^d$). This defines 
the Moore-Penrose inverse $G^+(\xi)$, also called pseudoinverse, which has been previously proposed by several authors (see, e.g., \cite{Tho14}). 
In this paper, we were concerned with a number of simplifications of the natural gradient. 
One simplification was expressed in terms of a block diagonal structure of the Fisher information matrix. For the representation of 
the natural gradient, we evaluated the pseudoinverse of that block diagonal matrix based on the following simple observation 
(see, e.g., \cite{CGMSR15} for more general results related to the pseudoinverse of a block matrix):
\begin{equation}
    {\left(
       \begin{array}{ccc}
          G_{1} &            & 0 \\
                    & \ddots &  \\
           0       &            & G_{N}
       \end{array}
    \right)}^+ 
\; = \; 
    \left(
       \begin{array}{ccc}
           G_{1}^+ &.          & 0 \\
                         & \ddots &    \\
                      0 &        & G_{N}^+
       \end{array}
    \right). 
\end{equation}
\medskip

How natural is the Moore-Penrose inverse? There are two perspectives here. 
On the one hand, $G^+(\xi) \, \nabla_\xi f$ {\em is\/} natural in the sense that it represents an object, ${\rm grad}_\xi f$, that is independent of the parametrisation.   
On the other hand, the inner product used for the definition of $G^+(\xi)$ is the canonical inner product 
in ${\Bbb R}^d$ which does not have to be at all related to the 
metric $g_\xi$. 
In this article, we have chosen the Moore-Penrose inverse as one possible extension of the usual inverse to overparametrised models which has been previoulsy proposed 
by several authors (see, e.g., \cite{Tho14}). However, as outlined in this section, there are also other possibilities for such an extension. We have some  
flexibility here which might allow us to further simplify the representation of the natural gradient in terms of a particular choice of the parametrsiation. 

\vskip 0.2in
\bibliography{locality} \bibliographystyle{plainnat}
\end{document}